\definecolor{darkgreen}{rgb}{0.0, 0.5, 0.0}
\theoremstyle{plain}
\newtheorem{theorem}{Theorem}[section]
\newtheorem{proposition}[theorem]{Proposition}
\newtheorem{corollary}[theorem]{Corollary}
\theoremstyle{definition}
\newtheorem{definition}[theorem]{Definition}
\newtheorem{example}[theorem]{Example}
\theoremstyle{remark}
\newtheorem{remark}[theorem]{Remark}
\renewcommand{\d}{\operatorname{d}\!}
\newcommand{\diag}{\operatorname{diag}}
\newcommand{\erf}{\operatorname{erf}}
\newcommand{\GP}{\mathcal{GP}}
\renewcommand{\H}{\operatorname{H}}
\newcommand{\I}{\operatorname{I}}
\newcommand{\N}{\mathcal{N}}
\newcommand{\R}{\mathbb{R}}
\newcommand{\softmax}{\operatorname{softmax}}
\newcommand{\softplus}{\operatorname{softplus}}
\newcommand{\Z}{\mathbb{Z}}
\newcommand{\mycos}[1]{cos((#1)*57.29577950)}
\newcommand{\mysin}[1]{sin((#1)*57.29577950)}
\newcommand*{\equalsign}{=}
\title{Future-aware Safe Active Learning of Time Varying Systems using Gaussian Processes}
\author{\name Markus Lange-Hegermann \email markus.lange-hegermann@th-owl.de \\
      \addr Department of Electrical Engineering and 
  Institute Industrial IT - inIT\\
  OWL University of Applied Sciences and Arts \\
  Lemgo, Germany \\
      \AND
      \name Christoph Zimmer \email Christoph.Zimmer@de.bosch.com \\
      \addr Bosch Center for Artificial Intelligence \\
  Renningen, Germany}
\begin{document}

\maketitle

\begin{abstract}
  Experimental exploration of high-cost systems with safety constraints, common in engineering applications, is a challenging endeavor.
  Data-driven models offer a promising solution, but acquiring the requisite data remains expensive and is potentially unsafe.
  Safe active learning techniques prove essential, enabling the learning of high-quality models with minimal expensive data points and high safety.
  This paper introduces a safe active learning framework tailored for time-varying systems, addressing drift, seasonal changes, and complexities due to dynamic behavior.
  The proposed Time-aware Integrated Mean Squared Prediction Error (T-IMSPE) method minimizes posterior variance over current and future states, optimizing information gathering also in the time domain.
  Empirical results highlight T-IMSPE's advantages in model quality through synthetic and real-world examples.
  State of the art Gaussian processes are compatible with T-IMSPE.
  Our theoretical contributions include a clear delineation which Gaussian process kernels, domains, and weighting measures are suitable for T-IMSPE and even beyond for its non-time aware predecessor IMSPE.
\end{abstract}

\section{Introduction}

Many systems pose challenges due to their high costs for experimentation, whether in actual implementation or in numerical simulation of the underlying physics.
This concern is particularly pronounced in engineering applications.
Data-driven models emerged as a valuable solution for swiftly simulating such systems.
However, acquiring the necessary data for these models can still be prohibitively expensive, e.g.\ in machine calibration \citep{badra2020combustion}, biology \citep{sverchkov2017review}, numerical simulation \citep{al2022characterization}, prototype building \citep{torres2016approach}, or process optimization \citep{hernandez2022designing}.
To address this situation where measurements can be taken at almost arbitrary positions in a continuous domain, \emph{active learning} techniques become crucial, allowing the learning of a high-quality model using a minimal amount of expensive data points.
This involves setting inputs or calibration parameters in a manner that maximizes the information gathered from  experiments.

Moreover, many real world systems are subject to \emph{safety constraints}, such as critical temperatures, pressures, or restricted sensor working conditions, which must be adhered to during measurements.
Often, these safety constraints are unknown and need to be modeled, leading to the application of safe active learning and related methodologies \citep{berkenkamp2023bayesian,cowen2022samba,sui2018stagewise,schreiter2015safe}.
While active learning is usually superior to static Design of Experiments (DoE) approaches \citep{smith1918standard, pukelsheim2006optimal}, static DoE is not even applicable in the face of learnable safety constraints on outputs.

Real-world scenarios frequently involve \emph{time-varying systems}, presenting additional complexities.
Examples include systems affected by drift \citep{talluru2014calibration}, possibly caused by continuously polluting sensors \citep{thewes2015advanced};
systems influenced by periodic trends like seasonal variations \citep{bacastow1985seasonal,scherliess1999radar};
or dynamic systems, where current decisions influence future behavior, such as is reflected in NX (Non-linear eXogenous) model structures \citep{chen2015stabilization,zimmer2018safe};
Actively learning such systems proves challenging, as parts of the system may change or certain operation points might be currently unreachable.
Despite these challenges, there is a substantial demand for learning such systems.
For example production systems with periodic changes may require accurate model-based anomaly detection, drift in sensors needs to be counteracted, or engines needs to be modeled for a cost-efficient calibration.
Dynamic systems necessitate an understanding of their dynamic behavior.


\pgfmathdeclarefunction{v}{2}{%
  \pgfmathparse{0.8-exp(-#1)*sin(57.29577950*#1)-0.025*#1+#2/10*0.01*(#1+6)^2}%
}
\pgfmathdeclarefunction{vred}{4}{%
  \pgfmathparse{1.1*exp(-1/2*((#1-#3)^2+(#2-#4)^2/25))*(v(#3,#4)-0.3+0.01*#4)}%
}
\newcommand{\myxmax}{10}
\newcommand{\myxscale}{0.495cm}
\newcommand{\myyscale}{1.1cm}
\newcommand{\yaxistopa}{3.2}
\newcommand{\yaxistopb}{3.2}
\newcommand{\pent}{0}
\newcommand{\pentp}{0.4}
\newcommand{\pvm}{3.1}
\newcommand{\pv}{3.5}
\newcommand{\pvp}{3.9}
\newcommand{\pvtm}{6.6}
\newcommand{\pvt}{7}
\newcommand{\pvtp}{7.4}

\tikzset{
    old/.style={ 
        draw=blue,
        line width=1pt,
    },
    new/.style={ 
        draw=green!50!black,
        line width=1pt,
        dashed,
    },
    reduce/.style={ 
        draw=green!50!black,
        line width=0.5pt,
    },
    oldfuture/.style={ 
        draw=blue!50,
        line width=1pt,
    },
    newfuture/.style={ 
        draw=green!100!black,
        line width=1pt,
        dashed,
    },
    reducefuture/.style={ 
        draw=green!100!black,
        line width=0.5pt,
    },
    measurement/.style={ 
        color=red, 
        mark=*,
        only marks
    },
}

\pgfmathdeclarefunction{vpvm}{0}{\pgfmathparse{v(\pvm,0)}}
\pgfmathdeclarefunction{vpvtm}{0}{\pgfmathparse{v(\pvtm,0)}}
\pgfmathdeclarefunction{vpvredm}{0}{\pgfmathparse{v(\pvm,0)-vred(\pv,0,\pvm,0)}}
\pgfmathdeclarefunction{vpvtredm}{0}{\pgfmathparse{v(\pvtm,0)-vred(\pvt,0,\pvtm,0)}}

\pgfmathdeclarefunction{vpent}{0}{\pgfmathparse{v(\pent,0)}}
\pgfmathdeclarefunction{vpv}{0}{\pgfmathparse{v(\pv,0)}}
\pgfmathdeclarefunction{vpvt}{0}{\pgfmathparse{v(\pvt,0)}}
\pgfmathdeclarefunction{vpentred}{0}{\pgfmathparse{v(\pent,0)-vred(\pent,0,\pent,0)}}
\pgfmathdeclarefunction{vpvred}{0}{\pgfmathparse{v(\pv,0)-vred(\pv,0,\pv,0)}}
\pgfmathdeclarefunction{vpvtred}{0}{\pgfmathparse{v(\pvt,0)-vred(\pvt,0,\pvt,0)}}

\pgfmathdeclarefunction{vpentp}{0}{\pgfmathparse{v(\pentp,0)}}
\pgfmathdeclarefunction{vpvp}{0}{\pgfmathparse{v(\pvp,0)}}
\pgfmathdeclarefunction{vpvtp}{0}{\pgfmathparse{v(\pvtp,0)}}
\pgfmathdeclarefunction{vpentredp}{0}{\pgfmathparse{v(\pentp,0)-vred(\pent,0,\pentp,0)}}
\pgfmathdeclarefunction{vpvredp}{0}{\pgfmathparse{v(\pvp,0)-vred(\pv,0,\pvp,0)}}
\pgfmathdeclarefunction{vpvtredp}{0}{\pgfmathparse{v(\pvtp,0)-vred(\pvt,0,\pvtp,0)}}

\pgfmathdeclarefunction{vpvfutm}{0}{\pgfmathparse{v(\pvm,10)}}
\pgfmathdeclarefunction{vpvtfutm}{0}{\pgfmathparse{v(\pvtm,10)}}
\pgfmathdeclarefunction{vpvredfutm}{0}{\pgfmathparse{v(\pvm,10)-vred(\pv,0,\pvm,10)}}
\pgfmathdeclarefunction{vpvtredfutm}{0}{\pgfmathparse{v(\pvtm,10)-vred(\pvt,0,\pvtm,10)}}

\pgfmathdeclarefunction{vpentfut}{0}{\pgfmathparse{v(\pent,10)}}
\pgfmathdeclarefunction{vpvfut}{0}{\pgfmathparse{v(\pv,10)}}
\pgfmathdeclarefunction{vpvtfut}{0}{\pgfmathparse{v(\pvt,10)}}
\pgfmathdeclarefunction{vpentredfut}{0}{\pgfmathparse{v(\pent,10)-vred(\pent,0,\pent,10)}}
\pgfmathdeclarefunction{vpvredfut}{0}{\pgfmathparse{v(\pv,10)-vred(\pv,0,\pv,10)}}
\pgfmathdeclarefunction{vpvtredfut}{0}{\pgfmathparse{v(\pvt,10)-vred(\pvt,0,\pvt,10)}}

\pgfmathdeclarefunction{vpentfutp}{0}{\pgfmathparse{v(\pentp,10)}}
\pgfmathdeclarefunction{vpvfutp}{0}{\pgfmathparse{v(\pvp,10)}}
\pgfmathdeclarefunction{vpvtfutp}{0}{\pgfmathparse{v(\pvtp,10)}}
\pgfmathdeclarefunction{vpentredfutp}{0}{\pgfmathparse{v(\pentp,10)-vred(\pent,0,\pentp,10)}}
\pgfmathdeclarefunction{vpvredfutp}{0}{\pgfmathparse{v(\pvp,10)-vred(\pv,0,\pvp,10)}}
\pgfmathdeclarefunction{vpvtredfutp}{0}{\pgfmathparse{v(\pvtp,10)-vred(\pvt,0,\pvtp,10)}}

\newcommand{\futurearrow}{
    \node[signal, draw, minimum height=1.3cm, minimum width=0.2cm, signal from=south, signal to=north,shade, top color=black!10, bottom color=black!75] at (axis cs:7.45,1.40) (future_arrow) {};
    \node[align=left,text=black!50] at ($(future_arrow.north east)+(11.5,0.0)$) {\footnotesize future:\\[-0.2em]\footnotesize high\\[-0.2em]\footnotesize variance};
    \node[align=left,text=black!100] at ($(future_arrow.south east)+(11.5,25.0)$) {\footnotesize present:\\[-0.2em]\footnotesize low\\[-0.2em]\footnotesize variance};
}

\pgfplotsset{
every axis legend/.append style={ at={(1.75,0.95)}, anchor=north east,legend columns = 1}}

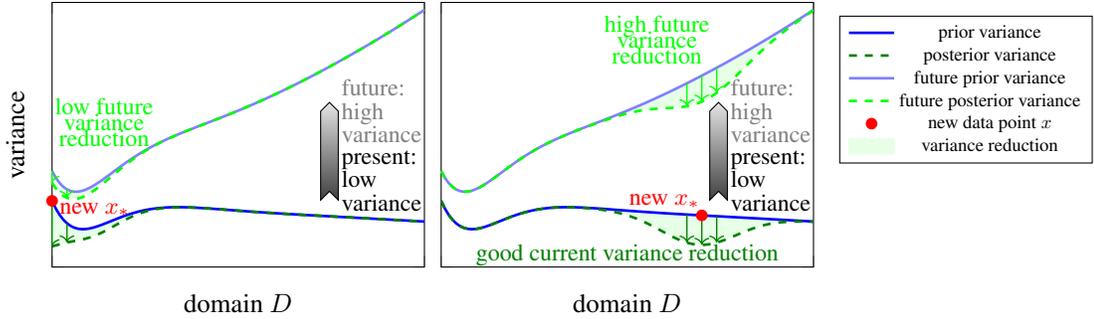
\begin{figure*}[t]
  \begin{subfigure}[b]{0.38\textwidth}
    \begin{tikzpicture}
       \begin{axis}[
        y=\myyscale,x=\myxscale,
   		xmin=-0.00001, xmax=\myxmax,
   		ymin=0, ymax=\yaxistopa,
   		xtick distance=10, ytick distance=10,
        xticklabel=\empty,yticklabel=\empty,
        xlabel={domain $D$},ylabel={variance},
        xlabel near ticks, ylabel near ticks] 
      \addplot [domain=0:10, samples=100, old,name path=vv]{v(x,0)};
      \addplot [domain=0:10, samples=100, new,name path=vvred]{v(x,0)-vred(\pent,0,x,0)};
      \addplot [domain=0:10,samples=100,thick,oldfuture,name path=vvfuture]{v(x,10)};
      \addplot [domain=0:10,samples=100,thick,newfuture,name path=vvredfuture]{v(x,10)-vred(\pent,0,x,10)};
      \addplot[green!10] fill between[of=vv and vvred, soft clip={domain=0:10}];
      \addplot[green!10] fill between[of=vvfuture and vvredfuture, soft clip={domain=0:10}];
      \addplot[measurement] coordinates {(\pent,vpent)};
      \draw[reduce,->](axis cs:\pent,vpent)--(axis cs:\pent,vpentred);
      \draw[reduce,->](axis cs:\pentp,vpentp)--(axis cs:\pentp,vpentredp);
      \draw[reducefuture,->](axis cs:\pent,vpentfut)--(axis cs:\pent,vpentredfut);
      \draw[reducefuture,->](axis cs:\pentp,vpentfutp)--(axis cs:\pentp,vpentredfutp);
      \node[align=center,color=green] at (axis cs:1.4,1.75) {\footnotesize low future\\[-0.5em] \footnotesize variance\\[-0.5em] \footnotesize reduction};
      \node[align=center,color=red] at (axis cs:1.15,0.7) {\footnotesize new $x_*$};
      \futurearrow
    \end{axis}
    \end{tikzpicture}
    \raggedleft
    \caption{Entropy, measure high current variance}
  \end{subfigure}
  \hspace{-4.5em}
  \begin{subfigure}[b]{0.67\textwidth}
    \begin{tikzpicture}
       \begin{axis}[
        legend style={nodes={scale=0.7, transform shape}},
        y=\myyscale,x=\myxscale,
   		xmin=-0.00001, xmax=\myxmax,
   		ymin=0, ymax=\yaxistopa,
   		xtick distance=10, ytick distance=10,
        xticklabel=\empty,yticklabel=\empty,
        xlabel={domain $D$},ylabel={},
        xlabel near ticks, ylabel near ticks]
      \addplot [domain=0:10, samples=100, thick,old,name path=vv]{v(x,0)};
      \addplot [domain=0:10, samples=100, thick,new,name path=vvred]{v(x,0)-vred(\pvt,0,x,0)};
      \addplot [domain=0:10,samples=100,thick,oldfuture,name path=vvfuture]{v(x,10)};
      \addplot [domain=0:10,samples=100,thick,newfuture,name path=vvredfuture]{v(x,10)-vred(\pvt,0,x,10)};
      \addplot[measurement] coordinates {(\pvt,vpvt)};
      \addplot[green!10] fill between[of=vv and vvred, soft clip={domain=0:10}];
      \addplot[green!10] fill between[of=vvfuture and vvredfuture, soft clip={domain=0:10}];
      \draw[reduce,->](axis cs:\pvtm,vpvtm)--(axis cs:\pvtm,vpvtredm);
      \draw[reduce,->](axis cs:\pvt,vpvt)--(axis cs:\pvt,vpvtred);
      \draw[reduce,->](axis cs:\pvtp,vpvtp)--(axis cs:\pvtp,vpvtredp);
      \draw[reducefuture,->](axis cs:\pvtm,vpvtfutm)--(axis cs:\pvtm,vpvtredfutm);
      \draw[reducefuture,->](axis cs:\pvt,vpvtfut)--(axis cs:\pvt,vpvtredfut);
      \draw[reducefuture,->](axis cs:\pvtp,vpvtfutp)--(axis cs:\pvtp,vpvtredfutp);
      \node[align=center,color=green] at (axis cs:5.8,2.75) {\footnotesize high future\\[-0.5em] \footnotesize variance\\[-0.5em] \footnotesize reduction};
      \node[align=center,color=green!50!black] at (axis cs:5.0,0.14) {\footnotesize good current variance reduction};
      \node[align=center,color=red] at (axis cs:6.0,0.80) {\footnotesize new $x_*$};
      \futurearrow
      \addlegendentry{prior variance}
      \addlegendentry{posterior variance}
      \addlegendentry{future prior variance}
      \addlegendentry{future posterior variance}
      \addlegendentry{new data point $x$}
      \addlegendentry{variance reduction}
    \end{axis}
    \end{tikzpicture}
    \raggedleft
    \caption{T-IMSPE (ours), reduce future variance\phantom{a}}
  \end{subfigure}
  \caption{%
    This illustrative sketch showcases the efficacy of our acquisition function T-IMSPE in variance reduction on a domain $D$ in present (dark color) and future (light color).
    The darker color at the bottom exhibits the transition from the prior blue line to the posterior green dashed line when conditioning on the red data point $x$ at present time.
    The lighter colored top delineates the induced variance reduction in a future time step.
    The left subplot (a) highlights the suboptimal variance reduction when employing the entropy acquisition function: it leads to a measurement at the boundary and to a very low reduction of the future variance.
    The right subplot (b) reveals the capability of T-IMSPE to outperform entropy in reducing the average variance at current and future time steps.
  }
  \label{figureone}
\end{figure*}

This paper introduces a safe active learning framework tailored for time-varying systems.
The approach involves measuring at safe positions where the maximum information can be collected.
Notably, we focus on also \emph{gathering information about future scenarios}, rather than solely the current system state.

Technically, we start from the acquisition function IMSPE (Integrated Mean Squared Prediction Error), also known under various other names such as ALC (Active Learning Cohn) or IMSE (Integrated Mean Squared Error) \citep{sacks1989designs}.
This acquisition function chooses measurement points that minimize the posterior prediction variances, integrated over the entire relevant domain.
We extend IMSPE to the acquisition function Time-aware Integrated Mean Squared Prediction Error (T-IMSPE), which minimizes the posterior variance over \emph{current and future states} of the systems.
Surprisingly, the necessary integration in both IMSPE and T-IMSPE can be computed in closed form for many Gaussian process models, in particular the ones that are state of the art in active learning scenarios.

Our main contributions are:
\begin{itemize}
	\item We introduce the novel T-IMSPE acquisition function for safe active learning of time-varying systems, see Figure~\ref{figureone}.
	We compute the defining integral in closed form, improving computational complexity and accuracy.
	\item We demonstrate the advantages of T-IMSPE in terms of model quality in a real world example about engine calibration for dynamic driving and two synthetic examples about drift and seasonal changes, see Section~\ref{section_experiments}.
	\item (T-)IMSPE involves the computation of an integral. We consider computability of (T-)IMSPE in closed form via antiderivatives and provide a cookbook of usage for various Gaussian processes kernels and domains to be used in (T-)IMSPE, see Section~\ref{section_timspe}.
	\item We distinctly delineate the theoretical scope of closed-form integrals and show the boundaries of practical supremacy of T-IMSPE in various delineating scenarios in Appendix~\ref{appendix_baseline}.
\end{itemize}

\section{Background}

Active learning applications typically use Gaussian process (GP) models due to their Bayesian capabilities in dealing with few data points and their reliable uncertainty estimates through variances.

\subsection{Gaussian processes (GPs)}

A \emph{Gaussian process (GP)} $g=\GP(m,k)$ defines a probability distribution on the evaluations of functions $D\to\R$ where the domain $D\subseteq\R^d\cong\R^{1\times d}$ such that function values $g(x_1),\ldots,g(x_n)$
at points $x_1,\ldots,x_n\in D$ are jointly (multivariate) Gaussian.
Such a GP $g=\GP(m,k)$ is specified by a mean function
$m:D\to\R:x\mapsto E(g(x))$
and a positive semidefinite covariance function
\begin{align*}
	k: D^2 \rightarrow \R: (x,w) \mapsto E\left((g(x)-m(x))(g(w)-m(w))\right)\mbox{.}
\end{align*}
Our experiments use the kind-of-default squared exponential covariance function $k_{SE}:(x,x')\mapsto\exp\left(-\frac{||x-x'||_2^2}{2}\right)$.
Any finite list of evaluations of $g$ at $x=[x_1,\ldots,x_n]$ follows the multivariate Gaussian distribution $g(x)\sim\mathcal{N}(m(x),k(x,x))$ for $g(x),m(x)\in\R^n$ and $k(x,x)\in\R^{n\times n}$ with $g(x)_i=g(x_i)$, $m(x)_i=m(x_i)$, and $k(x,x)_{i,j}=k(x_i,x_j)$.
Likelihoods and posteriors can now be computed via linear algebra from this multivariate Gaussian  \citep{RW}.
We denote the posterior covariance function of a GP conditioned on a dataset $(x,y)\in (D\times\R)^n$ by $k(-,-|x)$ and we define $k(x)=k(x,x)$ and $k(x,w)_{i,j}=k(x_i,w_j)$ for $w=[w_1,\ldots,w_m]$.
The posterior covariance is independent of the output data $y$, which is used in active learning to plan the input $x$.

GPs are used to model dynamical systems in various ways.
This paper uses NX structures in GP models \citep{chen2015stabilization,crespi2019dynamic}.
Additionally, GPs have been used to model state space systems \citep{frigola2014variational,eleftheriadis2017identification}, linear differential equations \citep{besginow2022constraining,harkonen2023gaussian}, or periodic dynamics \citep{klenske2015gaussian}.

\subsection{Time varying systems}

We are motivated by learning certain time varying systems.
For example, we consider systems with a behavior $y$ depending on inputs $x\in D$ in addition to drift or other time dependent changes e.g.\ in daily or yearly cycles.
Such behavior $f$ is often modelled by including the time input $t$ amongst the model inputs.
\begin{align}\label{eq_drift}
	y=f(x,t).
\end{align}
In these systems, the input $x$ can be chosen arbitrarily in an active learning approach, whereas the time $t$ is predetermined.

Another imporant kind of time varying systems is behavior $y$ where the current behavior depends on current and previous inputs.
A typical example is the modeling in the calibration of engines, whose temperature not only depends on current engine speed and engine load, but on previous values of speed and load as well, e.g.\ in case of braking or speeding up.
Other applications lie in calibration of engineering machinery like power turbines or yield prediction depending on fertilization in agriculture.
Such behavior is often modelled by Nonlinear eXogenous (NX) structures, where in total $\ell$ inputs are are used, often the current input $x_t$ and the $\ell-1$ previous inputs $x_{t-1},x_{t-2},\ldots,x_{t-\ell+1}$:
\begin{align}\label{eq_nx}
	y=f(x_t,x_{t-1},x_{t-2},\ldots,x_{t-\ell+1}).
\end{align}
In such systems, the input $x_t$ can be chosen arbitrarily, whereas the inputs $x_{t-1},x_{t-2},\ldots,x_{t-\ell+1}$ are predetermined from previous choices.

In contrast, in dynamical systems, the combinations of state action cannot be chosen freely, as the system has to evolve to reach a certain state $x_t$.
In our time varying systems, we have complete freedom to choose parts of the inputs, i.e.\ $x$ in \eqref{eq_drift} respectively $x_t$ in \eqref{eq_nx}, and the influence of the time comes from uncontrollable conditions respectively a memory of the system that influences the system output.

Optimizing such systems \eqref{eq_drift} or \eqref{eq_nx} with Bayesian optimization is common, see e.g.\ \citep{brunzema2022on}.
The goal of this paper is the construction of a novel acquisition function for safe active learning in such dynamic systems, including the computation of closed form integrals and a cookbook for which GP covariance functions this is possible.

\subsection{Safe active learning}\label{subsection_sal}

Safe Active Learning (SAL) \citep{schreiter2015safe,zimmer2018safe} is a sequential experimental design which strives for safety under previously unknown constraints.
SAL iteratively determines a new point $x_*$ for labelling that maximizes information $I(x_*)$, where $I$ is the so-called acquisition function, and is safe with high probability.
Entropy is commonly used as a measure for information and is in case of Gaussian distribution proportional to the predictive variance of the regression GP.
For a comparison and elaboration of entropy, e.g.\ its other names in the literature, see Subsection~\ref{sec:criteria}.

We consider the case of a safety critical quantity, e.g.\ a temperature should stay below a critical temperature.
Then, one can calculate a safety indicator $z$ that indicates safety for non-negative values $z\ge0$.
We assume that the safety critical quantity cannot be computed, but (potentially noisy) measurements on this safety critical quantity can be obtained.
Then, a safety GP $g_{\text{safe}}=\GP(m_{\text{safe}},k_{\text{safe}})$ modeling the values of $z\sim g_{\text{safe}}(x_*)$ can be trained.
Often but not necessary, the safety GP is the (or amongst the) model(s) that we strive to learn.
This yield the probability of safely obtaining a new point $x_*$ when already some data $(x,y)$ has been obtained as 
\begin{align*}
	\xi(x_*) = \int_{z \geq 0} \mathcal{N}(z ; m_{\text{safe}}(x_*|x,y), k_{\text{safe}}(x_*,x_*|x) ) \d z.
\end{align*}

Now, a new point $x_*$ can be determined by maximizing (cf.\ Appendix~\ref{appendix_experimental_details}) information constrained on safety:
\begin{align}
	x_* = \operatorname{argmax}_{x_*\in D, \xi(x_*) > \alpha} I(x_*)
	\label{eq:sal}
\end{align}
where $0 \leq \alpha < 1 $ indicates the desired safety level.
The goal of this paper is the introduction of a new acquisition function for measuring information $I$ for time varying systems.

\begin{remark}\label{remark_safe}
Our experiments use $\alpha=0.977$ such that two standard deviations from the mean of a Gaussian is predicted as being safe.
This safety criterion corresponds to the probability of a random and statistically independently drawn point being safe.
In active learning, points are not drawn independently, since the data points are not drawn independently but chosen by an acquisition function.
Due to this lack of statistical independence, points might have a higher probability of being unsafe.
To describe our experiments, we chose the wording "satisfying" for safety percentages around 97.7~\% or higher and "acceptable" for slightly lower percentages that are still above 95~\%.
All experiments, independent of the acquisition functions, show similar and at least acceptable safety.
\end{remark}

After a new label $y_*$ has been obtained, both the regression and the safety GP can be retrained and the next constrained optimization problem (\ref{eq:sal}) solved.
More formally, SAL can be summarized in the following Algorithm~\ref{alg:gp_active_learning}.

\smallskip

\begin{algorithm}
	\caption{Safe Active Learning (SAL) with Gaussian Processes}
	\label{alg:gp_active_learning}
	\KwIn{Budget $n$, initial data $(x,y) = (x_i,y_i)_{i=1}^{n_{\text{initial}}}$}
	\KwOut{Trained Gaussian Processes}
	
	Train initial GPs\;
	
	\For{$i=1$ \KwTo $n$}{
		Determine $x_*$ by solving (\ref{eq:sal}) and get label $y_*$ for $x_*$\;
		Update $(x,y) \gets ((x,y), (x_*,y_*))$ and retrain GPs\;
	}
\end{algorithm}

We use GP hyperparameter training, which is fast and well-established.
If in certain real-time applications further speed up is required, techniques of amortized inference are at hand \citep{bitzer2023amortized,liu2020task}.
Changing hyperparameters has a strong effect on the overall performance of all algorithms, including on safety \citep{fiedler2024safety} due to model misspecification, in particular in case of few data points.
Still, changing hyperparameters is state of the art in active learning and Bayesian optimization \citep{garnett2023bayesian} and, in our experience, superior to fixing hyperparameters.

\subsection{Integrated Mean Squared Prediction Error (IMSPE)}\label{subsection_imspe}

The \emph{Integrated mean squared prediction error (IMSPE)} acquisition function strives for a \emph{model posterior with minimal average variance}.
Start with a (prior) Gaussian process $g\sim\GP(m,k)$ and have already measured at $n$ positions $x\in\R^{n\times d}$.
IMSPE selects $x_*\in\R^{1\times d}$ with minimal
\begin{align}\label{eq_imspe}
	\int_{\R^d} k(x_r|x_*,x) \d\mu(x_r),
\end{align}
where $k(x_r|x_*,x)$
is the posterior variance of the Gaussian processes $g$ when conditioned on observations at $x_*$ and $x$, and $\mu$ is a suitable finite measure used to average over reference points $x_r$.

Consider IMSPE for the special case of the squared exponential covariance function with lengthscale of one and signal variance of one and a Gaussian $\mu$ with mean zero and variance 1.
Assume furthermore one existing measurement point at $x=1$.
Now let us consider which point $x_*\in\R$ to choose next, according to IMSPE.
In the appendix in Example~\ref{motivating_example}, we explicitely compute the ISMPE acquisition function $\int_{\R} k(x_r|x_*,1) \d\mu(x_r)$, plotted in Figure~\ref{figure_motivating_example}.
Values for $x_*$ are most suitable to reduce the variance of the posterior GP in the area around zero with variance 1 (as specified by the measure $\mu$) if they are slightly negative.
The positive area is less suitable, since there already exists a measurement point $x$ at $x=1$.

\begin{figure}[htb]
\centering
\begin{tikzpicture}[domain=-5:5,xscale=0.7,yscale=3,samples at=
	{-5,-4.8,...,0.8,1.2,1.4,...,5}
	]
	\draw[very thin,color=gray!50!white,ystep=0.1] (-5.1,-0.1) grid (5.1,1.09);

	\draw[->,thick] (-5.1,0) -- (5.3,0) node[right] {$x_*$};
	\draw[->,thick] (0,-0.1) -- (0,1.1);
	\draw[color=blue] plot  (\x,{
		(-1.758931365 + 1.732050808*exp(-0.3333333333*\x*\x) - 3.464101616*exp(-0.8333333333*\x*\x + 1.333333333*\x - 0.8333333333) + 3.*exp(-1.0*(\x-1.0)*(\x-1.0)))
		/
		(3*exp(-1.0*(\x-1.0)*(\x-1.0))-3.0)
	}) node[above right] {\color{blue}$\int_{\R^d} k(x_r|x_*,x) \d\mu(x_r)$};
	\draw[color=green!50!black,thick] (1,-0.1) -- (1,1.1) node[above right] {\color{green!50!black}$x=1$};
	\node at (-0.5479204538,0.1772061694) {\color{red}$\bullet$};
	\node at (-0.5479204538,0.2772061694) {\color{red}$x_*$};
	
\end{tikzpicture}
\caption{The IMSPE criterion in the motivating example. Without the (green) data at {\color{green!50!black}$x=1$}, the IMSPE criterion would be symmetric around zero. With the (green) data at {\color{green!50!black}$x=1$}, the optimal choice according to IMSPE for the (red) next measurement point $x_*$ is the red position at $x_*=-0.5479204538$.}
\label{figure_motivating_example}
\end{figure}

We give a short discussion on IMSPE and entropy as aquisition functions for safe active learning without time awareness.
We introduce related work on IMSPE in Section~\ref{sec_related} and for further theoretical comparisons of IMSPE against other acquisition functions, we refer to Appendix~\ref{appendix_baseline}.
In the realm of statistics, IMSPE stands out as the state of the art, where the superior theoretical properties of IMSPE are highly regarded.
In contrast, in the field of machine learning, entropy takes precedence as state of the art.
Our preliminary experiments have indicated that both communities make valid choices, and the divergence in state-of-the-art criteria arises from varied applications of GPs.
In statistical communities, there is a tendency to maintain fixed GP hyperparameters in active learning methods, while in machine learning, hyperparameters are regularly optimized.
Our experiments on time-varying systems, where we do optimize hyperparameters, contradict these preliminary experiments, as IMSPE is superior over entropy.
In contrast to this, the ablation experiment in Appendix~\ref{appendix_ablation_seasonal} shows that without seasonal change ($a=0$), entropy outperforms T-IMSPE, which is identical to IMSPE for such time-constant systems.
We note that these findings are based on initial experiments, lack scientific rigor, need a thorough follow-up, and are out of scope of this paper.

\section{Related work}\label{sec_related}

\subsection{Acquisition functions for active learning}
\label{sec:criteria}

In machine learning, the standard acquisition function is entropy.
This approach is often called ALM (Active Learning Mackay) \citep{mackay1992evidence}.
The acquisition function BALD \citep{houlsby2011bayesian} was originally presented for classification tasks.
For regression tasks, it is equivalent to the entropy criterion \cite[A.3]{riis2022bayesian} \cite[G.2]{riis2022mixture}.

The acquisition function mutual information has gained prominence in safe active learning scenarios.
Originally applied in the context of sensor placement for fixed sensors \citep{krause2008near}, mutual information-based acquisition functions have found applications in active learning scenarios with discrete domain $D$ \citep{vasisht2014active, kirsch2019batchbald, li2022batch}.
For a detailed comparison of IMSPE with averaging mutual information \citep{krause2008near} and $\beta$-diversity \citep{leinster2021entropy}, refer to Appendix~\ref{apendix_MI_beta}.

While classical criteria from the Design of Experiments (DoE) theory \citep{smith1918standard, pukelsheim2006optimal}, such as $D$-optimal designs, could be potential acquisition functions, they have not gained traction in machine learning due to their emphasis on learning parameters rather than making predictions.
IMSPE is closely connected to the classical V-optimality criterion in DoE theory, where measurement points are chosen to minimize the model's variance at a finite number of points \citep[(2.1.18)]{fedorov-hackl}.
IMSPE can be considered a variant of A-optional designs, where it minimizes the average variance of the response, while A-optimality minimizes the average variance of the model parameters.
Other geometric criteria have also been proposed in practice \citep{thewes2016efficient}.
None of these DoE criteria allow to include safety constraints.

IMSPE has a long history.
The seminal work by \citep{sacks1989designs}, see also \citep{sacks1989designdiscussion} and the authors' book \citep{santner2003design}, introduced the acquisition function IMSE (Integrated Mean Squared Error), estimating mean squared error via posterior variance for polynomial covariance functions. 
\citep{ying1991asymptotic} renamed IMSE into IMSPE and conducts extensive theoretical investigations into its asymptotic properties.
The second time the name IMSPE appears in the literature is in \citep{fang2000robust}.
\citep{ankenman2010stochastic} proposes the use of IMSE, for which they use a closed form formula, for active learning.
\citep{leatherman2014designs} gives a thorough overview about techniques for static designs with IMSPE.
\citep{leatherman2018computer} computes the IMSPE for certain finite dimensional GPs.
For modern references to IMSPE we refer to \citep{gramacy2020surrogates,binois2019replication}, with extensive implementation hetGP \citep{hetGP2021} in the system R.

IMSPE has been reintroduced under different names. 
\citep{cohn1996active} introduced active learning for mixtures of Gaussian with the goal of minimizing IV (Integrated Variance), for which they found a simple closed form; this is another special case of IMSPE later named ALC\footnote{%
Some authors distinguish between IMSPE and ALC \citep{sauer2023active}, noting that IMSPE computes the integral in equation \eqref{eq_imspe} in closed form, while ALC approximates it numerically. However, this distinction is not widely recognized in the literature. For an example, see \cite[\textsection6]{gramacy2020surrogates} by the second author of \citep{sauer2023active}.
}
(Active Learning Cohn).
\citep{seo2000gaussian} introduced active learning to GPs, where they used VI from ALC while approximating the integral using sampled points.
\citep{burnaev2015adaptive} independently introduced IMSPE as IntegratedMSEGain, providing an unproven closed form. 
\citep{gorodetsky2016mercer} independently introduce IMSPE under the name IVAR (Integrated VARiance), where they compute the integral numerically.
Recent literature has extensively utilized IMSPE or its variants.
\citep{vernon2019known} focuses on safety boundaries,
\citep{meka2020active} applies ALC in closed-set active learning, \citep{lee2022failure} investigates IMSE, ALC, and ALM in safe active learning, and \citep{lee2023partitioned} employs a discrete approximation to ALC/IMSE for local GPs.

None of these approaches use IMSPE or its variants under different names to specifically collect information for future time steps.

\subsection{Active learning in dynamic systems}

Active learning has also been explored in dynamic systems. \citep{schneideractive, umlauft2020smart} consider dropping old data points when their information content becomes irrelevant.
\citep{lughofer2017line} provides an overview of active learning in data streams, while \citep{jain2018learning} applies active learning to GP models for non-linear control during closed-loop control.
Safety considerations in dynamic systems are addressed by \citep{zimmer2018safe}, and \citep{buisson2020actively} proposes an iterative active learning scheme in dynamical systems based on mutual information.
\citep{yu2021active} conducts active learning specifically for GP state space models, and \citep{heim2020learnable} learns safety constraints in dynamical systems.
Similar strategies can be found in reinforcement learning for non-stationary environments \citep{padakandla2020reinforcement, padakandla2021survey, ritto2022reinforcement, cowen2022samba}, control \citep{fisac2018general, agarwal2019online, capone2020data}, concept drift \citep{han2022survey}, and theoretical work on regret in non-stationary systems \citep{zhang2021online, zhao2022non}.
The papers \citep{fiducioso2019safe,krause2011contextual} consider regret in the current time/context and show that their choice of acquisition function is still sublinear in changing contexts.

Notably, none of these approaches considers acquisition functions that account for information or prediction accuracy at future time steps.

\subsection{Safe learning}

Safe exploration has been employed in robotics \citep{sui2018stagewise,Berkenkamp16,Baumann21}, energy management \citep{galichet13}, terrain exploration, \citep{moldovan12a,Turchetta19} and engine modeling \citep{schreiter2015safe,zimmer2018safe,schillinger2017safe,Li22}.
While most of this Safe Learning work is on Bayesian optimization, some is on active learning but does not consider future information or future prediction accuracy.
\citep{li2024safe} considers Bayesian optimization (but not active learning) in a dynamic setting, with a focus on safety guarantees.

Active learning aims at learning a model over the whole input space to perform some post-hoc or worst-case analysis.
In contrast, Bayesian optimization is only interested in an optimal point of operation.
Once this point or the area is found, the rest of the input space is not of any interest anymore.
In this regard, active learning and Bayesian optimization are conceptually different.
Specifically, IMSPE reduces the variance globally and seems unreasonable to be integrated meaningfully into a Bayesian optimization framework.

\section{T-IMSPE - Time-aware Integrated Mean Squared Prediction Error}\label{section_timspe}

Our criterion T-IMSPE uses the concepts and formulas from IMSPE in time-dependent models, to ensure variance reduction not only for the current time step, but also for future time steps.
We first consider GP models with time as an additional input for both continuous and discrete time domains, and afterwards we consider NX-GP models for dynamic systems in discrete time domain.
In both cases, T-IMSPE work with a wide range of GP covariance structures.

\subsection{T-IMSPE for GPs with time amongst its inputs}\label{subsection_TIMPSE1}

Here, we consider time varying systems as in \eqref{eq_drift}, where the time is one of the model inputs.
Consider a GP $g=\GP(m,k)$ defined on the domain $\R\times D$, where $\R$ represents the time domain and $D\subseteq\R^d$ represents the remaining inputs, e.g.\ ``spatial'' inputs.
For example, when not encoding any specific time dependent behavior in the GP prior, $k$ might be a squared exponential covariance function on $\R^{d+1}$.
Let $\mu$ be a finite measure on $\R\times D$.
We define T-IMPSE for choosing a new data point $x_*\in D$ at time $t_*\in\R$ after already obtained data $(\tau,x)\in (\R\times D)^n$ for such GPs as
\begin{align}
	\operatorname{T-IMSPE}(x_*)=\int k((t_r,x_r)|(t_*,x_*),(\tau,x))\d\mu(t_r,x_r).\label{eq_timspe1}
\end{align}
By Theorem~\ref{theorem_main} below, this integral is computable in closed form for most common covariance functions.
The time-aware aspect can now be included in the measure $\mu$.
Assume we have a suitable measure $\mu_D$ on the domain $D$.
Then, we can encode our desire to accumulate information in the relevant time interval $[t_*,t_*+\Delta t]$ by a uniform distribution $\mu_t=\mathbf{1}_{[t_*,t_*+\Delta t]}$.
Then, defining $\mu$ as the product measure $\mu:=\mu_t\otimes\mu_D$ strives to choose a new data point as follow:
the posterior variance over the domain $D$ is reduced being weighed by $\mu_D$, while this reduction is not only achieved at $t_*$, but aware of future times in all of $[t_*,t_*+\Delta t]$.
The relevant time domain might also be considered discrete via $\mu_t=\mathbf{1}_{\{t_*,t_*+1,\ldots,t_*+\Delta t\}}$, then $\mu:=\mu_t\otimes\mu_D=\sum_{t=t_*}^{t_*+\Delta t}\delta_t\otimes\mu_D$ for the Kronecker delta $\delta_t$.

\subsection{T-IMSPE for GPs with NX-structure}\label{subsection_NX}

Here, we consider time varying systems as in \eqref{eq_nx}, where the time dependency is modeled via an NX structure.
Consider $y_t=f(\hat x_t,\hat x_{t-1},\ldots,\hat x_{t-\ell+1})$  for some positive $\ell\in\mathbb{N}$ by placing a GP prior $g=\GP(m,k)$ on the function $f:D^\ell\to\R$.
Such models work in discrete time domain.
Let $\mu$ be a finite measure defined on $D^\ell$.
Assume we already obtained $n$ data points at $x\in (D^\ell)^n$, where the condition $x_{i,t}=x_{i+1,t+1}\in D$ for the NX structure for $i\in\{1,2,\ldots,\ell-1\}$ and $t\in\{1,2,\ldots,n-1\}$.
We define T-IMPSE for choosing a new data point $x_*\in D$ for such NX GPs as
\begin{align}
	\operatorname{T-IMSPE}(x_*)=\int k(x_r|\overline{x_*},x)\d\mu(x_r),\label{eq_timspe2}
\end{align}
where $\overline{x_*}=(x_*,x_{1,n},\ldots,x_{\ell-1,n})\in D^\ell$ is the new measurement position with previous measurement positions to predict the model dynamics.
Since $x_{1,n},\ldots,x_{\ell-1,n}$ are the measurement positions from previous steps, they are well known values, to which we add the vector $x_*$ of new inputs. We now use the closed formula for T-IMSPE in $\int k(x_r\mid x_*,x)d\mu(x_r)$ from Theorem~\ref{theorem_main}, where instead of $x_*$ we substitute $\overline{x_*}$.
This formula allows optimization for $x_*$.

In our experiments we chose the $\ell$-times product measure $\mu:=\bigotimes_{i=1}^\ell\mu_D$ for a fixed finite measure $\mu_D$ on the domain $D$.
Note that if $\mu_D$ is Gaussian or a uniform measure on a multi-dimensional interval, then so is $\mu$.
This acquisition function is time-aware, as points are chosen that reduce the variance over all possible dynamical behaviour.

This version of T-IMSPE is a special case of the previous one: start with the version from Subsection~\ref{subsection_TIMPSE1}, ignore the time dependency in the GP and intepret the GP to have the NX-time inputs.
Now, the additional average over time in T-IMPSE is trivial.

\subsection{\texorpdfstring{$P$}{P}-elementary functions for closed form (T-)IMSPE}

A key step to applicability of our new T-IMSPE criterion is the evaluation of the integral in Equation~\eqref{eq_imspe}, and hence in the Equations~\eqref{eq_timspe1} and \eqref{eq_timspe2}.
This section shows that the integral can be computed in closed form for a variety of covariance functions and measures.
This decreases computational load improves numerical stability of values and gradient.
Even though this work uses the squared exponential kernel, work on efficient kernel selection \citep{Bitzer2022Neurips} allows the modeler to quickly pick a suitable composed kernel.
Our theoretical contribution actually extends beyond our T-IMSPE criterion to the general situation of IMSPE. 
We summarize and extend existing results \citep{sacks1989designs,leatherman2018computer} that this integral can be computed in \emph{closed form}, putting them into a more abstract and widely applicable framework.

Recall the class of elementary functions, which was defined in the 19th century \citep{liouville1833premier,liouville1833second,liouville1833rapport,bronstein2005symbolic} to e.g.\ describe the well known result that while $\erf(x):=\int \exp(-x^2)\d x$ is analytical and even entire, it is not expressible in closed form using classical functions.
The class of elementary functions is not suitable to describe functions being computable in PyTorch, e.g.\ it is too big in the sense that it is closed under inverse functions and it is too small as functions like $\erf(x)$ are nowadays easily computable.
Furthermore, the class of elementary functions only considers univariate scalar function $f:U\to\R$ for $U\subseteq\R$, whereas multivariate multi-ouput functions are common in machine learning.
Now, we adapt this definition of elementary functions to formally define what it means to compute functions in closed form in a system like PyTorch.

Fix a programming framework $P$.
We think of $P$ as PyTorch, but one might as well think of $P$ as JAX, TF or any other current or future framework.
Then, we view the following operation as computable in closed form in $P$: \ref{elementary_functions} they have access to functions in $P$, \ref{elementary_composition} one can compose functions, and \ref{elementary_rational}  one can perform arithmetic operations.
More formally, we define the following.

\begin{definition}\label{def_elementary}
	We define the set of \textbf{$P$-elementary functions} as the intersection\footnote{%
		Intersecting is a way of constructing mathematical objects when direct constructions are inconvenient.
		E.g., the span of a set of vectors can be defined as intersection of all linear subspaces containing this set of vectors. 
	} of all sets of partial\footnote{Partial functions are functions that are not defined in all of their domain. This is necessary, e.g.\ as we divide through functions having zeros or use logarithms.} functions $U\to\R^{d''}$ for $U\subseteq\R^{d'}$, $d',d''\in\Z_{>0}$, that
	\begin{enumerate}[label=(\alph*)]
		\item include all constants (considered as constant functions) and functions available in $P$,\label{elementary_functions}
		\item are closed under function composition $\circ$, and\label{elementary_composition}
		\item are closed under (multivariate and multi-output) rational maps; in particular additions, subtractions, multiplications, divisions, and linear maps.\label{elementary_rational}
	\end{enumerate}
\end{definition}

\begin{example}
	The following functions are PyTorch-elementary:
	\begin{enumerate}[label=\roman*.]
		\item $(x_1,x_2)\mapsto 2x_1-42x_2$ by Definition~\ref{def_elementary}.\ref{elementary_rational}.
		\item $x_1\mapsto \erf(x_1)$ by Definition~\ref{def_elementary}.\ref{elementary_functions}.
		\item $(x_1,x_2)\mapsto \erf(x_1)$ by using all of Definition~\ref{def_elementary}, as $\left(x_1\mapsto \erf(x_1)\right)\circ\left((x_1,x_2)\mapsto x_1\right)$.
		\item $(x_1,x_2,x_3)\mapsto \erf(\softmax(x_1+\pi\cdot x_2,\Gamma(\frac{x_3}{x_2})^2))$ by using all of Definition~\ref{def_elementary}.
	\end{enumerate}
\end{example}

We now show that IMSPE and T-IMSPE are PyTorch-elementary for a wide range of covariance functions, making safe active learning with them is computationally efficient and numerically stable.

\subsection{Computability in closed form}

IMSPE and T-IMSPE can be computed in closed form with a mild assumption on the GP covariance.

\begin{definition}\label{definition_covariance_marginalizable}
	Let $\mu$ be a finite measure on $\R^d$.
	We say a covariance function $k$ is \textbf{$P$-elementary covariance marginalizable} w.r.t.\ $\mu$ if the integral $\int k(x_1,x_r)k(x_r,x_2)\d\mu(x_r)$ exists as a $P$-elementary function in $x_1$ and $x_2$.
\end{definition}

This is a non-trivial definition.
In the appendices, we show or recall from the existing literature that 
constant (Example~\ref{example_constant_covariance_elementary}), 
ARD squared exponential (Examples~\ref{example_ARD_SE_covariance_elementary}), polynomial (Example~\ref{example_more_elementary}), 
half-integer Mat\'ern (Example~\ref{example_more_elementary}), 
Wiener process (Example~\ref{example_more_elementary}), and
random Fourier feature (Example~\ref{example_fourier})
covariance functions are Pytorch-elementary covariance, while rational quadratic and periodic covariance functions are probably not (Example~\ref{example_non_elementary}), all when considering $\mu$ as a non-degenerate continuous uniform distribution on a multidimensional interval or non-degenerate Gaussian distribution.

The class of $P$-elementary covariance marginalizable covariance functions is closed under various operations: scaling (Proposition~\ref{proposition_scaling_elementary}), multiplication on independent inputs (Proposition~\ref{proposition_independent_inputs_elementary}), and also under sums when the covariance functions satisfy another condition (Proposition~\ref{proposition_sum_elementary}).
Furthermore, if a covariance function is $P$-elementary covariance marginalizable w.r.t.\ two measures, then so it is w.r.t.\ the sum of these two measures (Proposition~\ref{proposition_sum_measure_elementary}).
This \emph{cookbook} allows to construct new GPs from previous ones, such that T-IMSPE stays applicable.
For more details on this cookbook see Appendix~\ref{appendix_cookbook}.
From this discussion we conclude that the following theorem is widely applicable.

\begin{theorem}\label{theorem_main}
	Assume the prior covariance function $k$ of a prior GP $g$ to be $P$-elementary covariance marginalizable w.r.t.\ a measure $\mu$.
	Then, IMSPE and T-IMSPE are $P$-elementary, i.e.\ they can be computed in closed form in the programming framework $P$.
\end{theorem}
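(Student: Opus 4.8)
The plan is to reduce both versions of T‑IMSPE to the single integral appearing in IMSPE, \eqref{eq_imspe}, and then to unfold the Gaussian posterior‑variance formula so that this integral becomes a rational expression built from quantities that are each $P$-elementary. Write $X=(x_*,x)$ for the list of the $n+1$ points consisting of the candidate $x_*$ and the already‑measured $x$, and let $K$ be the corresponding Gram matrix $k(X,X)$ (with noise variance added on the diagonal if the observations are noisy), so that the posterior variance is $k(x_r\mid x_*,x)=k(x_r,x_r)-k(x_r,X)K^{-1}k(X,x_r)$. Substituting this into \eqref{eq_imspe} splits IMSPE into $\int k(x_r,x_r)\d\mu(x_r)$, which is independent of $x_*$ and hence a constant — trivially $P$-elementary by Definition~\ref{def_elementary}.\ref{elementary_functions} — minus the reduction term $\int k(x_r,X)K^{-1}k(X,x_r)\d\mu(x_r)$.

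First I would expand the quadratic form entrywise, $k(x_r,X)K^{-1}k(X,x_r)=\sum_{i,j}(K^{-1})_{ij}\,k(x_r,X_i)\,k(X_j,x_r)$; this is a finite sum, so the integral moves inside and the reduction term becomes $\sum_{i,j}(K^{-1})_{ij}\,M(X_i,X_j)$, where $M(x_1,x_2):=\int k(x_1,x_r)k(x_r,x_2)\d\mu(x_r)$. By hypothesis $k$ is $P$-elementary covariance marginalizable w.r.t.\ $\mu$, so $M$ is $P$-elementary in $(x_1,x_2)$; precomposing with the affine (hence $P$-elementary) map sending $x_*$ to $(X_i,X_j)$ — constant in the coordinates coming from the fixed data, the identity in those equal to $x_*$ — shows each $M(X_i,X_j)$ is $P$-elementary in $x_*$. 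For the coefficients: the entries of $K$ are either constants or values of $k$ having $x_*$ among their arguments, hence $P$-elementary in $x_*$ once we also use that $k$ itself is $P$-elementary (true for every kernel in the cookbook: squared exponential, half‑integer Matérn, polynomial, Wiener, random Fourier feature); Cramer's rule then writes $(K^{-1})_{ij}$ as a ratio of determinants, determinants are polynomials in the entries, and $P$-elementary functions are closed under such polynomial combinations and under division (being partial functions, vanishing of $\det K$ is harmless).

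Assembling: the reduction term is a finite sum of products of $P$-elementary functions of $x_*$, hence $P$-elementary by closure under rational maps (Definition~\ref{def_elementary}.\ref{elementary_rational}), and subtracting it from the constant $\int k(x_r,x_r)\d\mu(x_r)$ preserves $P$-elementarity, so IMSPE is $P$-elementary. For T‑IMSPE of Subsection~\ref{subsection_TIMPSE1}, the integral \eqref{eq_timspe1} is exactly the IMSPE integral for the GP on $\R\times D$ with candidate point $(t_*,x_*)$ and data $(\tau,x)$, so the same argument applies verbatim with the marginalizability hypothesis taken for $\mu$ on $\R\times D$; fixing the predetermined time $t_*$, i.e.\ precomposing with $x_*\mapsto(t_*,x_*)$, keeps it $P$-elementary in $x_*$. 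For the NX version \eqref{eq_timspe2}, the augmented point $\overline{x_*}=(x_*,x_{1,n},\ldots,x_{\ell-1,n})$ is an affine — hence $P$-elementary — function of $x_*$, so T‑IMSPE is IMSPE precomposed with $x_*\mapsto\overline{x_*}$ and inherits $P$-elementarity.

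The main obstacle is the matrix‑inverse step: one must argue that each entry of $K^{-1}$ is a genuine $P$-elementary function of the candidate point. This forces the small additional observation that $k$ itself is $P$-elementary (automatic for the kernels of interest but not literally part of ``covariance marginalizable''), and it uses that division within the class of \emph{partial} $P$-elementary functions is precisely what legitimises Cramer's rule even at configurations where $K$ is singular. Everything else — interchanging a finite sum with the integral, and invoking closure of $P$-elementary functions under composition and arithmetic — is routine bookkeeping with Definition~\ref{def_elementary}.
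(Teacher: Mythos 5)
Your proof is correct and follows the same essential strategy as the paper's: expand the posterior variance as a quadratic form in prior covariances, exchange the finite sum with the integral so that only integrals of the form $\int k(z_1,x_r)k(x_r,z_2)\,\d\mu(x_r)$ remain -- exactly what $P$-elementary covariance marginalizability supplies -- and dispose of the algebraic coefficients by closure of $P$-elementary functions under composition and rational maps, with both T-IMSPE variants handled by precomposition with affine maps. Where you diverge is the treatment of the matrix inverse: you apply Cramer's rule to the full augmented $(n+1)\times(n+1)$ matrix $K=k((x_*,x),(x_*,x))$, whereas the paper block-inverts via the Schur complement $S_*$ of the data-only block $k(x,x)$, so that the $x_*$-dependence enters only through $S_*$, $L=k(x_*,x)k(x,x)^{-1}$ and $c=k(x_r,x_*)$, and $k(x,x)^{-1}$ is handled once via a Cholesky factorization. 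For the theorem itself both routes are equally valid (division inside the class of partial $P$-elementary functions legitimises Cramer's rule exactly as you argue), but the paper's choice is what makes Corollary~\ref{corollary_main_complexity} immediate -- a one-time $O(n^3)$ factorization independent of $x_*$ followed by $O(n^2)$ evaluation per candidate -- which a symbolic Cramer-rule inverse does not deliver, and it also yields the fully explicit formulas of Appendix~\ref{appendix_SE_IMSPE}. Your explicit remark that one additionally needs $k$ itself (not merely its marginalized products) to be a $P$-elementary function of its arguments, so that the entries of $K$ (resp.\ $\sigma_*^2$, $S_*$, $L$) are $P$-elementary in $x_*$, is a point the paper's proof passes over silently; since it holds for every kernel in the paper's cookbook, this is a harmless but worthwhile clarification rather than a gap.
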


These results have the major advantage that it is sufficient for the prior covariance function to be $P$-elementary covariance marginalizable, and we do not need properties for the posterior covariance functions.
For a proof of Theorem~\ref{theorem_main} we refer to Appendix~\ref{appendix_proofs}, which also provides a very concrete formula for the computation of these criteria.
We do not claim novelty to these proofs, which are known for the special case of polynomial covariance functions since \citep{sacks1989designs} and additional special cases are spread over the subsequent literature, see Subsection~\ref{sec:criteria}.

We follow the state of the art approach in active learning, where GPs are trained after each addition of a new data point, including hyperparameter training.
After this training, IMSPE and T-IMSPE have the same computational complexity as computing the entropy and can be evaluted on $O(n^2)$ for $n$ training data points.

\begin{corollary}\label{corollary_main_complexity}
	IMSPE and T-IMSPE as computed in the proof of Theorem~\ref{theorem_main} need a one-time Cholesky decomposition (which is computed anyway during the GP training) of the data covariance matrix $k(x,x)$ in $O(n^3)$, independent of the new data point $x_*$, where $n$ is the number of data points.
	Afterwards, these criteria can be evaluated in $O(n^2)$ as a function in $x_*$.
\end{corollary}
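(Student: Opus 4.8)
The plan is to write the posterior variance after adding $x_*$ as a rank-one correction of the posterior variance given the existing data, integrate that identity against $\mu$, and then track the cost of every $x_*$-dependent step. Put $K=k(x,x)\in\R^{n\times n}$ and write $\tilde k(a,b):=k(a,b\mid x)=k(a,b)-k(a,x)K^{-1}k(x,b)$ for the posterior covariance after conditioning on the $n$ existing observations. Conditioning on one further location $x_*$ is the ordinary Gaussian update applied to $\tilde k$, which produces the Sherman--Morrison-type identity
\[
  k(x_r,x_r\mid x_*,x)=\tilde k(x_r,x_r)-\frac{\tilde k(x_r,x_*)^2}{\tilde k(x_*,x_*)}.
\]
Integrating against $\mu$ gives $\operatorname{IMSPE}(x_*)=C_0-\tilde k(x_*,x_*)^{-1}\int\tilde k(x_r,x_*)^2\,\d\mu(x_r)$, where $C_0=\int\tilde k(x_r,x_r)\,\d\mu(x_r)$ does not depend on $x_*$.

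Next I would expand the surviving integral down to the marginalization kernel $M(a,b):=\int k(a,x_r)k(x_r,b)\,\d\mu(x_r)$, which is $P$-elementary by the hypothesis of Theorem~\ref{theorem_main}. Setting $v(x_*):=K^{-1}k(x,x_*)\in\R^n$ gives $\tilde k(x_r,x_*)=k(x_r,x_*)-k(x_r,x)\,v(x_*)$, so squaring and integrating term by term yields
\[
  \int\tilde k(x_r,x_*)^2\,\d\mu(x_r)=M(x_*,x_*)-2\,v(x_*)^{\top}b(x_*)+v(x_*)^{\top}A\,v(x_*),
\]
with $b(x_*)=\big(M(x_*,x_i)\big)_{i=1}^n$ and $A=\big(M(x_i,x_j)\big)_{i,j=1}^n$, while $\tilde k(x_*,x_*)=k(x_*,x_*)-k(x_*,x)v(x_*)$; this is simultaneously the explicit formula underlying Theorem~\ref{theorem_main}. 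Now I read off the complexity. The Cholesky factor $K=LL^{\top}$ (already produced during GP hyperparameter training), the matrix $A$, and the constant $C_0=\int k(x_r,x_r)\,\d\mu(x_r)-\operatorname{tr}(K^{-1}A)$ (a genuine constant; the first term is immediate for stationary kernels) are all independent of $x_*$ and are computed once, the Cholesky dominating at $O(n^3)$ and $A,C_0$ costing $O(n^2)$ given it. For each candidate $x_*$: the vectors $k(x,x_*)$ and $b(x_*)$ are $n$ closed-form ($P$-elementary) evaluations, i.e.\ $O(n)$; $v(x_*)$ follows from $k(x,x_*)$ by two triangular solves with $L$, i.e.\ $O(n^2)$; then $M(x_*,x_*)$ and $\tilde k(x_*,x_*)$ are $O(1)$ and $O(n)$; and the contractions $v(x_*)^{\top}b(x_*)$ and $v(x_*)^{\top}A\,v(x_*)$ are $O(n)$ and $O(n^2)$. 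Hence each evaluation is $O(n^2)$.

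Finally, T-IMSPE in \eqref{eq_timspe1} is exactly this computation with the augmented query point $(t_*,x_*)$ and $\mu$ a measure on $\R\times D$, and in \eqref{eq_timspe2} with the partially fixed augmented query point $\overline{x_*}\in D^\ell$ and $\mu$ a measure on $D^\ell$; the bookkeeping is word-for-word the same, so the $O(n^3)$ one-time and $O(n^2)$ per-$x_*$ bounds carry over. The only point that needs care --- and really the content of the corollary --- is arranging the algebra so that the single $x_*$-dependent linear-algebra operation is the triangular solve for $v(x_*)$ together with the bilinear contractions against the precomputed $A$ and $b(x_*)$, all $O(n^2)$: the naive alternative of conditioning afresh on $x$ and $x_*$ jointly hides a fresh $O(n^3)$ Cholesky of the $(n+1)\times(n+1)$ matrix $k([x,x_*],[x,x_*])$ for every candidate, and the Sherman--Morrison decomposition above is precisely what removes it.
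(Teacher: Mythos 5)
Your proof is correct and follows essentially the same route as the paper: your $\tilde k(x_*,x_*)$ is exactly the Schur complement $S_*$ used there, your matrix $A$ and vector $b(x_*)$ are the precomputed marginalized products $\int k(x_i,x_r)k(x_r,x_j)\,\d\mu(x_r)$ and $\int k(x_i,x_r)k(x_r,x_*)\,\d\mu(x_r)$ appearing in the paper's six-integral expansion, and the cost accounting (one-time $O(n^3)$ Cholesky of $k(x,x)$, then per-$x_*$ triangular solves and bilinear contractions in $O(n^2)$) matches the paper's argument. The only difference is presentational — you condition on $x$ first and apply a rank-one update for $x_*$ instead of inverting the bordered $(n+1)\times(n+1)$ covariance via its block inverse — which regroups the paper's six integrals into your three terms but changes nothing in substance.
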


The proof of this corollary in Appendix~\ref{appendix_proofs} is very explicit and yields a direct formula for computations.
For us, the most important special case of the above general theory is the following.

\begin{corollary}
	IMSPE and T-IMSPE can be computed in closed form in PyTorch for a prior GP with squared exponential covariance, provided the measure $\mu$ is Gaussian or continuous uniform.
\end{corollary}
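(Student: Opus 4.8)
The plan is to derive the corollary as an immediate consequence of Theorem~\ref{theorem_main}: by that theorem it suffices to verify that the (ARD) squared exponential covariance $k$ is PyTorch-elementary covariance marginalizable with respect to $\mu$, i.e.\ that
\begin{align*}
	\Phi(x_1,x_2) := \int_{\R^d} k(x_1,x_r)\,k(x_r,x_2)\,\d\mu(x_r)
\end{align*}
is a PyTorch-elementary function of $(x_1,x_2)$ whenever $\mu$ is Gaussian or continuous uniform on a multidimensional interval. This is precisely the content of Example~\ref{example_ARD_SE_covariance_elementary}; what follows is the computation underlying it. Once $\Phi$ is known to be PyTorch-elementary, Theorem~\ref{theorem_main} finishes the argument, since the closed-form formula in its proof assembles IMSPE and T-IMSPE from $\Phi$, the kernel $k$, and the (Cholesky-factored) data covariance matrix $k(x,x)$ using only composition and rational operations; for the NX variant one additionally substitutes the affine map $x_*\mapsto\overline{x_*}$, which is harmless.

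First I would exploit the fact that a product of two unnormalized Gaussian densities in the integration variable $x_r$ is again an unnormalized Gaussian. Writing $k(x,x')=\sigma^2\exp\bigl(-\tfrac12(x-x')^\top\Lambda^{-1}(x-x')\bigr)$ for a diagonal lengthscale matrix $\Lambda$, completing the square in $x_r$ gives $k(x_1,x_r)\,k(x_r,x_2)=c(x_1,x_2)\,\exp\bigl(-\tfrac12(x_r-m(x_1,x_2))^\top\Sigma_0^{-1}(x_r-m(x_1,x_2))\bigr)$, where $\Sigma_0$ is a constant matrix, $m(x_1,x_2)$ is affine in $(x_1,x_2)$, and the prefactor $c(x_1,x_2)$ is itself a Gaussian (hence PyTorch-elementary) function of $x_1,x_2$. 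It then remains to integrate this shifted Gaussian against $\mu$.

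For $\mu$ a (possibly degenerate) Gaussian, $\Phi$ is a Gaussian-against-Gaussian integral, evaluating in closed form to $c(x_1,x_2)$ times a normalizing constant times another Gaussian evaluated at $m(x_1,x_2)$; composing the affine map $(x_1,x_2)\mapsto m$ with $\exp$ and multiplying by $c$ yields a PyTorch-elementary function. For $\mu$ uniform on a box $\prod_{i=1}^d[a_i,b_i]$, the integral factors coordinate-wise (this uses that both $k$ and $\mu$ are products over coordinates), and each one-dimensional factor is $\int_{a_i}^{b_i}\exp\bigl(-\tfrac{(s-m_i)^2}{2v_i}\bigr)\,\d s=\sqrt{\tfrac{\pi v_i}{2}}\bigl(\erf(\tfrac{b_i-m_i}{\sqrt{2v_i}})-\erf(\tfrac{a_i-m_i}{\sqrt{2v_i}})\bigr)$ with $m_i$ affine in $(x_1,x_2)$; since $\erf$ is available in PyTorch and only composition and rational operations are used, $\Phi$ is PyTorch-elementary in this case as well.

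The main obstacle is not conceptual but careful bookkeeping: one must track the ARD lengthscales through the completing-the-square step and verify the coordinate-wise factorization for the uniform case, and one must allow the degenerate sub-cases — a Gaussian $\mu$ with singular covariance, or an interval of zero length in some coordinate — which remain covered because the resulting expressions (a lower-dimensional Gaussian integral, or a plain evaluation rather than an integral) are still PyTorch-elementary. Since all of these fit into the framework already established, I expect the proof to be short once Theorem~\ref{theorem_main} and Example~\ref{example_ARD_SE_covariance_elementary} are in hand.
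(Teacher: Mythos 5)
Your proposal is correct and follows essentially the same route as the paper: invoke Theorem~\ref{theorem_main} and verify that the ARD squared exponential kernel is PyTorch-elementary covariance marginalizable via completing the square, coordinate-wise factorization, and the error function for the uniform case (this is exactly Example~\ref{example_ARD_SE_covariance_elementary}, with the explicit formulas spelled out in Appendix~\ref{appendix_SE_IMSPE}). The brief treatment of degenerate measures is a harmless addition not emphasized in the paper, but the argument is otherwise the same.
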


\section{Experiments}\label{section_experiments}

We conduct three experiments for safe active learning (Algorithm~\ref{alg:gp_active_learning}) in time varying systems.
Thereby, we demonstrate the superiority of the modeling quality achieved by T-IMSPE over both entropy, which is currently the state of the art (see Appendix~\ref{appendix_baseline}) in active learning, and IMSPE, while keeping the same safety standard.
For further results of our experiments resp.\ further details on the setup of our experiments see Appendix~\ref{appendix_further_results} resp.\ Appendix~\ref{appendix_experimental_details} and the attached code.
We use the paired Wilcox signed rank test to show statistical significance.

\subsection{Experiment: seasonal change}\label{subsection_seasonal}

\newcommand{\implicitplotexp}[1]{
	\begin{tikzpicture}
		\begin{axis}[
			axis lines=center,
			xlabel={$x_1$},
			ylabel={$x_2$},
			xmin=-4.5, xmax=4.5,
			ymin=-4.5, ymax=4.5,
			domain=-4:4,
			grid=both,
			title=$t\equalsign #1$,
			view={0}{90}, 
			xscale=0.7,yscale=0.7
			]
			\addplot3[
			contour gnuplot={
				levels={-1.25,-1,-0.75,-0.5,-0.25,0,0.25,0.5,0.75,1,1.25},
				labels=true,
			},
			thick,
			]
			{
				-1 + 0.1 * (
				\mysin{(0.5*\mycos{0.5*\mysin{0.5*#1}}*x-0.5*\mysin{0.5*\mysin{0.5*#1}}*y)+(0.5*\mysin{0.5*\mysin{0.5*#1}}*x+0.5*\mycos{0.5*\mysin{0.5*#1}}*y)}
				+((0.5*\mycos{0.5*\mysin{0.5*#1}}*x-0.5*\mysin{0.5*\mysin{0.5*#1}}*y)-(0.5*\mysin{0.5*\mysin{0.5*#1}}*x+0.5*\mycos{0.5*\mysin{0.5*#1}}*y))^2
				-1.5*(0.5*\mycos{0.5*\mysin{0.5*#1}}*x-0.5*\mysin{0.5*\mysin{0.5*#1}}*y)
				+2.5*(0.5*\mysin{0.5*\mysin{0.5*#1}}*x+0.5*\mycos{0.5*\mysin{0.5*#1}}*y)
				+1
				)
			};
		\end{axis}
	\end{tikzpicture}
}

\begin{figure}[tb]
	\centering
\includegraphics[width=48.9mm,height=40.3mm]{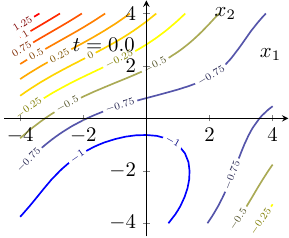}
\includegraphics[width=48.9mm,height=40.3mm]{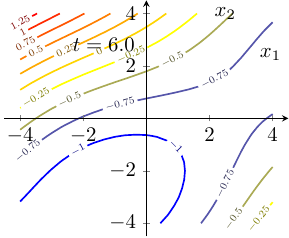}
	\caption{Plots for the seasonal change experiment (Subsection~\ref{subsection_seasonal}) for $t=0$ (left) and $t=6$ (right).}
	\label{figure_plot_seasonal_short}
\end{figure}

We consider learning a system with strong periodic seasonal changes.
The system is given as by rotating the two-dimensional domain in the function from Equation~\eqref{eqn_ssnl} in Appendix~\ref{appendix_railpressure_formula} and plotted in Figure~\ref{figure_plot_seasonal_short} with additional plots in the Appendix in Figure~\ref{figure_plot_seasonal}.
It is particular challenging due to the quick period in time of $4\pi\approx 12.6$ and high variation of around 50~\% in its range.
These changes also strongly affect the position of the safe area.

We start with 8 initial measurements at times $0,\ldots,7$ positioned at the inital points of a Sobol sequence in the safe area.
Afterwards, 100 further measurements at times $8,\ldots,107$ are conducted according to the respective safe active learning criteria T-IMSPE (from Equation~\ref{eq_timspe1}), entropy, and IMSPE.

For the upcoming experiment and  the subsequent experiment, we utilize a grid as our test dataset, restricting to grid points where the behavior is currently safe.
The rationale behind restricting our test data to this safe area is rooted assessing the model's quality there.
The test data varies across different time steps, reflecting the changing state of the system.
Consequently, our analysis is confined to the current time, deliberately excluding future time steps. This intentional exclusion prevents any unwarranted advantage of T-IMSPE over entropy in the evaluation criteria, as T-IMSPE optimizes over future time steps.
This precaution ensures a fair and unbiased evaluation of model performance within the immediate temporal context.
Comparisons are done between runs of equal random seeds, in particular the same noise is added to the initial measurement points.

Due to changing domains, we strive for a good RMSE not only at the final time step, but during all of the measurement time.
Figure~\ref{figure_seasonal_RMSE} summarizes the results in terms RMSE in 50 runs with different seeds by the above protocol after 10, 20, \ldots, 100 time steps, and the average over all time steps.
In this experiment, T-IMSPE is vastly superior in reducing RMSE in comparison to the state of the art entropy.
The superiority of T-IMSPE is highly significant ($p<2.2\text{e}{-16}$) for all time steps and the average model quality.
This number $2.2\text{e}{-16}$ is the smallest representable $p$-value in R and indicates a result that is statistically highly significant, essentially suggesting that the probability of the observed result occurring under the null hypothesis is vanishingly small.
Comparing T-IMSPE to IMSPE, we have T-IMSPE being superior to IMSPE on average ($p<2.1\text{e}{-3}$) and over the first three time steps at 10 \% to 30 \% ($p<2.2\text{e}{-4}$, $p<1.1\text{e}{-2}$, $p<1.5\text{e}{-7}$).
Afterwards, when IMSPE and T-IMSPE have both mostly converged, both methods perform comparably, with advantages for IMSPE at time steps 40 \% to 90 \%.

With all acquisition functions, over 99.5~\% of all points were safe; this is satisfying by Remark~\ref{remark_safe}.

\input{figures/figure_toy_seasonal_RMSE}

\subsection{Experiment: drift}\label{subsection_drift}

\newcommand{\implicitplotdrif}[1]{
	\begin{tikzpicture}
		\begin{axis}[
			axis lines=center,
			xlabel={$x_1$},
			ylabel={$x_2$},
			xmin=-4.5, xmax=4.5,
			ymin=-4.5, ymax=4.5,
			domain=-4:4,
			grid=both,
			title=$t\equalsign #1$,
			view={0}{90}, 
			xscale=0.7,yscale=0.7
			]
			\addplot3[
			contour gnuplot={
				levels={-0.1,0,0.1,0.25,0.5,0.75,1,1.25},
				labels=true,
			},
			thick,
			]
			{
				0.001*((2+\mysin{0.5*#1})*#1+1) * ((8*abs(x^2 - y) + (1 - x)^2) - 25 + 0.1 * #1)
			};
		\end{axis}
	\end{tikzpicture}
}

\begin{figure}
\centering
\includegraphics[width=48.9mm,height=40.3mm]{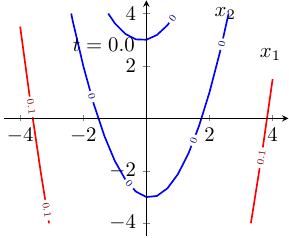}
\includegraphics[width=48.9mm,height=40.3mm]{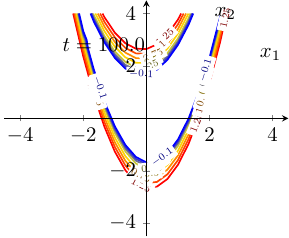}
\caption{Plots of the drift experiment (Subsection~\ref{subsection_drift}) for $t=0$ (left) and $t=100$ (right).}
\label{figure_plot_drift_short}
\end{figure}

We consider learning a system with strong temporal drift.
The system is given as by the formula in Equation~\eqref{eqn_drft} in Appendix~\ref{appendix_railpressure_formula} and plotted in Figure~\ref{figure_plot_drift_short} with additional plots in the Appendix in Figure~\ref{figure_plot_drif1} and Figure~\ref{figure_plot_drif2}.
This system is particularly challenging due to the strong drift effect: the safe area shrinks in size by approximately a factor of 2 and the range of the function increases by a factor of more than 150.
With the exception of the changed formula, our setup is the same as for the experiment with seasonal drift in the previous Subsection~\ref{subsection_seasonal}.

Due to changing domains, we strive for a good RMSE not only at the final time step, but during all of the measurement time.
Figure~\ref{figure_drift_RMSE_gain} summarizes the results in terms RMSE in 500 runs with different seeds by the above protocol after 10, 20, \ldots, 100 time steps, and the average over all time steps.
We see that at almost all time steps, and in particular on average (left), the RMSE of T-IMSPE is smaller than that of the entropy.
In a paired Wilcox signed rank test T-IMSPE is highly significantly ($p<1\text{e}{-10}$) superior to entropy for all 11 comparisons.
The superiority of T-IMSPE over IMSPE is highly significant ($p<2\text{e}{-16}$) on average, highly significant ($p<1\text{e}{-5}$) for 5 time steps (10~\%, 40~\%, 50~\%, 60~\%, 80~\%), significant ($p<5\text{e}{-2}$) for 4 time steps (20~\%, 70~\%, 90~\%, 100~\%), and inconclusive at one time step (30~\%).

We have 96.2~\%--96.5~\% safe points for all 3 acquisition functions; this is  acceptable by Remark~\ref{remark_safe}.

\subsection{Dynamic real world system: rail pressure}\label{subsection_railpressure}

This experiment considers the rail pressure example from \citep{tietze2014model}.
The system consists of an actuation $v_k$ and an engine (rotational) speed $n_k$, which yield the rail pressure $\psi_k$.
This system is particularly challenging, as measurement positions at one time step strongly affect the measurements at subsequent time steps.
Furthermore, such measurements are quite expensive in real world scenarios.
There exists a formula, given 
in the code, such that $\psi_k=\psi(n_k, n_{k-1}, n_{k-2}, n_{k-3}, v_k, v_{k-1}, v_{k-3})$ with $v_{k-2}$ missing, and we will try to find a GP model $g$ such that $\psi_k\approx g(n_k, n_{k-1}, n_{k-2}, n_{k-3}, v_k, v_{k-1}, v_{k-2}, v_{k-3})$.
Here, the final RMSE values in the safe domain are most important in practice.
To obtain test data in the safe area, we constructed a random safe trajectory of length 2024, where the next point is always a random point that turned out to be safe.

\input{figures/figure_RMSE}

Previous papers considered this dynamic example using piecewise linear trajectories as in \citep{zimmer2018safe}, where given a history of measurement position $(n_k,v_k),(n_{k-1},v_{k-1}), \ldots$, a new point $(n_{k+5},v_{k+5})$ was chosen and the measurement position between were linearly interpolated as $(n_{k+i},v_{k+i})=\frac{5-i}{5}(n_k,v_k)+\frac{i}{5}(n_{k+5},v_{k+5})$.
We consider fully dynamic learning, where given the above history, $(n_{k+1},v_{k+1})$ is chosen.
Hence, we are in the setting of NX models as in Subsection~\ref{subsection_NX}, where IMSPE is not feasible as a comparison.

The results of the RMSE of comparing 100 runs of entropy to 100 runs of T-IMSPE (from Equation~\ref{eq_timspe2}) over 1000 iterations are shown in Figure~\ref{figure_railpressure_RMSE}.
The superiority in RMSE values shows the superiority of the T-IMSPE over entropy for dynamical systems.
The Wilcox signed rank test conducted 10 times (every 100 steps) shows that at all time steps TIMSPE is significantly better than entropy with $p<1.1\text{e}{-6}$ after 100 time steps and $p<3.2\text{e}{-15}$ (sic!) at all remaining tested time steps.

Entropy had 97.5~\% safe points and T-IMSPE had 99.2~\% safe points in these experiments, see also Table~\ref{table_railpressure} in Appendix~\ref{appendix_further_results}, which is satisfying for both acquisition functions by Remark~\ref{remark_safe}.

\section{Conclusion}

This paper introduced an algorithm for safe active learning that is specifically suited for time varying systems. Our T-IMSPE acquisition function is able to capture the time dependence and does not only collect information for the current time step, but also for future time steps.

Our T-IMSPE acquisition function can be evaluated in closed-form at comparable complexity as the widely used entropy acquisition function.
This holds for commonly used kernels and we theoretically show the boundary of computability for the IMSPE and T-IMSPE criteria and construct a cookbook of GP constructions for IMSPE and T-IMSPE.
The superiority of TIMSPE is highly significant.

Additionally, the theoretical contributions of this paper allow further applications which need more control over information collection; for example one can specify where and when information is important and one can see for which covariance functions such information gathering is possible.

The experimental results show a clear and significant advantage for the model quality obtained by T-IMSPE over the state of the art entropy and over IMSPE in various settings of time changing systems, while keeping the safety.
In Appendix~\ref{appendix_ablation_seasonal} we discuss what happens if systems are only slightly time changing.

\printbibliography

\newpage

\appendix

\newcommand{\input{}}[1]{\input{#1}}

\section{On \texorpdfstring{$P$}{P}-elementary covariance marginalizable covariance functions}\label{appendix_marginalize_covariance_functions}

This section provides more information about $P$-elementary covariance marginalizable covariance functions.
In particular, we provide a cookbook of which GP covariance constructions result in $P$-elementary covariance marginalizable covariance functions.

\subsection{Motivating example}

Recall Definition~\ref{definition_covariance_marginalizable}: a covariance function $k$ is $P$-elementary covariance marginalizable w.r.t.\ a finite measure $\mu$ on $\R^d$ if the integral $\int k(x_1,x_r)k(x_r,x_2)\d\mu(x_r)$ exists as a $P$-elementary function in $x_1$ and $x_2$.
This condition was used in Theorem~\ref{theorem_main}, and hence allowed for closed form computations of IMSPE and T-IMSPE.

Let us see this definition in action for the special case of the squared exponential covariance function with lengthscale of one and signal variance of one and a Gaussian measure, as already considered in Subsection~\ref{subsection_imspe}.
The rest of this appendix section vastly generalizes this example.

\begin{example}\label{motivating_example}
	According to Formula~\eqref{eq_imspe}, IMSPE selects the next measurement point $x_*\in\R^{1\times d}$ such that the averaged posterior variance
	\begin{align*}
		\int_{\R^d} k(x_r|x_*,x) \d\mu(x_r),
	\end{align*}
	is minimal for any chosen measure $\mu$ on $\R^{1\times d}$.
	Consider specifically $d=1$, the prior Gaussian process $g=\GP(0,k)$ with mean zero and covariance function
	\begin{align*}
		k(x_1,x_2)=\exp\left(-\frac{(x_1-x_2)^2}{2}\right),
	\end{align*}
	and $\mu$ the standard Gaussian distribution given by the (Lebesgue, denoted by $\lambda$) density \begin{align*}
		\frac{\d{\mu}}{\d{\lambda}}=\frac{1}{\sqrt{2\pi}}\exp\left(-\frac{x^2}{2}\right).
	\end{align*}
	Assume furthermore one existing measurement point at $x=1$.
	
	Now let us consider which point $x_*\in\R$ to choose next, according to IMSPE.
	\begin{align*}
		&\phantom{=}\int_{\R} k(x_r|x_*,1) \d\mu(x_r)\\
		&= \int_{\R} \left(k(x_r,x_r)-\begin{bmatrix}k(x_r,x_*)&k(x_r,1)\end{bmatrix}\begin{bmatrix}k(x_*,x_*) & k(x_*,1)\\k(1,x_*)&k(1,1)\end{bmatrix}^{-1}\begin{bmatrix}k(x_*,x_r)\\ k(1,x_r)\end{bmatrix}\right) \d\mu(x_r) \\
		&= \int_{\R} \left(1-\begin{bmatrix}k(x_r,x_*)&k(x_r,1)\end{bmatrix}\begin{bmatrix}1 & k(x_*,1)\\k(1,x_*)&1\end{bmatrix}^{-1}\begin{bmatrix}k(x_*,x_r)\\ k(1,x_r)\end{bmatrix}\right) \d\mu(x_r) \\
		&= 1-\int_{\R} \left(\frac{1}{1-k(1,x_*)^2}\begin{bmatrix}k(x_r,x_*)&k(x_r,1)\end{bmatrix}\begin{bmatrix}1 & -k(1,x_*)\\-k(1,x_*)&1\end{bmatrix}\begin{bmatrix}k(x_*,x_r)\\ k(1,x_r)\end{bmatrix}\right) \d\mu(x_r) \\
		&= 1-\frac{1}{1-k(1,x_*)^2}\int_{\R} \begin{bmatrix}k(x_r,x_*)&k(x_r,1)\end{bmatrix}\begin{bmatrix}1 & -k(1,x_*)\\-k(1,x_*)&1\end{bmatrix}\begin{bmatrix}k(x_*,x_r)\\ k(1,x_r)\end{bmatrix} \d\mu(x_r) \\
		&= 1-\frac{1}{1-k(1,x_*)^2}\int_{\R}\left( k(x_r,x_*)^2+k(x_r,1)^2-2k(1,x_*)k(x_r,x_*)k(x_r,1) \right)\d\mu(x_r)\\
		\intertext{This integral is computable, if we can compute the above products (or squares) of covariance functions.
			Computing this is closed form is precisely the condition of $k$ being $P$-elementary covariance marginalizable w.r.t.\ $\mu$.
			Luckily, this is possible for Gaussians, see also Example~\ref{example_ARD_SE_covariance_elementary} for a more general statement of this fact.}
		&= 1-\frac{1}{(1-k(1,x_*)^2)}\\
		&\phantom{=}\Bigg(\int_{\R} \exp\left(-\frac{(x_r-x_*)^2}{2}\right)^2\d\mu(x_r)+\int_{\R} \exp\left(-\frac{(x_r-1)^2}{2}\right)^2\d\mu(x_r)\\
		&\phantom{=}-2\int_{\R} \exp\left(-\frac{(1-x_*)^2}{2}\right)\exp\left(-\frac{(x_r-x_*)^2}{2}\right)\exp\left(-\frac{(x_r-1)^2}{2}\right) \d\mu(x_r)\Bigg)\\
		&= 1-\frac{1}{(1-k(1,x_*)^2)\sqrt{2\pi}}\\
		&\phantom{=}\Bigg(\int_{\R} \exp\left(-\frac{(x_r-x_*)^2}{2}\right)^2\exp\left(-\frac{x_r^2}{2}\right)\d{\lambda(x_r)}+\int_{\R} \exp\left(-\frac{(x_r-1)^2}{2}\right)^2\exp\left(-\frac{x_r^2}{2}\right)\d{\lambda(x_r)}\\
		&\phantom{=}-2\exp\left(-\frac{(1-x_*)^2}{2}\right)\int_{\R} \exp\left(-\frac{(x_r-x_*)^2}{2}\right)\exp\left(-\frac{(x_r-1)^2}{2}\right) \exp\left(-\frac{x_r^2}{2}\right)\d{\lambda(x_r)}\Bigg)\\
		&= 1-\frac{1}{(1-k(1,x_*)^2)\sqrt{2\pi}}\\
		&\phantom{=}\Bigg(\int_{\R} \exp\left(-(x_r-x_*)^2-\frac{x_r^2}{2}\right)\d{\lambda(x_r)}+\int_{\R} \exp\left(-(x_r-1)^2-\frac{x_r^2}{2}\right)\d{\lambda(x_r)}\\
		&\phantom{=}-2\exp\left(-\frac{(1-x_*)^2}{2}\right)\int_{\R} \exp\left(-\frac{(x_r-x_*)^2}{2}-\frac{(x_r-1)^2}{2}-\frac{x_r^2}{2}\right)\d{\lambda(x_r)}\Bigg)\\
		\intertext{Again, we stress that these integrals are computable in closed form.}
		&= 1-\frac{1}{(1-k(1,x_*)^2)\sqrt{2\pi}}\Bigg(\frac{\sqrt{2\pi}}{\sqrt{3}}\exp\left(-\frac{x_*^2}{3}\right)+\frac{\sqrt{2\pi}}{\sqrt{3}}\exp\left(-\frac{1}{3}\right)-
		\frac{2\sqrt{2\pi}}{\sqrt{3}}\exp\left(-\frac{1}{6}(5x_*^2-8x_*+5)\right)\Bigg)\\
		&= 1-\frac{1}{(1-\exp(-(1-x_*)^2)}\Bigg(\frac{1}{\sqrt{3}}\exp\left(-\frac{x_*^2}{3}\right)+\frac{1}{\sqrt{3}}\exp\left(-\frac{1}{3}\right)-
		\frac{2}{\sqrt{3}}\exp\left(-\frac{1}{6}(5x_*^2-8x_*+5)\right)\Bigg)\\
	\end{align*}
	The resulting acquisition function shows which values for $x_*$ are most suitable to yield information concentrated around zero with variance 1 (as specified by the measure $\mu$) for the given Gaussian process.
	We plot this function in Figure~\ref{figure_motivating_example} and discuss it in Subsection~\ref{subsection_imspe}.
	
\end{example}

This example has shown on the special case that we need to integrate certain products of covariance functions in closed form to compute IMSPE and T-IMSPE in closed form.
This was the definition of $P$-elementary covariance marginalizable covariance functions, which will be formally used in the proof of Theorem~\ref{theorem_main} in Appendix~\ref{appendix_proofs}, similar to the example above.

\subsection{Examples of \texorpdfstring{$P$}{P}-elementary covariance marginalizable covariance functions}

We now generalize this definition of a covariance function $k$ being $P$-elementary covariance marginalizable, such that it takes two covariance functions info consideration.
This generalization is necessary for taking sums of two covariance functions, see Proposition~\ref{proposition_sum_elementary}.
Our proofs below are given in this slightly more general framework.

\begin{definition}
    Let $\mu$ be a finite measure on $\R^d$.
    We call a pair $(k_1, k_2)$ of two covariance functions \textbf{$P$-elementary \emph{cross}-covariance marginalizable} w.r.t.\ $\mu$ if the integral $\int k_1(x_1,x_r)k_2(x_r,x_2)\d\mu(x_r)$ exists as a $P$-elementary function in $x_1$ and $x_2$.
\end{definition}

Obviously, a covariance function $k$ is $P$-elementary covariance marginalizable w.r.t.\ $\mu$ if $(k,k)$ is $P$-elementary cross-covariance marginalizable.
Hence, we will show some of the following results for $P$-elementary cross-covariance marginalizability, which implies these results for $P$-elementary covariance marginalizability.

The definition of $P$-elementary cross-covariance marginalizability is non-trivial, as many relevant covariance functions are elementary (cross-)covariance marginalizable w.r.t.\ relevant measures.
While we state the following examples with $P$ being PyTorch, these results should obviously hold for any reasonable programming framework.

\begin{example}\label{example_constant_covariance_elementary}
    Constant covariance functions $k_i(x_1,x_2)=c_i$ are PyTorch-elementary cross-covariance marginalizable w.r.t.\ any finite measure $\mu$.
    The integral
    \begin{align*}
        \int_D k_1(x_1,x_r)k_2(x_r,x_2)\d\mu(x_r)
        &= \int_D c_2c_2\d\mu(x_r)\\
        &= c_1c_2\cdot\mu(D)\in\R
    \end{align*}
    is a real number since $\mu$ is a finite measure.
    Hence, this real number can be represented by PyTorch.
\end{example}

\begin{example}\label{example_ARD_SE_covariance_elementary}
    Consider the automatic relevance detection squared exponential (SE) covariance function \begin{align*}
        k_{\text{SE},\sigma,\ell}(x,y)=\sigma^2\exp\left(-\frac{1}{2}\sum_{i=1}^d\frac{(x_i-y_i)^2}{\ell_i^2}\right).    
    \end{align*}
    Then, $k_{\text{SE},\sigma_1,\ell_1}$ and $k_{\text{SE},\sigma_2,\ell_2}$ are PyTorch-elementary cross-covariance marginalizable w.r.t.\ any continuous uniform distribution on a multidimensional interval or any Gaussian distribution.
    This holds, since any finite product of Gaussian functions is again Gaussian, leading to integrals involving the error function $\erf$ or more squared exponential functions.
    To prevent digging even deeper into gory details of indices, here we demonstrate this in the one-dimensional case for a uniform measure $\mu=c\cdot\mathbf{1}_{[a,b]}$:
    \begin{align*}
        &\int_\R k_{\text{SE},\sigma_1,\ell_1}(x_1,x_r)k_{\text{SE},\sigma_2,\ell_2}(x_r,x_2)\d\mu(x_r)\\
        =& \int_a^b c\cdot\sigma_1^2\exp\left(-\frac{1}{2}\frac{(x_1-x_r)^2}{\ell_1^2}\right)\cdot\sigma_2^2\exp\left(-\frac{1}{2}\frac{(x_2-x_r)^2}{\ell_2^2}\right)\d{\lambda(x_r)}\\
        =& c\sigma_1^2\sigma_2^2\cdot\int_a^b \exp\left(-\frac{1}{2}\frac{(x_1-x_r)^2}{\ell_1^2}\right)\cdot\exp\left(-\frac{1}{2}\frac{(x_2-x_r)^2}{\ell_2^2}\right)\d{\lambda(x_r)}\\
        =& c\sigma_1^2\sigma_2^2\cdot\int_a^b \exp\left(-\frac{\ell_2^2(x_1 - x_r)^2 - \ell_1^2(x_2 - x_r)^2}{2\ell_1^2\ell_2^2}\right)\d{\lambda(x_r)}\\
        =& c\sigma_1^2\sigma_2^2\cdot\int_a^b \exp\left(-(\ell_1^2+\ell_2^2)\cdot\left(x_r - \frac{2\ell_1^2x_2 + 2\ell_2^2x_1}{2(\ell_1^2 + \ell_2^2)}\right)^2 - \frac{\ell_1^2\ell_2^2(x_1 - x_2)^2}{\ell_1^2 + \ell_2^2}\right)\d{\lambda(x_r)}\\
        =& c\sigma_1^2\sigma_2^2\cdot\frac{\sqrt{\pi}\ell_1\ell_2\cdot\exp\left(-\frac{(x_1 - x_2)^2}{2(\ell_1^2 + \ell_2^2)}\right)\cdot\left(\erf\left(\frac{\ell_1^2(b-x_1) + \ell_2^2(b-x_2)}{\ell_1\ell_2\sqrt{2\ell_1^2 + 2\ell_2^2}}\right)-\erf\left(\frac{\ell_1^2(a-x_1) + \ell_2^2(a-x_2)}{\ell_1\ell_2\sqrt{2\ell_1^2 + 2\ell_2^2}}\right)\right)}{\sqrt{2\ell_1^2 + 2\ell_2^2}}\\
    \end{align*}
    The same computation for a Gaussian measure $\mu$ follows from an additional completion to the square and the multidimensional case follows from the one-dimensional case and Fubini's theorem on the order of integration.
    We refer to Appendix~\ref{appendix_SE_IMSPE} for more details on this covariance function for T-IMSPE and IMSPE.
\end{example}

\begin{example}\label{example_more_elementary}
    Using the Risch-Algorithm as implemented in Maple, polynomial covariance functions, half-integer Mat\'ern covariance functions, and Wiener process covariance are PyTorch-elementary covariance marginalizable w.r.t.\ both continuous uniform distribution on an interval or any Gaussian distribution.
    Additionally, these three covariance functions, together with constant and squared exponential covariance functions, are all pairwise PyTorch-elementary cross-covariance marginalizable.

    The proof of this is easy to understand on a high level, although a detailed technical proof ends in a gory fight against indices, case distinctions, and factors\footnote{We recommend implementations of the Risch-Algorithm as implemented in Maple, Maxima, or Mathematica to solve these integrals.}.
    First, using Fubini's theorem, everything reduces to the one-dimensional case.
    Second, case distinction in the minimum-function in the Wiener process covariance function is easily dealt with by splitting the one-dimensional integrals at its positions of non-differentiability; this results in integrals where only one case appears.
    Afterwards, the Wiener process is just a polynomial covariance.
    Now, the integrands are just products of polynomials and exponential functions with at most quadratic exponents.
    These integrals can be solved by---potentially ugly---combinations of partial integration, completion to squares in exponents, and usage of the Gaussian error function, similar to Example~\ref{example_ARD_SE_covariance_elementary}.
\end{example}

\newcommand{\STAB}[1]{\begin{tabular}{@{}c@{}}#1\end{tabular}}
\newcommand{\myrot}[1]{\STAB{\rotatebox[origin=c]{90}{#1}}}

\begin{table}[tb]
	\caption{These tables shows pairs of standard covariance functions and whether they are PyTorch-elementary cross-covariance marginalizable  w.r.t.\ any continuous uniform distribution on a multidimensional interval (left table) or any Gaussian distribution (right table). The reasons for this summary are given in the examples \ref{example_ARD_SE_covariance_elementary}, \ref{example_more_elementary}, \ref{example_non_elementary}, and \ref{example_elementary_cosine}. The negative results stemm from the fact, that Risch's algorithm in Maple failed to compute the necessary integrals in Maple in closed form. A checkmark on the diagonal means that a covariance function is PyTorch-elementary covariance marginalizable. For the negative results in brackets for the cosine covariance function, see the comment in Example~\ref{example_elementary_cosine}. For the extension from cosine covariance functions to random Fourier features (RFF) see Example~\ref{example_fourier}.}
	\centering
	\medskip
	\footnotesize
	\tabcolsep=0.13cm
	\begin{tabular}{c|cccccccc}
		\makecell{uniform\\distribution} & \myrot{ polynomial } & \myrot{SE} & \myrot{Wiener} & \myrot{cosine} & \myrot{RFF} & \myrot{Mat\'ern} & \myrot{RQ}& \myrot{periodic} \\
		\hline
		polynomial & \checkmark & \checkmark & \checkmark & \checkmark & \checkmark & \checkmark & - & - \\
		SE & \checkmark & \checkmark & \checkmark & (-)  & (-) & \checkmark & - & - \\
		Wiener process & \checkmark & \checkmark & \checkmark & \checkmark & \checkmark & \checkmark & - & - \\
		cosine & \checkmark & (-) & \checkmark & \checkmark & \checkmark & \checkmark & - & - \\
		RFF & \checkmark & (-) & \checkmark & \checkmark & \checkmark & \checkmark & - & - \\
		Mat\'ern & \checkmark & \checkmark & \checkmark & \checkmark & \checkmark & \checkmark & - & - \\
		RQ & - & - & - & - & - & - & - & - \\
		Periodic & - & - & - & - & - & - & - & - \\
	\end{tabular}
	\hspace{1em}
	\begin{tabular}{c|cccccccc}
		\makecell{Gaussian\\distribution} & \myrot{ polynomial } & \myrot{SE} & \myrot{Wiener} & \myrot{cosine} & \myrot{RFF} & \myrot{Mat\'ern} & \myrot{RQ}& \myrot{periodic} \\
		\hline
		polynomial & \checkmark & \checkmark & \checkmark & \checkmark & \checkmark & \checkmark & - & - \\
		SE & \checkmark & \checkmark & \checkmark & \checkmark & \checkmark & \checkmark & - & - \\
		Wiener process & \checkmark & \checkmark & \checkmark & (-) & (-) & \checkmark & - & - \\
		cosine & \checkmark & \checkmark & (-) & \checkmark & \checkmark & (-) & - & - \\
		RFF & \checkmark & \checkmark & (-) & \checkmark & \checkmark & (-) & - & - \\
		Mat\'ern & \checkmark & \checkmark & \checkmark & (-) & (-) &  \checkmark & - & - \\
		RQ & - & - & - & - & - & - & - & - \\
		Periodic & - & - & - & - & - & - & - & - \\
	\end{tabular}
	\label{table_cross_covariance}
\end{table}

\begin{example}\label{example_non_elementary}
    Neither the rational quadratic (RQ) nor the periodic covariance function seem to be\footnote{At least neither the authors nor---and much more important---the implementation of the Risch algorithm in Maple were able to compute these integrals in closed form.} PyTorch-elementary covariance marginalizable w.r.t.\ non-degenerate continuous uniform distributions on a multidimensional interval or non-degenerate Gaussian distributions, let alone PyTorch-elementary cross-covariance marginalizable w.r.t.\ any of the standard covariance functions.
\end{example}

\begin{example}\label{example_elementary_cosine}
    Consider the PyTorch-elementary (cross-)covariance marginalizability of the cosine covariance function w.r.t.\ a uniform distribution on an interval.
    It is easily recognized as PyTorch-elementary covariance marginalizable, as the integral basically reduces to a square of cosines.
    Similarly, Risch's algorithm in Maple is able to verify that the cosine covariance function is PyTorch-elementary cross-covariance marginalizable together with constant, linear, polynomial, and Wiener process covariance functions.
    More interestingly, it seems that the pair of squared exponential and cosine covariance function is not PyTorch-elementary cross-covariance marginalizable w.r.t.\ both continuous uniform distribution on an interval or any Gaussian distribution.
    While Maple can find an antiderivative for the defining integral in closed form, this antiderivative involves complex error functions\footnote{%
The following Python code shows, that the error function in PyTorch does not support complex arguments in a recent version. \\
\texttt{>>> torch.\_\_version\_\_\\
'2.0.1'\\
>>> torch.erf(torch.tensor(complex(1,1)))\\
Traceback (most recent call last):\\
  File "<stdin>", line 1, in <module>\\
RuntimeError: "erf\_vml\_cpu" not implemented for 'ComplexFloat'
}
    }, which are not implemented in current versions of PyTorch.
    Perhaps, this pair will be PyTorch-elementary cross-covariance marginalizable in a future version of PyTorch.

    Now consider the cosine covariance function and its PyTorch-elementary (cross-)covariance marginalizable w.r.t.\ a Gaussian distribution.
    Even showing that it is PyTorch-elementary covariance marginalizable needs some manual intervention to simplify the results of Risch's algorithm in Maple\footnote{
    For simplicity we consider a special case where parameters are set to 1.
    The following Maple code computes of the relevant integral for the PyTorch-elementary covariance marginalizability of the cosine covariance function, subtracts the intended results, and shows that this difference is zero. A direct computation seems impossible.\\
    \begin{samepage}
        \texttt{> simplify(\\
        \phantom{aaaaa} int(cos(x-xr)*cos(y-xr)*exp(-xr\string^2),xr=-infinity..infinity)\\
        \phantom{aaaaa} -sqrt(Pi)/2*cos(x-y)-cos(x+y)*sqrt(Pi)/2*exp(-1));}
        \vspace{-1em}
        \[0\]
    \end{samepage}
    }.
    As the squared exponential covariance function is a Gaussian, this trick works to show that the pair of cosine covariance function and squared exponential covariance function are PyTorch-elementary cross-covariance marginalizable.
    The pair of cosine covariance and polynomial covariance is easily seen as PyTorch-elementary cross-covariance marginalizable using repeated partial integration.
    Similar to above, we get complex error functions when trying to compute the integrals for the Pytorch-elementary cross-covariance marginalizability of the cosine covariance when paired with either half-integer Mat\'ern covariance or Wieder process covariance, rendering these two pairs not Pytorch-elementary cross-covariance marginalizability in current version of PyTorch.
\end{example}

We summarize the results of the previous examples \ref{example_ARD_SE_covariance_elementary}, \ref{example_more_elementary}, \ref{example_non_elementary}, and \ref{example_elementary_cosine} in Table~\ref{table_cross_covariance}.
We do not recommend to reproduce most of these integrals without the help of a computer algebra system.

\subsection{A cookbook of constructing \texorpdfstring{$P$}{P}-elementary covariance marginalizable covariance functions}\label{appendix_cookbook}

Knowing the base covariance functions, which are $P$-elementary (cross-)covariance marginalizable, we now consider the standard constructions of composite covariance functions.
We construct a cookbook that states which of these constructions results in $P$-elementary (cross-)covariance marginalizable covariance functions.
As a first example, scaling covariance functions does not change their $P$-elementary cross-covariance marginalizability.

\begin{proposition}[Scaling covariance functions]\label{proposition_scaling_elementary}
    If a pair $(k_1,k_2)$ of two covariance functions is $P$-elementary cross-covariance marginalizable w.r.t.\ some measure $\mu$ then so is $(\sigma_1^2k_1,\sigma_2^2k_2)$ for any $\sigma_1,\sigma_2>0$.
\end{proposition}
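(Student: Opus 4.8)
The plan is to reduce the claim to the defining integral for $(k_1,k_2)$ by pulling the constant scaling factors outside the integral, and then invoke the closure properties of $P$-elementary functions from Definition~\ref{def_elementary}. First I would observe that $\sigma_1^2 k_1$ and $\sigma_2^2 k_2$ are again covariance functions, since multiplying a positive semidefinite kernel by a positive scalar preserves positive semidefiniteness; this is needed so that the statement is even well-posed.

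Next, the core computation: for fixed $x_1,x_2$ we have
\begin{align*}
	\int \sigma_1^2 k_1(x_1,x_r)\, \sigma_2^2 k_2(x_r,x_2)\,\d\mu(x_r)
	= \sigma_1^2\sigma_2^2 \int k_1(x_1,x_r)\, k_2(x_r,x_2)\,\d\mu(x_r),
\end{align*}
by linearity of the integral (the scalar $\sigma_1^2\sigma_2^2$ does not depend on $x_r$, and the integral on the right exists by hypothesis, so the manipulation is legitimate). By assumption the right-hand integral, viewed as a function of $(x_1,x_2)$, is $P$-elementary. Then I would note that $\sigma_1^2\sigma_2^2$ is a constant, hence a constant function available by Definition~\ref{def_elementary}\ref{elementary_functions}, and that $P$-elementary functions are closed under multiplication by Definition~\ref{def_elementary}\ref{elementary_rational}; therefore the product $\sigma_1^2\sigma_2^2 \cdot \left(\int k_1(x_1,x_r)k_2(x_r,x_2)\d\mu(x_r)\right)$ is again $P$-elementary in $x_1$ and $x_2$. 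This is exactly the condition that $(\sigma_1^2 k_1,\sigma_2^2 k_2)$ be $P$-elementary cross-covariance marginalizable w.r.t.\ $\mu$.

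There is no real obstacle here — the statement is essentially a bookkeeping consequence of linearity of integration together with the algebraic closure axioms of $P$-elementary functions. The only point requiring a moment's care is making sure the interchange of scalar and integral is justified (which it is, since the integrand is a scalar multiple of an integrable function), and observing that the $P$-elementarity is preserved because the closure under rational maps in particular covers multiplication by a fixed real constant.
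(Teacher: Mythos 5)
Your argument is correct and is essentially the paper's proof spelled out in detail: the paper dismisses the statement as "obvious from the linearity of integrals," and you make precisely that linearity step explicit and then note closure of $P$-elementary functions under multiplication by constants. Nothing differs in substance, so no further comparison is needed.
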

\begin{proof}
    The proof is obvious from the linearity of integrals.
\end{proof}

Multiplying covariance functions on independent inputs keeps $P$-elementary cross-covariance marginalizability.

\begin{proposition}[Covariance functions with independent inputs]\label{proposition_independent_inputs_elementary}
    Consider a direct sum decomposition $\R^d\cong\R^{d_1}\oplus\R^{d_2}$ with $\mu_1$ a finite measure on $\R^{d_1}$ and $\mu_2$ a finite measure on $\R^{d_2}$.
    Assume that a covariance function $k$ factors over this decomposition, i.e.\ there are two covariance functions $k_i:\R^{d_i}\oplus\R^{d_i}\to\R$, $i=1,2$, with $k(x,x')=k_1(x_1,x'_1)\cdot k_2(x_2,x'_2)$ for $x=(x_1,x_2)\in \R^{d_1}\oplus\R^{d_2}$ and $x'=(x'_1,x'_2)\in \R^{d_1}\oplus\R^{d_2}$.
    If $k_1$ resp.\ $k_2$ are $P$-elementary cross-covariance marginalizable w.r.t.\ $\mu_1$ resp.\ $\mu_2$, then so is $k$ w.r.t.\ $\mu_1\otimes\mu_2$.
\end{proposition}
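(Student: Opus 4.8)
The plan is to reduce the integral that defines $P$-elementary covariance marginalizability of $k$ to a \emph{product} of the two integrals witnessing the marginalizability of $k_1$ and $k_2$, exploiting the fact that both the kernel $k = k_1 \cdot k_2$ and the measure $\mu = \mu_1 \otimes \mu_2$ are products along the decomposition $\R^d \cong \R^{d_1} \oplus \R^{d_2}$, and then observing that the class of $P$-elementary functions is closed under the two operations (composition with coordinate projections, and multiplication) needed to glue the two pieces into a function of $(x, x')$.

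First I would write a generic reference point as $x_r = (x_{r,1}, x_{r,2}) \in \R^{d_1} \oplus \R^{d_2}$, and likewise $x = (x_1, x_2)$, $x' = (x_1', x_2')$. Using $k(x, x_r) = k_1(x_1, x_{r,1})\, k_2(x_2, x_{r,2})$ and the analogous identity for $k(x_r, x')$, the integrand $k(x, x_r)\, k(x_r, x')$ factors as $\bigl[k_1(x_1, x_{r,1})\, k_1(x_{r,1}, x_1')\bigr] \cdot \bigl[k_2(x_2, x_{r,2})\, k_2(x_{r,2}, x_2')\bigr]$, a product of a function of $x_{r,1}$ alone and a function of $x_{r,2}$ alone. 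Integrating against $\mu_1 \otimes \mu_2$ and applying Fubini then yields
$$\int_{\R^d} k(x, x_r)\, k(x_r, x')\, d\mu(x_r) = \Bigl(\int_{\R^{d_1}} k_1(x_1, x_{r,1})\, k_1(x_{r,1}, x_1')\, d\mu_1(x_{r,1})\Bigr)\Bigl(\int_{\R^{d_2}} k_2(x_2, x_{r,2})\, k_2(x_{r,2}, x_2')\, d\mu_2(x_{r,2})\Bigr).$$
By hypothesis the first factor is a $P$-elementary function $F_1$ of $(x_1, x_1') \in \R^{2d_1}$ and the second a $P$-elementary function $F_2$ of $(x_2, x_2') \in \R^{2d_2}$. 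Precomposing $F_1$ with the linear coordinate projection $\R^{2d} \to \R^{2d_1}$, $(x_1,x_2,x_1',x_2') \mapsto (x_1,x_1')$, and $F_2$ with the corresponding projection $\R^{2d} \to \R^{2d_2}$ — these projections are rational maps, available by Definition~\ref{def_elementary}\ref{elementary_rational}, and composition is allowed by \ref{elementary_composition} — and then multiplying the two results (again closure under multiplication, Definition~\ref{def_elementary}\ref{elementary_rational}) exhibits the whole expression as a $P$-elementary function of $(x, x')$. That is exactly the assertion that $k$ is $P$-elementary covariance marginalizable w.r.t.\ $\mu_1 \otimes \mu_2$; the stated pair version $(k, \tilde k)$ with $k = k_1 k_2$, $\tilde k = \tilde k_1 \tilde k_2$ follows verbatim by replacing the second occurrence of each $k_i$ by $\tilde k_i$.

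The one step requiring actual care — and the main (if mild) obstacle — is the invocation of Fubini, since covariance functions are not assumed nonnegative. I would handle this by first applying Tonelli's theorem to the nonnegative integrand $\lvert k_1(x_1, x_{r,1})\, k_1(x_{r,1}, x_1')\rvert \cdot \lvert k_2(x_2, x_{r,2})\, k_2(x_{r,2}, x_2')\rvert$, which factors the $\mu_1 \otimes \mu_2$-integral of absolute values into a product of two one-variable integrals of absolute values; each of these is finite precisely because $k_1$ resp.\ $k_2$ being $P$-elementary cross-covariance marginalizable entails that the defining integral exists (as an absolutely convergent integral). With absolute integrability established, Fubini legitimizes the factorization displayed above, and everything else is routine bookkeeping with the closure properties of Definition~\ref{def_elementary}.
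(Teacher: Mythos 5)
Your proof is correct and follows essentially the same route as the paper, whose entire argument is the one-line remark that the claim "follows directly from Fubini's theorem on the order of integration." Your additional care in factoring the integrand, invoking Tonelli to justify Fubini, and checking the closure properties of Definition~\ref{def_elementary} merely spells out details the paper leaves implicit.
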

\begin{proof}
    This follows directly from Fubini's theorem on the order of integration.
\end{proof}

One special case of this previous result is that the measure $\mu$ might be chosen as a product measure of a bounded uniform measure in some coordinate directions and a Gaussian measure in different coordinate directions.

The class of $P$-elementary covariance marginalizable covariance functions is not closed under addition, for example the sum of a cosine covariance function and various other covariance functions is not Pytorch-elementary covariance marginalizable w.r.t. uniform or Gaussian measures. 
However, the addition of pairs of $P$-elementary covariance marginalizable covariance functions sometimes results in $P$-elementary covariance marginalizable covariance functions.

\begin{proposition}[Sums of covariance functions]\label{proposition_sum_elementary}
    Assume that a pair $(k_1,k_2)$ of two covariance functions is $P$-elementary \emph{cross}-covariance marginalizable and both $k_1$ and $k_2$ are $P$-elementary covariance marginalizable w.r.t.\ some measure $\mu$.
    Then, $k_1+k_2$ is $P$-elementary covariance marginalizable.
\end{proposition}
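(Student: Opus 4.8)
The plan is to expand the integral defining $P$-elementary covariance marginalizability of $k_1+k_2$ by bilinearity of the integrand in the two covariance factors. Concretely, $(k_1+k_2)(x_1,x_r)(k_1+k_2)(x_r,x_2)$ expands as a sum of four products: $k_1(x_1,x_r)k_1(x_r,x_2)$, $k_1(x_1,x_r)k_2(x_r,x_2)$, $k_2(x_1,x_r)k_1(x_r,x_2)$, and $k_2(x_1,x_r)k_2(x_r,x_2)$. Integrating against $\mu$ and using linearity of the integral, the integral $\int (k_1+k_2)(x_1,x_r)(k_1+k_2)(x_r,x_2)\,\d\mu(x_r)$ splits into the sum of the four corresponding integrals, provided each of them exists.

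First I would note that the first and fourth integrals exist as $P$-elementary functions in $x_1,x_2$ by the hypothesis that $k_1$ and $k_2$ are each $P$-elementary covariance marginalizable w.r.t.\ $\mu$. Next, the second integral $\int k_1(x_1,x_r)k_2(x_r,x_2)\,\d\mu(x_r)$ exists as a $P$-elementary function by the hypothesis that $(k_1,k_2)$ is $P$-elementary cross-covariance marginalizable w.r.t.\ $\mu$. For the third integral $\int k_2(x_1,x_r)k_1(x_r,x_2)\,\d\mu(x_r)$, I would observe that covariance functions are symmetric, so $k_2(x_1,x_r)=k_2(x_r,x_1)$ and $k_1(x_r,x_2)=k_1(x_2,x_r)$; thus the third integral equals $\int k_1(x_2,x_r)k_2(x_r,x_1)\,\d\mu(x_r)$, which is the cross-covariance integral with the roles of $x_1$ and $x_2$ swapped, hence also $P$-elementary (composing a $P$-elementary function with the coordinate swap, a rational — indeed linear — map, stays $P$-elementary by Definition~\ref{def_elementary}\ref{elementary_composition} and \ref{elementary_rational}).

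Finally, I would invoke closure of $P$-elementary functions under addition (Definition~\ref{def_elementary}\ref{elementary_rational}) to conclude that the sum of these four $P$-elementary functions is $P$-elementary, which is exactly the statement that $k_1+k_2$ is $P$-elementary covariance marginalizable w.r.t.\ $\mu$. I do not anticipate a serious obstacle here; the only point requiring a modicum of care is the bookkeeping for the third integral, where the symmetry of covariance functions must be used to recognize it as the cross-covariance integral with swapped arguments rather than as a genuinely new integral. One should also remark that no positivity or convergence subtlety arises beyond what is already packaged into the two hypotheses: each of the four pieces is assumed (or shown) to exist as a $P$-elementary function, and $P$-elementary functions form a set closed under the finitely many arithmetic operations used.
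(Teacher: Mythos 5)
Your proposal is correct and follows essentially the same route as the paper's proof: expand the product into four integrals, note that each exists as a $P$-elementary function by the two hypotheses, and conclude by closure of $P$-elementary functions under addition. Your extra step of justifying the swapped cross term via symmetry of covariance functions and composition with the coordinate swap is a small point of care that the paper simply absorbs into "by assumption," but it does not change the argument.
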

\begin{proof}
    Consider
    \begin{align*}
        &\int (k_1(x_1,x_r)+k_2(x_1,x_r))(k_1(x_2,x_r)+k_2(x_2,x_r))\d\mu(x_r)\\
        &=\int k_1(x_1,x_r)k_1(x_2,x_r)+k_1(x_1,x_r)k_2(x_2,x_r)\\
        &\phantom{=\int} +k_2(x_1,x_r)k_1(x_2,x_r)+k_2(x_1,x_r)k_2(x_2,x_r)\d\mu(x_r)\\
        &=\int k_1(x_1,x_r)k_1(x_2,x_r)\d\mu(x_r)+\int k_1(x_1,x_r)k_2(x_2,x_r)\d\mu(x_r)\\&\quad+\int k_2(x_1,x_r)k_1(x_2,x_r)\d\mu(x_r)+\int k_2(x_1,x_r)k_2(x_2,x_r)\d\mu(x_r)
    \end{align*}
    All four integrals exist as $P$-elementary functions by assumption.
    Since the class of $P$-elementary functions is closed under addition, the proposition holds.
\end{proof}

\begin{example}\label{example_fourier}
	By this proposition, all said for the cosine covariance function in Example~\ref{example_elementary_cosine} also holds for covariance functions from random Fourier features \citep{hensman2018variational}, as these are just specific linear combinations of cosine covariance functions.
\end{example}

As a direct corollary, if three or more covariance functions are all $P$-elementary covariance marginalizable and pairwise $P$-elementary cross-covariance marginalizable w.r.t.\ some finite measure $\mu$, then so is their sum.
Looking at Table~\ref{table_cross_covariance} we hence see that e.g.\ the sum of a polynomial covariance function, a Wiener process, and a squared exponential covariance function is $P$-elementary covariance marginalizable.

Adding measures keeps $P$-elementary cross-covariance marginalizability.

\begin{proposition}[Linear combinations of measures]\label{proposition_sum_measure_elementary}
    If a pair $(k_1,k_2)$ of two covariance functions is $P$-elementary cross-covariance marginalizable w.r.t.\ both the measures $\mu_1$ and $\mu_2$, then so is $(k_1,k_2)$ w.r.t.\ $a_1\mu_1+a_2\mu_2$ for $a_1,a_2\in\R$.
\end{proposition}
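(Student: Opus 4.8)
The plan is to reduce everything to linearity of the Lebesgue integral in its measure argument, together with the closure of $P$-elementary functions under scaling and addition recorded in Definition~\ref{def_elementary}.\ref{elementary_rational} — exactly the two properties already exploited in the proof of Proposition~\ref{proposition_sum_elementary}.

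First I would address the meaning of $a_1\mu_1+a_2\mu_2$. For $a_1,a_2\ge 0$ this is again a finite measure on $\R^d$, so the expression $\int k_1(x_1,x_r)k_2(x_r,x_2)\d(a_1\mu_1+a_2\mu_2)(x_r)$ from Definition~\ref{definition_covariance_marginalizable} is literally well defined. For general $a_1,a_2\in\R$ one either interprets $a_1\mu_1+a_2\mu_2$ as a finite signed measure (Jordan/Hahn decomposition) or simply decomposes it into positive and negative parts; I would add one clarifying sentence here, since Definition~\ref{definition_covariance_marginalizable} was phrased for finite (unsigned) measures.

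The computation itself is a single line: by linearity of the integral in the measure,
\[
\int k_1(x_1,x_r)k_2(x_r,x_2)\d(a_1\mu_1+a_2\mu_2)(x_r)
= \sum_{i=1}^{2} a_i\int k_1(x_1,x_r)k_2(x_r,x_2)\d\mu_i(x_r).
\]
By hypothesis each integral on the right-hand side exists as a $P$-elementary function of $(x_1,x_2)$; multiplying by the constants $a_i$ and summing stays within the class of $P$-elementary functions, which is precisely the assertion.

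There is essentially no obstacle: this is a bookkeeping statement whose content is entirely the two closure properties above. The only thing deserving a line of care is the real (rather than nonnegative) linear combination of finite measures, handled as indicated; beyond that I would keep the proof to the three lines sketched here. The value of isolating the proposition is downstream, in the cookbook — e.g.\ forming mixtures of Gaussians, or combining a uniform measure on part of the domain with a Gaussian on another part (cf.\ Proposition~\ref{proposition_independent_inputs_elementary}) — where one wants to know that $P$-elementary covariance marginalizability is stable under such combinations of weighting measures.
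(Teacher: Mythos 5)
Your proof is correct and matches the paper's own argument, which simply invokes linearity of the integral (in the measure) together with closure of $P$-elementary functions under scaling and addition. Your extra remark on interpreting $a_1\mu_1+a_2\mu_2$ as a signed measure for general real coefficients is a reasonable clarification that the paper glosses over, but the substance of the argument is the same.
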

\begin{proof}
    The proof is obvious from the linearity of integrals.
\end{proof}

This fact allows to use IMSPE and T-IMSPE in domains that are no hyperrectangles, as long as the domain can be reasonably approximated by hyperrectangles.
We can even subtract measures this way, e.g.\ to cut out parts of an area.
One could even use negative measures as obtained from this proposition to avoid getting information in certain areas.

\section{Proofs of closed form computability of IMSPE and T-IMSPE}\label{appendix_proofs}

We provide our proof of Theorem~\ref{theorem_main} about the computability of IMSPE and T-IMSPE for $P$-elementary covariance marginalizable covariance functions and its corollaries.
We do not claim novelty to the idea of this proof, since similar proofs exist for the special case of polynomial covariance functions since \citep{sacks1989designs}.
The discussion of additional special cases are spread over the subsequent literature, see Subsection~\ref{sec:criteria}, e.g.\ Lemma~3.1 in \citep{binois2019replication}.
In comparison to previous proofs, we provide a longer and more explicit proof.

\begin{proof}[Proof of Theorem~\ref{theorem_main}]
	
	Consider the $1\times1$-matrix $k(x_r|x_*,x)$ in the integral
	\begin{align*}
		\int k(x_r|x_*,x)\d\mu(x_r).
	\end{align*}
	from the definition of IMSPE.
	We can make the integrand more explicit as
	\begin{align*}
		k(x_r|x,x_*)
		&=k(x_r,x_r)-k(x_r,(x,x_*))k((x,x_*),(x,x_*))^{-1}k((x,x_*),x_r)\\
		&=k(x_r,x_r)-\begin{bmatrix}k(x_r,x)&k(x_r,x_*)\end{bmatrix}\begin{bmatrix}k(x,x)&k(x,x_*)\\k(x_*,x)&k(x_*,x_*)\end{bmatrix}^{-1}\begin{bmatrix}k(x,x_r)\\k(x_*,x_r)\end{bmatrix}
	\end{align*}
	The Schur complement
	\begin{align*}
		S_*=k(x_*,x_*)-k(x_*,x)k(x,x)^{-1}k(x,x_*),
	\end{align*}
	the scalar $\sigma_*^2=k(x_*,x_*)$, and the vector
	\begin{align*}
		L:=k(x_*,x)k(x,x)^{-1}=(k(x,x)^{-1}k(x,x_*))^T.
	\end{align*}
	allows to describe the above matrix inverse in closed form.
	\begin{align*}
		&\begin{bmatrix}k(x,x)&k(x,x_*)\\k(x_*,x)&k(x_*,x_*)\end{bmatrix}^{-1}\\
		&= \begin{bmatrix}k(x,x)^{-1}+k(x,x)^{-1}k(x,x_*)S_*^{-1}k(x_*,x)k(x,x)^{-1}&-k(x,x)^{-1}k(x,x_*)S_*^{-1}\\-S_*^{-1}k(x_*,x)k(x,x)^{-1}&S_*^{-1}\end{bmatrix}\\
		&= \begin{bmatrix}k(x,x)^{-1}+S_*^{-1}L^TL&-S_*^{-1}L\\-S_*^{-1}L&S_*^{-1}\end{bmatrix}\\
		&= S_*^{-1}\begin{bmatrix}S_*k(x,x)^{-1}+L^TL&-L^T\\-L&1\end{bmatrix}
	\end{align*}
	
	Having a closed form of the inverse, we can write
	\begin{align*}
		&k(x_r|x,x_*)\\
		&=k(x_r,x_r)-\begin{bmatrix}k(x_r,x)&k(x_r,x_*)\end{bmatrix}\begin{bmatrix}k(x,x)&k(x,x_*)\\k(x_*,x)&k(x_*,x_*)\end{bmatrix}^{-1}\begin{bmatrix}k(x,x_r)\\k(x_*,x_r)\end{bmatrix}\\
		&=k(x_r,x_r)-\begin{bmatrix}k(x_r,x)&k(x_r,x_*)\end{bmatrix}S_*^{-1}\begin{bmatrix}S_*k(x,x)^{-1}+L^TL&-L^T\\-L&1\end{bmatrix}\begin{bmatrix}k(x,x_r)\\k(x_*,x_r)\end{bmatrix}\\
		&=\sigma_*^2-S_*^{-1}\begin{bmatrix}k(x_r,x)&k(x_r,x_*)\end{bmatrix}\begin{bmatrix}S_*k(x,x)^{-1}+L^TL&-L^T\\-L&1\end{bmatrix}\begin{bmatrix}k(x,x_r)\\k(x_*,x_r)\end{bmatrix}\\
		&=\sigma_*^2-S_*^{-1}\begin{bmatrix}k(x_r,x)&k(x_r,x_*)\end{bmatrix}\begin{bmatrix}S_*k(x,x)^{-1}k(x,x_r)+L^TLk(x,x_r)-L^Tk(x_*,x_r)\\-Lk(x,x_r)k(x_*,x_r)\end{bmatrix}\\
		&=\sigma_*^2-S_*^{-1}\big(S_*k(x_r,x)k(x,x)^{-1}k(x,x_r)+k(x_r,x)L^TLk(x,x_r)\\
		&\phantom{=\sigma_*^2-S_*^{-1}}-k(x_r,x)L^Tk(x_*,x_r)-k(x_r,x_*)Lk(x,x_r)+k(x_r,x_*)k(x_*,x_r)\big)\\
		&=\sigma_*^2-S_*^{-1}\big(S_*k(x_r,x)k(x,x)^{-1}k(x,x_r)\\
		&\phantom{=\sigma_*^2-S_*^{-1}}+k(x_r,x)k(x,x)^{-1}k(x,x_*)k(x_*,x)k(x,x)^{-1}k(x,x_r)\\
		&\phantom{=\sigma_*^2-S_*^{-1}}-k(x_r,x)k(x,x)^{-1}k(x,x_*)k(x_*,x_r)\\
		&\phantom{=\sigma_*^2-S_*^{-1}}-k(x_r,x_*)k(x_*,x)k(x,x)^{-1}k(x,x_r)+k(x_r,x_*)k(x_*,x_r)\big)
	\end{align*}
	Writing $k(x,x)=C^TC$ as Cholesky decomposition, $C_v^T:=k(x_r,x)/C^T$, $C_v:=C\backslash k(x,x_r)$, $C^T_*:=k(x_*,x)/C^T$, $C_*:=C\backslash k(x,x_*)$, and the correlation $c=k(x_r,x_*)=k(x_*,x_r)$ this simplies to
	\begin{align*}
		&k(x_r|x,x_*)\\
		&=\sigma_*^2-S_*^{-1}\big(
		S_*k(x_r,x)/C^TC\backslash k(x,x_r)+k(x_r,x)/C^T C\backslash k(x,x_*)k(x_*,x)/C^TC\backslash k(x,x_r)\\
		&\phantom{\sigma_*^2-S_*^{-1}}
		-k(x_r,x)/C^TC\backslash k(x,x_*)k(x_*,x_r)
		-k(x_r,x_*)k(x_*,x)/C^TC\backslash k(x,x_r)\\
		&\phantom{\sigma_*^2-S_*^{-1}}+k(x_r,x_*)k(x_*,x_r)
		\big)\\
		&=\sigma_*^2-S_*^{-1}\left(
		S_* C_v^TC_v+C_v^T C_*C_*^TC_v
		-cC_v^TC_*
		-cC_*^TC_v
		+c^2
		\right)\\
		&=\sigma_*^2
		-C_v^TC_v-S_*^{-1}C_v^T C_*C_*^TC_v
		+S_*^{-1}cC_v^TC_*
		+S_*^{-1}cC_*^TC_v
		-S_*^{-1}c^2
	\end{align*}
	
	Now
	\begin{align}\label{eq_six_integrals}\tag{I}
		&\int k(x_r|x_*,x)\d\mu(x_r)\nonumber\\
		\begin{split}\nonumber
			&=
			\underbrace{\int \sigma_*^2 \d\mu(x_r)}_{(\text{I}.1)}
			-\underbrace{\int C_v^TC_v\d\mu(x_r)}_{(\text{I}.2)}
			-\underbrace{\int S_*^{-1}C_v^T C_*C_*^TC_v \d\mu(x_r)}_{(\text{I}.3)}\\
			& \quad \quad \quad
			+\underbrace{\int S_*^{-1}cC_v^TC_* \d\mu(x_r)}_{(\text{I}.4)}
			+\underbrace{\int S_*^{-1}cC_*^TC_v \d\mu(x_r)}_{(\text{I}.5)}
			-\underbrace{\int S_*^{-1}c^2 \d\mu(x_r)}_{(\text{I}.6)}
		\end{split}
	\end{align}
	Here, $c^2$ and $S_*^{-1}$ are scalars, $C_v$ is a vector of linear combinations of $k(x_r,x)$, $C_*$ is a vector of linear combinations of $k(x_*,x)$.
	This means that all entries are either constant or a constant times a product of $k(x_r,z_1)\cdot k(x_r,z_2)$.
	All linear combinations and all constants can be computed via numerical linear algebra, mostly using the Cholesky decomposition.
	In particular, these integrals, considered as functions in $x_*$, are $P$-elementary, since $k$ is $P$-elementary covariance marginalizable.

	T-IMSPE is a way of applying IMSPE to specific GPs.
	Hence, the claim for T-IMSPE follows from that of IMSPE.
\end{proof}

\begin{proof}[Proof of Corollary~\ref{corollary_main_complexity}]
	Initially, we compute a $O(n^3)$ Cholesky decomposition of the data covariance matrix $k(x,x)$.
	For each $x_*$, We make $O(n^2)$ evaluations of closed form functions and a small finite number of forward and backward substitutions of Cholesky factors, each computable in $O(n^2)$.
\end{proof}

The statistics literature, see e.g.\ \citep{gramacy2020surrogates}, is also concerned with avoiding the Cholesky decomposition in $O(n^3)$, which is possible by rank-one-update formulas of the covariance matrices.
These computational improvements are only possible when hyperparameters are not retrained between iterations.

\section{Detailed explicit formulas of T-IMSPE and IMSPE for squared exponential covariance functions}\label{appendix_SE_IMSPE}

We give very explicit formulas for T-IMSPE and IMPSE when using squared exponential covariance function.
Therefore, consider the 6 integrals in \eqref{eq_six_integrals} on page~\pageref{eq_six_integrals} independently.
We assume a squared exponential covariance function
\begin{align*}
    k:\R^d\times\R^d\to\R: (x_1,x_2)
    &\mapsto \prod_{h=1}^d\sigma^2\exp\left(-\frac{1}{2}\frac{((x_1)_h-(x_2)_h)^2}{\ell_h^2}\right)\\
    &=\sigma^2\exp\left(-\frac{1}{2}\sum_{h=1}^d\frac{((x_1)_h-(x_2)_h)^2}{\ell_h^2}\right)
\end{align*}
with automatic relevance determination.
For simplicity, we assume $\mu$ to be a probability distribution, i.e.\ $\mu(\R^d)=1$, and later we will also specifically concentrate on Gaussian measures and uniform measures.

\subsection{Integral (I.1)}

$\int\sigma_*^2\d\mu(x_r)=\sigma_*^2=k(x_*,x_*)$.
This term is independent of $x_*$, as $k$ is stationary.

\subsection{Integral (I.2)}

\begin{align*}
    &\int-C_v^TC_v\d\mu(x_r)\\
    &=-\int k(x_r,x)/C^T C\backslash k(x,x_r) \d\mu(x_r)\\
    &=-\int k(x_r,x)k(x,x)^{-1} k(x,x_r) \d\mu(x_r)\\
    &=-\sum_{i,j=1}^n\int k(x_r,x_i)\left(k(x,x)^{-1}\right)_{i,j} k(x_j,x_r) \d\mu(x_r)\\
    &=-\sum_{i,j=1}^n\left(k(x,x)^{-1}\right)_{i,j}\int k(x_r,x_i) k(x_j,x_r) \d\mu(x_r)\\
    &=-\sum_{i,j=1}^n\left(k(x,x)^{-1}\right)_{i,j}\cdot \sigma^4\cdot\int \prod_{h=1}^d\exp\left(-\frac{1}{2}\frac{((x_r)_h-(x_i)_h)^2+((x_j)_h-(x_r)_h)^2}{\ell_h^2}\right) \d\mu(x_r)\\
    &=-\sum_{i,j=1}^n\left(k(x,x)^{-1}\right)_{i,j}\cdot \sigma^4\cdot\prod_{h=1}^d\int \exp\left(-\frac{1}{2}\frac{((x_r)_h-(x_i)_h)^2+((x_j)_h-(x_r)_h)^2}{\ell_h^2}\right) \d\mu(x_r)
\end{align*}
This should be integrable easily, e.g. for $\mu$ the continuous uniform distribution on $[a_1,b_1]\times \ldots \times [a_d,b_d]$ this integral evaluates to 
\begin{align*}
    &-\sum_{i,j=1}^n\left(k(x,x)^{-1}\right)_{i,j}\cdot \sigma^4\cdot\sqrt{\pi}^d\\
    &\quad\cdot\prod_{h=1}^d\ell_h\cdot\exp\left(-\frac{((x_j)_h - (x_i)_h)^2}{4\ell_h^2}\right)\cdot\frac{\erf\left(\frac{2a_h - (x_j)_h - (x_i)_h}{2\ell_h}\right) - \erf\left(\frac{2b_h - (x_j)_h - (x_i)_h}{2\ell_h}\right)}{2a_h - 2b_h}
\end{align*}
or for $\mu=\N(m,\operatorname{diag}(s))$ this integral evaluates to 
\begin{align*}
    -\sum_{i,j=1}^n\left(k(x,x)^{-1}\right)_{i,j}\cdot \sigma^4\cdot\prod_{h=1}^d\ell_h\cdot\frac{\exp\left(\frac{-\ell_h^2((x_j)_h-m_h)^2-\ell_h^2((x_i)_h-m_h)^2 - s_h^2((x_j)_h - (x_i)_h)^2}{2\ell_h^2(\ell_h^2 + 2s_h^2)}\right)}{\sqrt{\ell_h^2 + 2s_h^2}}.
\end{align*}
This term is independent of $x_*$. 


\subsection{Integral (I.3)}

\begin{align*}
    &\int-S_*^{-1}C_v^T C_*C_*^TC_v\d\mu(x_r)\\
    &=-S_*^{-1}\int C_v^T C_*C_*^TC_v\d\mu(x_r)\\
    &=-S_*^{-1}\int k(x_r,x)/C^T C\backslash k(x,x_*) k(x_*,x)/C^T C\backslash k(x,x_r) \mu(x_r)\\
    &=-S_*^{-1}\int k(x_r,x)\cdot (C^T \backslash C\backslash k(x,x_*)) (k(x_*,x)/C^T /C)\cdot k(x,x_r) \mu(x_r)\\
    &=-S_*^{-1}\int k(x_r,x)\cdot (\kappa \kappa^T) \cdot k(x,x_r) \mu(x_r)\\
    &=-S_*^{-1}\sum_{i,j=1}^n\int k(x_r,x_i)\cdot \kappa_i\kappa_j \cdot k(x_j,x_r) \mu(x_r)\\
    &=-S_*^{-1}\sum_{i,j=1}^n \kappa_i\kappa_j\int k(x_r,x_i)\cdot k(x_j,x_r) \mu(x_r)\\
    &=-S_*^{-1}\sigma^4\sum_{i,j=1}^n \kappa_i\kappa_j\int \prod_{h=1}^d\exp\left(-\frac{1}{2}\frac{(x_r-x_i)^2+(x_j-x_r)^2}{\ell^2}\right)\mu(x_r)\\
    &=-S_*^{-1}\sigma^4\sum_{i,j=1}^n \kappa_i\kappa_j\prod_{h=1}^d\int \exp\left(-\frac{1}{2}\frac{(x_r-x_i)^2+(x_j-x_r)^2}{\ell^2}\right)\mu(x_r)
\end{align*}
for $\kappa=\left(C^T\backslash C\backslash k(x,x_*)\right)=k(x,x)^{-1}k(x,x_*)$.
This should be integrable easily, e.g. for $\mu$ the continuous uniform distribution on $[a_1,b_1]\times \ldots \times [a_d,b_d]$ this integral evaluates to 
\begin{align*}
    &-\sum_{i,j=1}^n S_*^{-1}\sigma^4\kappa_i\kappa_j\sqrt{\pi}^d\cdot\prod_{h=1}^d\ell_h\\
    & \quad \quad \cdot\prod_{h=1}^d\exp\left(-\frac{((x_j)_h - (x_i)_h)^2}{4\ell_h^2}\right)\cdot\frac{\erf\left(\frac{2a_h - (x_j)_h - (x_i)_h}{2\ell_h}\right) - \erf\left(\frac{2b_h - (x_j)_h - (x_i)_h}{2\ell_h}\right)}{2a_h - 2b_h}
\end{align*}
or for $\mu=\N(m,\operatorname{diag}(s))$ this integral evaluates to 
\begin{align*}
    -\sum_{i,j=1}^n S_*^{-1}\sigma^4\kappa_i\kappa_j\cdot\prod_{h=1}^d\ell_h\cdot\frac{\exp\left(\frac{-\ell_h^2((x_j)_h-m_h)^2-\ell_h^2((x_i)_h-m_h)^2 - s_h^2((x_j)_h - (x_i)_h)^2}{2\ell_h^2(\ell_h^2 + 2s_h^2)}\right)}{\sqrt{\ell_h^2 + 2s_h^2}}.
\end{align*}
This term depends on $x_*$ (via $\kappa)$. 


\subsection{Integral (I.4)}

\begin{align*}
    &\int S_*^{-1}cC_v^TC_*\mu(x_r)\\
    &=\int S_*^{-1}k(x_*,x_r)k(x_r,x)/C^T C\backslash k(x,x_*)\mu(x_r)\\
    &=S_*^{-1}\sigma^2\int\prod_{h=1}^d\exp\left(-\frac{1}{2}\frac{((x_*)_h-(x_r)_h)^2}{\ell_h^2}\right)k(x_r,x)/C^T C\backslash k(x,x_*)\mu(x_r)\\
    &=S_*^{-1}\sigma^2\int\prod_{h=1}^d\exp\left(-\frac{1}{2}\frac{((x_*)_h-(x_r)_h)^2}{\ell_h^2}\right)k(x_r,x)\cdot C^T\backslash C\backslash k(x,x_*)\mu(x_r)\\
    &=S_*^{-1}\sigma^2\sum_{i=1}^n \kappa_i\cdot\int\prod_{h=1}^d\exp\left(-\frac{1}{2}\frac{((x_*)_h-(x_r)_h)^2}{\ell_h^2}\right)k(x_r,x_i)\mu(x_r)\\
    &=\sum_{i=1}^n \kappa_i\cdot S_*^{-1}\sigma^4\cdot\int\prod_{h=1}^d\exp\left(-\frac{1}{2}\frac{((x_*)_h-(x_r)_h)^2+((x_r)_h-(x_i)_h)^2}{\ell_h^2}\right)\mu(x_r)\\
    &=\sum_{i=1}^n \kappa_i\cdot S_*^{-1}\sigma^4\cdot\prod_{h=1}^d\int\exp\left(-\frac{1}{2}\frac{((x_*)_h-(x_r)_h)^2+((x_r)_h-(x_i)_h)^2}{\ell_h^2}\right)\mu(x_r)
\end{align*}
for $\kappa=\left(C^T\backslash C\backslash k(x,x_*)\right)=k(x,x)^{-1}k(x,x_*)$.
This should be integrable easily, e.g. for $\mu$ the continuous uniform distribution on $[a_1,b_1]\times \ldots \times [a_d,b_d]$ this integral evaluates to 
\begin{align*}
    &\sum_{i=1}^n\kappa_i\cdot S_*^{-1}\sigma^4\cdot\sqrt{\pi}^d\cdot\prod_{h=1}^d\ell_h\\
    & \quad \quad \cdot\prod_{h=1}^d\exp\left(-\frac{((x_*)_h - (x_i)_h)^2}{4\ell_h^2}\right)\cdot\frac{\erf\left(\frac{2a_h - (x_*)_h - (x_i)_h}{2\ell_h}\right) - \erf\left(\frac{2b_h - (x_*)_h - (x_i)_h}{2\ell_h}\right)}{2a_h - 2b_h}
\end{align*}
or for $\mu=\N(m,\operatorname{diag}(s))$ this integral evaluates to 
\begin{align*}
    \sum_{i=1}^n\kappa_i\cdot S_*^{-1}\sigma^4\cdot\prod_{h=1}^d\ell_h\cdot\frac{\exp\left(\frac{-\ell_h^2((x_*)_h-m_h)^2-\ell_h^2((x_i)_h-m_h)^2 - s_h^2((x_*)_h - (x_i)_h)^2}{2\ell_h^2(\ell_h^2 + 2s_h^2)}\right)}{\sqrt{\ell_h^2 + 2s_h^2}}.
\end{align*}
This term depends on $x_*$. 


\subsection{Integral (I.5)}

$\int S_*^{-1}cC_*^TC_v\mu(x_r)$.
This is the same integral as the last one.

\subsection{Integral (I.6)}

\begin{align*}
    \int-S_*^{-1}c^2\d\mu(x_r)
    &=\int-S_*^{-1}k(x_r,x_*)^2\d\mu(x_r)\\
    &=\int-S_*^{-1}\sigma^4\prod_{h=1}^d\exp\left(-\frac{1}{2}\frac{((x_r)_h-(x_*)_h)^2}{\ell_h^2}\right)^2\d\mu(x_r)\\
    &=-S_*^{-1}\sigma^4\prod_{h=1}^d\int\exp\left(-\frac{((x_r)_h-(x_*)_h)^2}{\ell_h^2}\right)^2\d\mu(x_r)\\
\end{align*}
This is easily integrable, e.g. for $\mu$ the continuous uniform distribution on $[a_1,b_1]\times \ldots \times [a_d,b_d]$ this integral evaluates to
\begin{align*}
    -S_*^{-1}\sigma^4\cdot\sqrt{\pi}^d\cdot\prod_{h=1}^d\ell_h\cdot \frac{\erf(\frac{a_h - (x_*)_h}{\ell_h}) - \erf(\frac{b_h - (x_*)_h}{\ell_h})}{2a_h-2b_h}
\end{align*}
or for $\mu=\N(m,\operatorname{diag}(s))$ this integral evaluates to
\begin{align*}
    - S_*^{-1}\sigma^4\cdot\prod_{h=1}^d\frac{\ell_h\cdot\exp(-\frac{(m_h - (x_*)_h)^2}{\ell_h^2 + 2s_h^2})}{\sqrt{\ell_h^2 + 2s_h^2}}.
\end{align*}
This term depends on $x_*$. 


\subsection{All terms \texorpdfstring{in \eqref{eq_six_integrals}}{} together}

For $\mu$ the continuous uniform distribution on $[a,b]$ we have:

{\allowdisplaybreaks
\begin{align*}
    &\int k(x_r|x_*,x)\d\mu(x_r)\\
    &=\sigma_*^2\\
    &\phantom{=}-\sum_{i,j=1}^n\left(k(x,x)^{-1}\right)_{i,j}\cdot \sigma^4\cdot\sqrt{\pi}^d\cdot\prod_{h=1}^d\ell_h\\
    & \quad \quad \cdot\prod_{h=1}^d\exp\left(-\frac{((x_j)_h - (x_i)_h)^2}{4\ell_h^2}\right)\cdot\frac{\erf\left(\frac{2a_h - (x_j)_h - (x_i)_h}{2\ell_h}\right) - \erf\left(\frac{2b_h - (x_j)_h - (x_i)_h}{2\ell_h}\right)}{2a_h - 2b_h}\\
    &\phantom{=}-\sum_{i,j=1}^n S_*^{-1}\sigma^4\kappa_i\kappa_j\sqrt{\pi}^d\cdot\prod_{h=1}^d\ell_h\\
    & \quad \quad \cdot\prod_{h=1}^d\exp\left(-\frac{((x_j)_h - (x_i)_h)^2}{4\ell_h^2}\right)\cdot\frac{\erf\left(\frac{2a_h - (x_j)_h - (x_i)_h}{2\ell_h}\right) - \erf\left(\frac{2b_h - (x_j)_h - (x_i)_h}{2\ell_h}\right)}{2a_h - 2b_h}\\
    &\phantom{=}+2\sum_{i=1}^n\kappa_i\cdot S_*^{-1}\sigma^4\cdot\sqrt{\pi}^d\cdot\prod_{h=1}^d\ell_h\\
    & \quad \quad \cdot\prod_{h=1}^d\exp\left(-\frac{((x_*)_h - (x_i)_h)^2}{4\ell_h^2}\right)\cdot\frac{\erf\left(\frac{2a_h - (x_*)_h - (x_i)_h}{2\ell_h}\right) - \erf\left(\frac{2b_h - (x_*)_h - (x_i)_h}{2\ell_h}\right)}{2a_h - 2b_h}\\
    &\phantom{=}-S_*^{-1}\sigma^4\cdot\sqrt{\pi}^d\cdot\prod_{h=1}^d\ell_h\cdot \frac{\erf(\frac{a_h - (x_*)_h}{\ell_h}) - \erf(\frac{b_h - (x_*)_h}{\ell_h})}{2a_h-2b_h}\\
    &=\sigma_*^2\\
    &\phantom{=}-\sum_{i,j=1}^n\left(\left(k(x,x)^{-1}\right)_{i,j}+S_*^{-1}\kappa_i\kappa_j\right)\cdot \sigma^4\cdot\sqrt{\pi}^d\cdot\prod_{h=1}^d\ell_h\\
    & \quad \quad \cdot\prod_{h=1}^d\exp\left(-\frac{((x_j)_h - (x_i)_h)^2}{4\ell_h^2}\right)\cdot\frac{\erf\left(\frac{2a_h - (x_j)_h - (x_i)_h}{2\ell_h}\right) - \erf\left(\frac{2b_h - (x_j)_h - (x_i)_h}{2\ell_h}\right)}{2a_h - 2b_h}\\
    &\phantom{=}+2\sum_{i=1}^n\kappa_i\cdot S_*^{-1}\sigma^4\cdot\sqrt{\pi}^d\cdot\prod_{h=1}^d\ell_h\\
    & \quad \quad \cdot\prod_{h=1}^d\exp\left(-\frac{((x_*)_h - (x_i)_h)^2}{4\ell_h^2}\right)\cdot\frac{\erf\left(\frac{2a_h - (x_*)_h - (x_i)_h}{2\ell_h}\right) - \erf\left(\frac{2b_h - (x_*)_h - (x_i)_h}{2\ell_h}\right)}{2a_h - 2b_h}\\
    &\phantom{=}-S_*^{-1}\sigma^4\cdot\sqrt{\pi}^d\cdot\prod_{h=1}^d\ell_h\cdot \frac{\erf(\frac{a_h - (x_*)_h}{\ell_h}) - \erf(\frac{b_h - (x_*)_h}{\ell_h})}{2a_h-2b_h}
\end{align*}
}

For a multivariate normal $\mu=\N(m,s)$ where $s=\diag(s_1,\ldots,s_d)\in\R^{d\times d}$ we have:

{\allowdisplaybreaks
\begin{align*}
    &\int k(x_r|x_*,x)\d\mu(x_r)\\
    &=\sigma_*^2\\
    &\phantom{=}-\sum_{i,j=1}^n\left(k(x,x)^{-1}\right)_{i,j}\cdot \sigma^4\cdot\prod_{h=1}^d\ell_h\cdot\frac{\exp\left(\frac{-\ell_h^2((x_j)_h-m_h)^2-\ell_h^2((x_i)_h-m_h)^2 - s_h^2((x_j)_h - (x_i)_h)^2}{2\ell_h^2(\ell_h^2 + 2s_h^2)}\right)}{\sqrt{\ell_h^2 + 2s_h^2}}\\
    &\phantom{=}-\sum_{i,j=1}^n S_*^{-1}\sigma^4\kappa_i\kappa_j\cdot\prod_{h=1}^d\ell_h\cdot\frac{\exp\left(\frac{-\ell_h^2((x_j)_h-m_h)^2-\ell_h^2((x_i)_h-m_h)^2 - s_h^2((x_j)_h - (x_i)_h)^2}{2\ell_h^2(\ell_h^2 + 2s_h^2)}\right)}{\sqrt{\ell_h^2 + 2s_h^2}}\\
    &\phantom{=}+2\sum_{i=1}^n\kappa_i\cdot S_*^{-1}\sigma^4\cdot\prod_{h=1}^d\ell_h\cdot\frac{\exp\left(\frac{-\ell_h^2((x_*)_h-m_h)^2-\ell_h^2((x_i)_h-m_h)^2 - s_h^2((x_*)_h - (x_i)_h)^2}{2\ell_h^2(\ell_h^2 + 2s_h^2)}\right)}{\sqrt{\ell_h^2 + 2s_h^2}}\\
    &\phantom{=}-S_*^{-1}\sigma^4\cdot\prod_{h=1}^d\frac{\ell_h\cdot\exp(-\frac{(m_h - (x_*)_h)^2}{\ell_h^2 + 2s_h^2})}{\sqrt{\ell_h^2 + 2s_h^2}}\\
    &=\sigma_*^2\\
    &\phantom{=}-\sum_{i,j=1}^n\left(\left(k(x,x)^{-1}\right)_{i,j}+S_*^{-1}\kappa_i\kappa_j\right)\cdot \sigma^4\cdot\prod_{h=1}^d\ell_h\\
    & \quad \quad \prod_{h=1}^d\cdot\frac{\exp\left(\frac{-\ell_h^2((x_j)_h-m_h)^2-\ell_h^2((x_i)_h-m_h)^2 - s_h^2((x_j)_h - (x_i)_h)^2}{2\ell_h^2(\ell_h^2 + 2s_h^2)}\right)}{\sqrt{\ell_h^2 + 2s_h^2}}\\
    &\phantom{=}+2\sum_{i=1}^n\kappa_i\cdot S_*^{-1}\sigma^4\cdot\prod_{h=1}^d\ell_h\cdot\frac{\exp\left(\frac{-\ell_h^2((x_*)_h-m_h)^2-\ell_h^2((x_i)_h-m_h)^2 - s_h^2((x_*)_h - (x_i)_h)^2}{2\ell_h^2(\ell_h^2 + 2s_h^2)}\right)}{\sqrt{\ell_h^2 + 2s_h^2}}\\
    &\phantom{=}-S_*^{-1}\sigma^4\cdot\prod_{h=1}^d\frac{\ell_h\cdot\exp(-\frac{(m_h - (x_*)_h)^2}{\ell_h^2 + 2s_h^2})}{\sqrt{\ell_h^2 + 2s_h^2}}
\end{align*}
}
\section{On Baselines}\label{appendix_baseline}

Here, we comment on our choice for entropy as the only baselines in the experimental Section~\ref{section_experiments}.
We chose entropy as (only) baseline acquisition function, since it is the state of the art for active learning and safe active learning in machine learning.
This is different to Bayesian optimization, where multiple acquisition functions are commonly used.
In Appendix~\ref{appendix_old_baselines} we give reasons why several classical methods are no suitable baselines.
Marginalized $\beta$-diversity and marginalized mutual information are acquisition functions we developed during the paper and do not appear in the literature.
They have no closed form, and their numerical approximations are very sub-par in our preliminary ablation experiments in Appendix~\ref{apendix_MI_beta}.
Furthermore, we comment the superiority of entropy over IMSPE for GPs with trainable hyperparameters as remarked in Subsection~\ref{subsection_imspe}, which excludes IMSPE as baseline.

\subsection{Classical baselines}\label{appendix_old_baselines}

ALM is just another name for the entropy, as described in Section~\ref{subsection_sal}.
D-optimal designs are a non-active version of entropy, as described in Section~\ref{sec:criteria}.
IMSE, ALC, IV, and IVAR are just other names of IMSPE, as commented in Section~\ref{sec:criteria}.
A-optimal designs and V-optimal designs are non-active versions of IMSPE in different interpretations, and hence not suitable for a comparison.

\subsection{Comparison of IMSPE to marginalizing \texorpdfstring{$\beta$}{beta}-diversity and marginalizing mutual information}\label{apendix_MI_beta}

Maximizing the mutual information $\I(x_*;x_r\mid x)$ \citep{krause2008near} between a new datapoint $x_*$ w.r.t.\ some points of interest $x_r$ conditioned on previously existing data $x$ is a classic choice to place sensors or to conduct measurements in active learning settings.
It is originally restricted to discrete sensor positions, where it leads to an NP hard decision problem.
Despite this restriction, applicability of mutual information  has increased as discussed in Subsection~\ref{sec:criteria}.

For active learning purposes we can replace the mutual information by the squared $\beta$-diversity\footnote{%
    The interpretation of $\beta$-diversity is that of the diversity between two groups.
    In our case, this we aim to minimize the diversity between $x_*$ and $x_r$, when conditioned on $x$.
} $D^2_\beta(x_*;x_r\mid x):=\exp(I(x_*;x_r))$ \citep{leinster2021entropy,van2019diversity,chiu2014phylogenetic}, as is is just a composition with a monotonous and injective map.
Instead of maximizing the (squared) $\beta$-diversity, we can also minimize the inverse squared\footnote{We were unable to find anything close to resembling a closed formula for any other power of the $\beta$-diversity.} $\beta$-diversity $D^{-2}_\beta(x_*;x_r|x)$.
Consider $x_*,x_r\in\R^{1\times d}$ and $x\in\R^{n\times d}$.

\begin{align*}
    D^{-2}_\beta(x_*;x_r\mid x)
    &=\exp(\I(x_*;x_r\mid x))^{-2}\\
    &= \exp\left(\H(x_r\mid x)-\H(x_r\mid x_*,x)\right)^{-2}\\
    &= \frac{\exp\left(\frac{d}{2}(1+\log(2\pi))+\frac{1}{2}\log\left(\det\left(k(x_r|x_*,x)\right)\right)\right)^2}{\exp\left(
    \left(\frac{d}{2}(1+\log(2\pi))+\frac{1}{2}\log\left(\det\left(k(x_r|x)\right)\right)\right)
    \right)^2}\\
    &= \frac{\exp\left(\log\left(\det\left(k(x_r|x_*,x)\right)\right)\right)}{\exp\left(\log\left(\det\left(k(x_r|x)\right)\right)\right)}\\
    &= \frac{\det\left(k(x_r|x_*,x)\right)}{\det\left(k(x_r|x)\right)}\\
    &= \frac{k(x_r|x_*,x)}{k(x_r|x)} & \text{determinant of 1x1-matrix}\\
\end{align*}

The denominator is a constant w.r.t.\ $x_*$ and can be disregarded when minimizing $D^{-2}_\beta(x_*;x_r|x)$ w.r.t.\ $x_*$.
We can write the numerator in terms of prior covariances:
\begin{align*}
    k(x_r|x_*,x) &= k(x_r,x_r)-k(x_r,(x_*,x))k((x_*,x),(x_*,x))^{-1}k((x_*,x),x_r)
\end{align*}

Minimizing this acquisition function obviously leads to choosing $x_*\approx x_r$, which is obviously a usually bad choice, since any real form ob optimization is ignored in favor of just choosing the reference point.
Similar phenomena appear for multiple reference points $x_r$, where each reference point leads to a local minima in the loss landscape of $x_*\mapsto k(x_r|x_*,x)$.
To prevent these attracting local minima in the loss landscape, one can minimize the above criteria when averaged over $x_r$: 
\begin{align*}
    \int D^{-2}_\beta(x_*;x_r\mid x) \d\mu(x_r)
    &=\int \frac{k(x_r|x_*,x)}{k(x_r|x)}\d\mu(x_r) 
    && \text{average inverse squared $\beta$-diversity}
    \\
    \int k(x_r|x_*,x) \d\mu(x_r) & && \text{IMSPE}
    \\
    \int -\I(x_*;x_r\mid x)\d\mu(x_r) & && \text{average negative mutual information}
\end{align*}
Here, we assume some (finite or probability) measure $\mu$.

The average negative mutual information and the IMSPE are connected via a Jensen gap.
\begin{align*}
    &\phantom{=}\int -\I(x_*;x_r\mid x)\d\mu(x_r)\\
    &= -\int\left(\frac{1}{2}\log\left(\det\left(k(x_r|x)\right)\right)-\frac{1}{2}\log\left(\det\left(k(x_r|x_*,x)\right)\right)\right)\d\mu(x_r)\\
    &= -\int\left(\frac{1}{2}\log\left(k(x_r|x)\right)-\frac{1}{2}\log\left(k(x_r|x_*,x)\right)\right)\d\mu(x_r)\\
    &=\frac{1}{2}\int\log\left(k(x_r|x_*,x)\right)\d\mu(x_r)-\frac{1}{2}\int\log\left(k(x_r|x)\right)\d\mu(x_r)\\
    &\stackrel{\mathclap{\mathrm{Jensen}}}{\le}\quad\frac{1}{2}\log\left(\int k(x_r|x_*,x)\d\mu(x_r)\right)-\underbrace{\frac{1}{2}\int\log\left(k(x_r|x)\right)\d\mu(x_r)}_{\text{constant in }x_*}\\
    &\stackrel{+}{=}\frac{1}{2}\log\left(\int k(x_r|x_*,x)\d\mu(x_r)\right) \quad \mathrm{(equal\ up\ to\ a\ constant)}
\end{align*}
Here, the latter line is just one half the log of the acquisition function IMSPE.
Hence, the mutual information and IMSPE are closely connected.
IMSPE has the advantage for continuous domains, as one can marginalize the reference points in closed form.

The comparison to the inverse squared $\beta$-divergence $D^{-2}_\beta(x_*;x_r\mid x)$ is more complicated.
While $D^{-2}_\beta(x_*;x_r\mid x)=\frac{k(x_r|x_*,x)}{k(x_r|x)}$ and $k(x_r|x_*,x)$ are equal up to a factor that is constant in $x_*$.
Sadly, this factor is not constant in $x_r$, over which we marginalize.
Hence, $\int k(x_r|x_*,x)\d\mu(x_r)$ and $\int D^{-2}_\beta(x_*;x_r\mid x) \d\mu(x_r)=\int \frac{k(x_r|x_*,x)}{k(x_r|x)} \d\mu(x_r)$ are differently weighed integrals.
In the latter case, one weighs reference points $x_r$ higher if $k(x_r|x)\approx0$, i.e.\ reference points near the data points.
We do not think that this is suitable in practice.
In addition to no closed form integrals and very suboptimal initial experiments, we did not pursue the $\beta$-diversity any further.

\begin{filecontents*}{data_numerical.csv}
0.352, 0.527, 0.651, 0.741, 0.807, 0.856, 0.893, 0.920, 0.940, 0.956, 0.967, 0.975
0.390, 0.621, 0.686, 0.739, 0.792, 0.894, 0.917, 0.946, 0.960, 0.986, 0.984, 0.995
0.377, 0.535, 0.659, 0.747, 0.854, 0.883, 0.902, 0.930, 0.948, 0.968, 0.985, 0.984
0.353, 0.511, 0.647, 0.756, 0.804, 0.853, 0.890, 0.923, 0.941, 0.961, 0.970, 0.983
0.5589500000, 0.8054750000, 0.9142125000, 0.9621875000, 0.9833125000, 0.9926406250, 0.9967187500, 0.9985546875, 0.9993750000, 0.9997070312, 0.9998535156, 0.9999511719
\end{filecontents*}

\begin{figure}[tb]
    \centering
    \begin{tikzpicture}
        \pgfplotstableread[col sep=comma]   {data_numerical.csv}\csvdata
    	\pgfplotstabletranspose\datatransposed{\csvdata} 
    	\begin{axis}[
    		boxplot/draw direction = y,
            xmin=0, xmax=12.5,
            ymin=-1.4, ymax=10.4,
    		axis x line* = bottom,
    		axis y line = left,
            ylabel near ticks,
            xlabel near ticks,
    		ymajorgrids,
    		xtick = {0,1,2,3,4,5,6,7,8,9,10,11},
    		xticklabels = {1,2,3,4,5,6,7,8,9,10,11,12},
    		ylabel = {log-odds average posterior variance},
    		xlabel = {dimensions},
    		ytick = {-1,0,1,2,3,4,5,6,7,8,9,10},
            legend style={
                nodes={scale=0.7, color=black, transform shape},
                at={(1.65,0.3)}
            },
            name=numerical_plot
          ]
            \addplot+[mark=*, color=red, mark options = {mark color=red}] table[y expr={ln(\thisrowno{5}/(1-\thisrowno{5}))}] {\datatransposed};
            \addlegendentry{entropy}
            \addplot+[mark=*, color=blue!70, mark options = {mark color=blue!70}] table[y expr={ln(\thisrowno{4}/(1-\thisrowno{4}))}] {\datatransposed};
            \addlegendentry{IMSPE (numerically approximated integral with 1024 points)}
            \addplot+[mark=*, color=blue!50, mark options = {mark color=blue!50}] table[y expr={ln(\thisrowno{3}/(1-\thisrowno{3}))}] {\datatransposed};
            \addlegendentry{IMSPE (numerically approximated integral with 256 points)}
            \addplot+[mark=*, color=blue!30, mark options = {mark color=blue!30}] table[y expr={ln(\thisrowno{2}/(1-\thisrowno{2}))}] {\datatransposed};
            \addlegendentry{IMSPE (numerically approximated integral with 64 points)}
            \addplot+[mark=*, color=blue, mark options = {mark color=blue}] table[y expr={ln(\thisrowno{1}/(1-\thisrowno{1}))}] {\datatransposed};
            \addlegendentry{IMSPE (closed form)}
        \end{axis}
    \end{tikzpicture}
    \caption{%
        For various dimensions $1\le d\le 12$, we consider the average posterior variance over the interval $[-1,1]^d$ of a GP after adding a two datapoints.
        The first datapoints is at the origin, and the second datapoint it optimized with 5 different acquisition functions.
        For a better visual representation, we print the average variance $\sigma_{avg}^2$, which is bounded between 0 and 1, in log-odds, i.e.\ as $\log\left(\frac{\sigma_{avg}^2}{1-\sigma_{avg}^2}\right)$, where smaller is still better.
        The prior GP is has a squared exponential covariance functions with all hyperparameters set to one.
        The entropy acquisition functions (red) is very suboptimal, since the second point is always placed at a position with maximal variance after adding the origin, which is the boundary of the domain.
        While the numerical approximations to the intergral in the definition of IMSPE (shades of light blue) come close to the symbolically computed IMSPE (dark blue), they rarely find the same optimum.
        For example in one dimension, the optimally places points are at $-1$ (entropy), $-0.633$ (IMSPE approximated with 64 points), $-0.598$ (IMSPE approximated with 256 points), $-0.536$ (IMSPE approximated with 1024 points), which are at best near the real optimum at $-0.532$ (closed form IMSPE).
    }
    \label{figure_numerical_integration}
\end{figure}
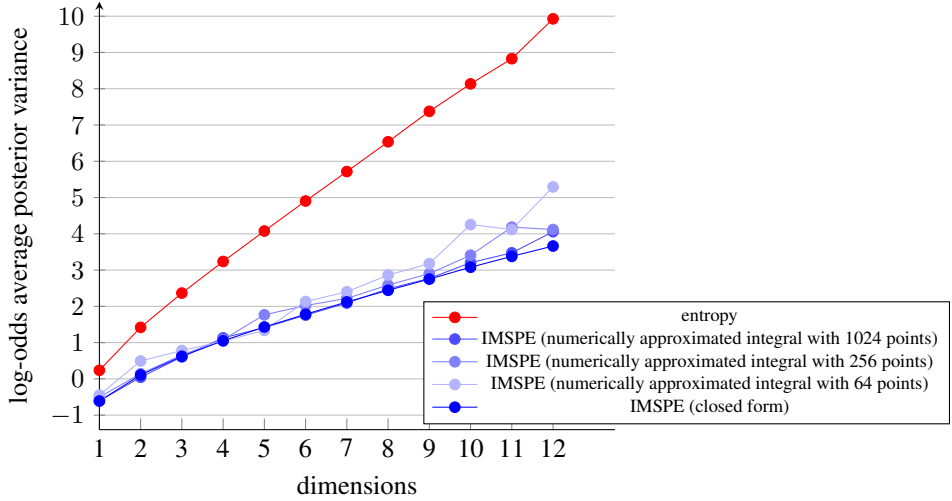

We elaborate on a preliminary empirical comparison between the marginalizations suggested above, and not only between (T-)IMSPE and entropy.
First, we were neither able to compute $\int -\I(x_*;x_r\mid x)\d\mu(x_r)$ nor $\int D^{-2}_\beta(x_*;x_r\mid x) \d\mu(x_r)$ in closed form for any choice of covariance function we tried.
Furthermore, numerical or MC approximations to these integrals suffer from a problem: the appearance of local optima at all points where the integral is numerically or stochastically evaluated.
In preliminary experiments, this led to the same suboptimal behavior as for finitely many reference points $x_r$.
More precisely, for the minimization of both $-I(x_*;x_r|x)$ and $D^{-2}_\beta(x_*;x_r\mid x)$, one chooses $x_*$ is a local optimum close to one of the $x_r$.
In fact, any data point in $x$ will repel $x_*$ and any point in $x_r$ will attract $x_*$.
This leads to many local optima, that are not optimal for active learning.
The many local optima are a problem for the optimization in safe active learning, as many more starts of the optimization are necessary, leading to drastically increased computation time.
Further increasing the number of reference points $x_r$ leads to more, but less pronounced, local extrema; this did not improve the situation.
In Figure~\ref{figure_numerical_integration} we see how optimal the acquisition functions entropy, IMSPE, and a numerical approximation of IMSPE are for placing a single data points in one to twelve dimension.

We cannot change the $x_r$ during a single optimization run, as this leads to unstable optimization.
Even choosing the $x_r$ different in every active learning step resulted in suboptimal exploration, as measurements are not prioritized by being near the safety boundary, but by being close to one of the $x_r$.

\section{Further results of the experiments}\label{appendix_further_results}

\subsection{Further results for the seasonal change experiment}\label{appendix_ablation_seasonal}



For the example about seasonal change from Subsection~\ref{subsection_seasonal}
the average amount of safe area that is recognized as safe area is also similar for entropy ($0.401\pm0.048$) and T-IMSPE ($0.394\pm0.039$).

\begin{filecontents*}{data_less_seasonal.csv}
0.382599631,	0.183242,	0.226440401,	0.272275422,	0.272841486,	0.280191331,	0.307345931,	0.345480072,	0.205695841,	0.289058508,	0.197232798,	0.375520204,	0.200738747,	0.316224958,	0.295519679,	0.273350863,	0.167183152,	0.304400123,	0.394876009,	0.287808909,	0.283132233,	0.266407985,	0.292895316,	0.204576814,	0.223722305
0.21591615,	0.319274142,	0.129885505,	0.159810077,	0.198110185,	0.118970521,	0.11645799,	0.219818948,	0.112523181,	0.135270608,	0.241568423,	0.133973597,	0.175843307,	0.182050239,	0.096763037,	0.236918672,	0.204154395,	0.212050684,	0.113371362,	0.180333039,	0.283162748,	0.118566894,	0.149393449,	0.132189834,	0.225947318
0.286655251,	0.315850695,	0.431247044,	0.248682296,	0.455216952,	0.404032392,	0.299073425,	0.344545468,	0.239411838,	0.300272788,	0.246282844,	0.482968749,	0.28134804,	0.278905459,	0.2552379,	0.355191416,	0.222472692,	0.336664467,	0.297318383,	0.312438768,	0.322170063,	0.322571444,	0.316655078,	0.331524478,	0.275606349
0.278021032,	0.228255258,	0.153553491,	0.411472939,	0.139685933,	0.304339445,	0.136769069,	0.26081629,	0.21446114,	0.522305636,	0.698164018,	0.342191828,	0.12187307,	0.373206045,	0.165501683,	0.238615652,	0.150254232,	0.293183136,	0.274425795,	0.33625093,	0.339328928,	0.326243879,	0.376891934,	0.215147817,	0.232996331
1.008199205,	0.409773051,	0.451144644,	0.823860542,	0.467752708,	0.708447559,	0.405932746,	0.413387795,	0.973653992,	1.089258503,	0.76141613,	1.298129264,	0.452507015,	0.430274779,	0.499921611,	0.421783054,	0.628509339,	0.342467458,	0.403247941,	0.887702409,	0.421896603,	0.36308311,	0.377721101,	0.347648082,	0.856032605
0.637628358,	0.640347297,	0.279552785,	0.777385704,	0.803222868,	1.233047775,	1.001293666,	0.910082041,	0.875832973,	0.417763047,	0.533602496,	1.345332284,	0.606566307,	0.470109748,	1.162276175,	0.768264929,	0.754116098,	0.710420824,	0.840240509,	0.630872636,	0.839178338,	0.791394699,	0.609097192,	0.555842321,	0.892460481
0.480618366,	0.789929827,	0.534067378,	0.435730979,	0.339740351,	0.407313406,	0.490664846,	0.583570031,	0.506293632,	0.513244661,	0.51500688,	0.708373527,	0.547286871,	0.611035671,	0.690239303,	0.608609374,	0.423589147,	0.583944255,	0.700850207,	0.517280103,	0.550781988,	0.765703229,	0.521396166,	0.42351739,	0.572992314
0.731084617,	0.398141765,	0.815163363,	0.730362138,	0.864332593,	0.674768577,	0.645964336,	0.542865816,	0.569243824,	0.595055142,	0.605788339,	0.944876522,	0.507476416,	0.758804252,	0.930861462,	0.775373493,	0.880872483,	0.571156817,	0.722920444,	0.710028646,	1.009258851,	0.775339273,	0.563327573,	0.667356391,	0.559210598
0.650054322,	0.30080967,	0.648992098,	0.331730503,	0.713351978,	0.541760363,	0.533149919,	0.676140489,	0.328166655,	0.555630819,	0.559111896,	0.340808007,	0.473230935,	0.660477106,	0.662322107,	0.455993702,	0.580278183,	0.552661667,	0.731159723,	0.546415224,	0.60681748,	0.552280141,	0.474962041,	0.4076115,	0.52369383
0.651371135,	0.723793054,	0.519125595,	0.677400488,	0.811070114,	0.592652626,	0.736032558,	0.409501435,	0.976097698,	0.639610593,	1.143335225,	0.614682225,	0.663254749,	0.379162276,	0.532303084,	0.563172442,	0.642496955,	0.67269901,	0.825994899,	0.778072793,	0.327898409,	0.661088809,	0.658215219,	0.724224191,	0.514719316
0.972431374,	0.589741907,	0.595320509,	0.536345524,	0.554151313,	0.575563036,	0.363535918,	0.380888835,	0.50798949,	0.538996867,	0.442032904,	0.749724336,	0.303013688,	0.493092469,	0.608452387,	0.65482826,	0.7212177,	0.385472834,	0.834033889,	0.662385288,	0.49871499,	0.662998315,	0.491767187,	0.620824918,	0.480742921
0.70230824,	0.638400498,	0.616033176,	0.722888853,	0.87363674,	0.503506462,	0.894638453,	1.034913027,	0.894176393,	0.726993938,	0.795534089,	0.625058468,	0.725341466,	0.417540392,	0.91633151,	0.494645975,	0.958556078,	0.78123525,	0.755691804,	0.88902879,	0.658480515,	0.870641228,	0.870908998,	1.372155633,	0.80552355
\end{filecontents*}

\pgfplotsset{
   boxplot/every box/.style={solid},
   boxplot/every median/.style={solid},
   boxplot/every whisker/.style={solid}
}

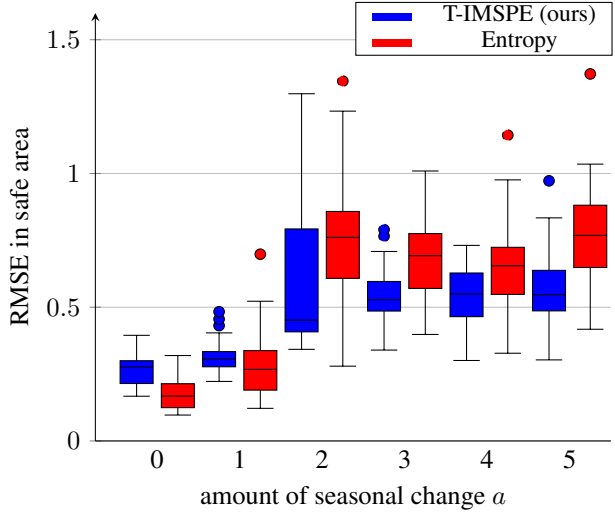
\begin{figure}[tb]
    \centering
    \begin{tikzpicture}
    	\pgfplotstableread[col sep=comma]{data_less_seasonal.csv}\csvdata
    	\pgfplotstabletranspose\datatransposed{\csvdata} 
    	\begin{axis}[
    		boxplot/draw direction = y,
            xmin=0, xmax=12.5,
            ymin=0, ymax=1.6,
    		axis x line* = bottom,
    		axis y line = left,
            ylabel near ticks,
            xlabel near ticks,
    		ymajorgrids,
    		xtick = {1.5, 3.5, 5.5, 7.5, 9.5, 11.5},
    		xticklabels = {
                0,
                1,
                2,
                3,
                4,
                5,
            },
    		xtick style = {draw=none}, 
    		ylabel = {RMSE in safe area},
    		xlabel = {amount of seasonal change $a$},
    		ytick = {0,0.5,1,1.5},
            name=boxplot
          ]
            \addplot+[boxplot, mark=*, mark options = {mark color=blue}, fill, draw=black, fill=blue] table[y index=1] {\datatransposed};
            \addplot+[boxplot, mark=*, mark options = {mark color=red}, fill, draw=black, fill=red] table[y index=2] {\datatransposed};
            \addplot+[boxplot, mark=*, mark options = {mark color=blue}, fill, draw=black, fill=blue] table[y index=3] {\datatransposed};
            \addplot+[boxplot, mark=*, mark options = {mark color=red}, fill, draw=black, fill=red] table[y index=4] {\datatransposed};
            \addplot+[boxplot, mark=*, mark options = {mark color=blue}, fill, draw=black, fill=blue] table[y index=5] {\datatransposed};
            \addplot+[boxplot, mark=*, mark options = {mark color=red}, fill, draw=black, fill=red] table[y index=6] {\datatransposed};
            \addplot+[boxplot, mark=*, mark options = {mark color=blue}, fill, draw=black, fill=blue] table[y index=7] {\datatransposed};
            \addplot+[boxplot, mark=*, mark options = {mark color=red}, fill, draw=black, fill=red] table[y index=8] {\datatransposed};
            \addplot+[boxplot, mark=*, mark options = {mark color=blue}, fill, draw=black, fill=blue] table[y index=9] {\datatransposed};
            \addplot+[boxplot, mark=*, mark options = {mark color=red}, fill, draw=black, fill=red] table[y index=10] {\datatransposed};
            \addplot+[boxplot, mark=*, mark options = {mark color=blue}, fill, draw=black, fill=blue] table[y index=11] {\datatransposed};
            \addplot+[boxplot, mark=*, mark options = {mark color=red}, fill, draw=black, fill=red] table[y index=12] {\datatransposed};
    	\end{axis}
        \node[draw,fill=white,inner sep=0pt,above left=0.5em] at (7.0,5.0)
        {\small
            \begin{tabular}{cc}
            \textcolor{blue}{\rule{15pt}{3pt}} & T-IMSPE (ours)\\
            \textcolor{red}{\rule{15pt}{3pt}} & Entropy
            \end{tabular}};
    \end{tikzpicture}
    \caption{%
        Box plots of the RMSE values in the safe area of the experiments from active learning with seasonal changes in Appendix~\ref{appendix_ablation_seasonal}.
        We compare the results of 25 runs between T-IMSPE (blue) and entropy (red) on for various values of the strength of the seasonal change $a$.
        We see that T-IMSPE is superior to entropy for larger values of $a$, whereas for smaller values of $a$ we see the superiority of the entropy.
    }
    \label{figure_less_seasonal_RMSE}
\end{figure}

For the example about seasonal change from Subsection~\ref{subsection_seasonal} we conducted an ablation study about the effect on dwindling superiority of T-IMSPE over entropy, once the seasonal effect weakens.
We consider learning a system with seasonal changes of various changes, again by rotating the domain in the function
\begin{align*}
    &\operatorname{McCormick}: (x_1,x_2)\mapsto\\
    &\quad\sin(x_1+x_2)+(x_1-x_2)^2-1.5x_1+2.5x_2+1,
\end{align*}
by the formula
\begin{align*}
\operatorname{ssnl}_a&:(t,x_1,x_2)\mapsto -1+\frac{1}{10}\operatorname{McCormick}\Bigl(\\
&\frac{1}{2}\cos\left(\frac{1}{2}\sin\left(\frac{at}{10}\right)\right)x_1-\frac{1}{2}\sin\left(\frac{1}{2}\sin\left(\frac{at}{10}\right)\right)x_2,\\
&\frac{1}{2}\sin\left(\frac{1}{2}\sin\left(\frac{at}{10}\right)\right)x_1+\frac{1}{2}\cos\left(\frac{1}{2}\sin\left(\frac{at}{10}\right)\right)x_2\Bigl),
\end{align*}
where $a\in\{0,1,2,3,4,5\}$ determines the strength (or speed) of the seasonal change.
Note that in Subsection~\ref{subsection_seasonal} we used $a=5$. 
The value $a=0$ corresponds to no seasonal change.
We keep all other parameters as in the main experiments.

In Figure~\ref{figure_less_seasonal_RMSE} we see the results of this experiment.
Entropy is superior for $a\in\{0,1\}$, whereas for $a\ge2$ we have that T-IMSPE results in lower RMSE values.
This is in accord with both the literature as discussed in Subsection~\ref{subsection_imspe} and the results in this paper: entropy is the superior acquisition function when temporal changes are non-existant or weak, whereas T-IMSPE is the superior acquisition function for applications with temporal changes.

It remains an open practical question how to detect, whether a system is sufficiently enough time changing, where we currently have no clear answer.
Computation time is comparable between T-IMSPE, entropy and IMSPE, so this is cannot guide a decision.
We checked the length scales in direction of time as criterion, and they were not sufficiently correlated with time variance.
Perhaps one could use model selection for Gaussian processes: if a GP that is constant in time direction is more suitable to model the data, then the system might not be sufficiently time variant; this is speculative.
It remains to guess for a potential user of T-IMSPE, entropy, or IMSPE, which acquisition function is more suitable in a situation.
If a system is noticably time variant, we suggest T-IMSPE.
Note that IMSPE behaves exactly as T-IMSPE if a system is constant in time; hence, it might be a safe choice to always use (T-)IMSPE.
In the borderline of mild or no time variance, all methods show comparable performance and our preliminary experiments point to a strong dependence on the dataset; in this overlap, the choice is not too relevant.

\subsection{Further results for the drift experiment}

\begin{filecontents*}{data_drift.csv}
2.970108187, 2.656604228, 2.749683763, 3.062047145, 3.11672089, 2.522237841, 2.684885741, 2.803780879, 2.833538489, 2.489111473, 2.43229469, 2.77833303, 2.605992131, 2.557201726, 2.390007832, 2.642991792, 2.63796049, 2.831044966, 2.926114344, 2.745760544, 3.346299092, 2.570700666, 3.027801819, 2.454172919, 2.430204829, 2.434401067, 2.779847695, 2.653434298, 2.62414538, 2.666805136, 2.419226226, 2.646737193, 2.890669091, 2.752349261, 2.513876162, 2.937121683, 2.512591383, 2.914819968, 2.550358692, 3.475291203, 3.345948812, 2.559799936, 2.76529986, 2.744674113, 3.869344204, 3.4401612, 2.889517999, 2.401288792, 2.732080894, 2.642553383
3.72971265, 3.044926574, 2.656997261, 2.799032591, 4.083994853, 2.952291492, 2.875376491, 3.470927051, 2.754761409, 2.532887318, 4.202398477, 2.894249122, 3.471802565, 2.882444466, 2.68510214, 3.044464802, 3.335118742, 3.597454488, 3.425680442, 2.942575165, 2.922248803, 2.90418996, 3.211661818, 2.753739584, 3.987556193, 3.233953956, 2.7037184, 3.017053802, 2.739573526, 2.721902973, 2.564893438, 3.273479392, 2.704000699, 3.037839579, 2.968768274, 2.812667176, 2.769866435, 2.92226311, 2.853897517, 3.102428315, 3.533326592, 2.796044754, 2.79853728, 3.417080991, 2.773134235, 2.662302876, 3.287491911, 3.103248203, 2.850274245, 2.981974568
2.659815708, 2.476123554, 1.752126636, 2.859577664, 1.774786303, 2.023083664, 2.01577703, 3.306074019, 2.057433216, 1.51784026, 1.786752368, 1.942336359, 2.161843014, 1.94991905, 2.140222068, 2.575267534, 2.456828539, 2.057404509, 1.599074098, 2.762130089, 1.745587384, 2.007138335, 3.180531695, 2.581475313, 1.044251782, 1.676090978, 2.968976986, 2.430449378, 2.017204762, 1.805422701, 1.432870739, 1.815196239, 3.129158141, 2.15881723, 1.614443242, 2.463070877, 2.175023546, 2.565392783, 2.114015212, 3.194168056, 1.927241048, 2.297319576, 2.40810108, 1.657793002, 2.294649923, 2.600318395, 1.986204665, 1.455619823, 2.510377075, 2.493368676
3.050559981, 2.986794076, 2.027685282, 3.185540471, 3.069333609, 2.461097545, 3.511621446, 3.781048779, 2.540429483, 1.519782347, 3.185057541, 2.503678436, 2.825937748, 2.425570117, 2.370889546, 3.44546562, 2.367633839, 3.099799474, 2.215983525, 3.133325741, 2.16993694, 2.860234482, 1.812430404, 2.21104222, 2.57109762, 2.297429113, 1.895345263, 1.517406782, 2.087020167, 1.919485043, 1.677933756, 2.648870767, 1.464102335, 1.976768879, 2.608324441, 2.383661391, 2.417847301, 2.599672725, 2.452259724, 2.26157208, 1.846203216, 2.167934434, 1.28004403, 1.376885888, 2.42230223, 2.308283249, 2.14221415, 3.476863479, 3.023977299, 2.952515149
1.924559532, 2.161454122, 1.885325308, 2.534120389, 1.933257551, 2.001778515, 2.004187229, 2.27090017, 1.34794163, 1.621017753, 1.310312273, 2.358512533, 2.224672708, 2.426231379, 2.278673554, 2.339627536, 2.40847512, 1.605913918, 1.369866966, 2.509521413, 3.016611177, 2.055856497, 2.25009262, 1.379758378, 1.368741328, 1.990758985, 2.649835168, 2.900163652, 2.24598387, 2.410115771, 1.21278825, 1.886157579, 2.455892965, 2.598963029, 1.971812054, 2.094363918, 2.453535198, 2.133744309, 1.900780532, 2.029138094, 1.61027372, 2.144192712, 2.32140032, 2.403909502, 3.06381042, 3.004288871, 2.009999726, 1.41056382, 2.136177849, 1.716890196
2.926529667, 2.946104104, 2.504388799, 2.620998366, 3.081811913, 2.478414037, 3.203464055, 4.152552993, 2.673924705, 1.503812255, 3.322662267, 2.346583406, 3.011239917, 2.237957182, 2.78509941, 3.350424089, 2.555177619, 2.846850123, 3.040419847, 2.868857578, 2.764474039, 2.935140773, 2.64843796, 2.266454724, 2.817475469, 2.124551927, 2.580927319, 2.159729054, 1.705875562, 2.691729036, 2.045181519, 2.689401358, 2.572259884, 2.222781511, 2.513217421, 2.438699633, 2.340307282, 2.533561089, 1.988743992, 2.52246054, 1.938206189, 2.506759162, 2.365655266, 2.469317232, 2.814550193, 2.122218066, 2.923019218, 3.133546363, 2.960573493, 2.641453224
2.917641334, 2.5479222, 2.658714656, 3.320941935, 2.05714832, 2.826398047, 2.790821515, 3.262181391, 2.382534668, 2.391672214, 1.68860547, 2.439553188, 2.391135199, 2.795322136, 2.764878146, 3.576166888, 3.157273978, 2.807845229, 2.301640405, 2.455090433, 5.059561843, 2.05799413, 2.816641662, 2.801548624, 2.188683185, 2.282530574, 2.827269324, 3.320120729, 2.098212888, 2.794688641, 1.874888124, 3.235685295, 2.708861657, 2.683553035, 2.443998992, 3.375914831, 2.33677851, 2.4644664, 2.385370134, 2.223046095, 1.780200661, 2.296150913, 2.841075181, 2.803672363, 3.14663784, 3.387691857, 3.274013588, 1.857764318, 2.220907031, 1.903452604
3.569808063, 3.095829674, 2.752190372, 2.804764619, 3.384194021, 2.974354751, 3.148006493, 4.510981366, 2.476618746, 2.121426887, 3.677979111, 2.734449625, 3.69539297, 2.844523183, 3.004495144, 3.374460376, 3.25104207, 3.396819788, 4.859206658, 3.039924, 2.965157068, 3.214534141, 3.367566677, 3.116103898, 3.209772681, 3.917951106, 3.055564929, 2.91055733, 2.299266675, 2.913459607, 2.943403797, 3.791758462, 2.570387981, 2.372466754, 2.858767064, 3.502894359, 2.849841568, 2.76916465, 3.106144109, 2.870977581, 3.649831572, 3.09883371, 2.991753926, 3.192227668, 3.172568136, 2.36079775, 3.63426041, 3.05075736, 3.354090047, 3.076142566
3.411113891, 2.294771456, 2.916769206, 4.586813127, 2.895668602, 2.982776242, 2.995308937, 2.992599171, 2.222174636, 2.479661158, 2.404069183, 2.760866996, 3.101929913, 1.91460687, 2.219760305, 3.255795412, 2.534062984, 3.976379548, 3.309788931, 2.141555722, 4.297840361, 2.795551171, 3.952663257, 2.406571327, 2.616871965, 2.902759828, 2.792453969, 2.757003258, 2.163244882, 2.945497762, 2.328477736, 2.844594738, 2.723950089, 2.96783586, 2.567457791, 2.739384225, 2.843723523, 3.104325748, 2.579372268, 4.217067477, 3.082925021, 2.522980647, 2.673674217, 2.65389521, 4.843933422, 4.04084372, 3.599940549, 1.938971152, 2.603880009, 1.902947501
3.756048418, 2.845788826, 3.022005991, 2.656674434, 3.642161751, 3.455816182, 3.049452752, 4.174873212, 3.096226878, 2.249321968, 4.430922448, 3.220506092, 3.343393043, 2.467338863, 3.482889561, 2.923058855, 4.486486461, 3.622352214, 3.779421143, 2.570129787, 3.562021319, 3.106753731, 3.15643793, 2.475688359, 3.596708321, 4.853078355, 2.915172142, 2.900727177, 2.868660002, 2.858653845, 2.899241809, 3.665981876, 2.022283178, 2.112495968, 2.61239155, 3.128653478, 2.802786558, 2.928651642, 2.641682594, 4.206063795, 4.646918647, 3.14982007, 3.085482724, 3.349978605, 3.31755624, 2.861165724, 3.012139355, 3.165786218, 2.66952076, 3.336436077
2.517696833, 3.361816822, 2.630752405, 3.195211233, 2.746278119, 2.043558991, 2.881033398, 2.579256999, 5.344338832, 2.032234126, 2.912048018, 3.981585079, 2.314906584, 1.93837887, 2.066403347, 2.582814578, 1.975697867, 2.97594121, 3.057433452, 2.661102673, 2.588041937, 1.969276296, 2.855382323, 2.570796934, 1.474602804, 1.626363197, 2.441236368, 1.546816347, 2.208395159, 2.331049309, 2.360599434, 2.561732198, 2.651271668, 1.81252146, 1.723521444, 2.379356615, 2.863267351, 2.91329032, 3.004637631, 4.552561777, 5.020857267, 2.218627405, 2.054329342, 1.81047084, 5.946707255, 3.919128669, 3.337729724, 2.980418191, 2.488457804, 2.254433206
2.764015494, 2.439750309, 2.535699527, 2.446688582, 3.606907782, 2.380883864, 2.252035367, 3.499349944, 2.975692028, 2.163415784, 4.252935331, 2.323320404, 2.949838715, 3.250064645, 2.664456909, 3.317061772, 4.13420105, 3.525513737, 2.45844749, 2.646560294, 2.911360041, 3.197512004, 2.664705048, 2.61974286, 3.616842171, 3.739506255, 2.195826707, 2.821172947, 2.534699153, 2.591985029, 2.079577781, 2.626554197, 2.089914693, 1.518674029, 2.241136269, 2.632002483, 2.940136875, 3.050694156, 2.489403271, 3.580464382, 4.506443201, 2.893860766, 2.314025914, 3.103201053, 2.817792734, 2.544118738, 2.763988634, 2.444370427, 2.24488337, 2.669357327
1.729966479, 1.931702497, 3.592997026, 2.358389921, 4.042543947, 1.525174441, 2.405767827, 2.226784628, 3.315449302, 3.087883326, 2.255557109, 2.513194047, 1.965546327, 2.022812688, 1.778551467, 1.869569376, 1.52918033, 3.143114289, 2.868179005, 2.708691105, 2.096433918, 1.301321476, 2.294716816, 1.823817531, 1.779140621, 1.808867693, 1.8476209, 1.598926243, 3.751813394, 2.725200396, 1.967441418, 1.740716329, 2.903996983, 1.530377022, 1.889910263, 3.921507352, 1.641804562, 2.305890887, 2.642410166, 3.867503799, 4.672129961, 1.955865822, 2.413825305, 2.308682333, 5.173605891, 3.790241653, 2.292996786, 1.720522391, 2.305171196, 2.470365122
4.861658762, 2.864240902, 2.19537896, 2.46880389, 6.137397887, 2.73736385, 2.063702536, 2.896184938, 2.299424968, 1.986897478, 5.745119688, 2.699633111, 2.968501514, 2.287512877, 2.112250466, 2.654678391, 3.234892437, 4.17207821, 2.52283883, 2.043097267, 3.069580263, 2.600530448, 2.499110112, 2.784563076, 4.908687419, 2.493620923, 2.865600706, 2.352272769, 2.677059325, 1.854578031, 2.306573298, 2.942413225, 1.687692539, 4.141677643, 2.795250216, 2.248989912, 2.217593768, 1.756502065, 3.677824466, 2.522884262, 3.787522272, 2.560257502, 1.909090204, 2.704438443, 2.491217413, 2.529584229, 3.688550435, 2.686116071, 2.260292075, 2.58528764
2.346779291, 1.694783534, 2.797712761, 2.120403098, 3.071043896, 3.452970356, 2.178944642, 2.51356654, 2.590011648, 2.170043988, 1.687902051, 2.735749449, 2.934710822, 2.854784525, 2.435142823, 1.590232412, 2.876709553, 2.101003699, 3.164156799, 2.339218136, 3.252960512, 3.27682795, 2.410482137, 2.373221143, 2.990682457, 2.860304956, 3.191049914, 3.020280542, 2.223100742, 2.124332015, 2.426845505, 2.02395527, 2.534147328, 3.261188709, 1.914856395, 2.948879277, 2.445195788, 3.668315493, 2.659992496, 3.973550551, 5.02234785, 2.734784075, 2.217724219, 2.538158232, 4.131993754, 2.735362019, 3.092725501, 1.455671881, 2.807392468, 2.783362081
5.709196209, 2.588840322, 2.493257645, 2.361868646, 5.539678672, 3.156479908, 2.125501106, 2.128269455, 2.511435666, 2.743518056, 5.49349545, 3.089862461, 4.946671238, 3.459850418, 2.189196989, 2.579470375, 2.748867539, 3.527549063, 4.730768141, 2.979071935, 2.668900541, 2.477014031, 2.310110675, 2.526646615, 5.767269825, 2.755825644, 2.338487605, 3.558273621, 3.02085308, 2.807154648, 2.128407526, 3.463564203, 3.751790694, 4.065244532, 3.000510455, 2.284195163, 2.58324254, 2.345886914, 3.1431461, 2.349578808, 2.915783336, 2.28292481, 2.402923924, 3.147486808, 2.050256967, 2.945110708, 3.310660766, 2.786494824, 2.707276304, 2.837321261
4.841386867, 3.462274221, 2.61153682, 2.58475495, 4.506795652, 3.271158536, 2.897267349, 3.811764393, 2.028623403, 2.181892103, 3.719344894, 2.649658336, 2.818261898, 2.814519373, 2.71363152, 2.356064558, 3.562586607, 2.30690058, 3.834932227, 2.938362702, 3.67909395, 3.574054151, 3.68257872, 2.861557312, 3.330376406, 2.61083824, 3.554419899, 3.37658935, 4.10065379, 3.252074603, 3.239789417, 3.020494141, 3.470755652, 3.509109128, 3.98957778, 2.725933036, 2.304190786, 3.875025165, 2.69815339, 3.517632704, 2.940342346, 3.44762948, 4.215324099, 2.936438058, 3.973533593, 3.032764344, 2.630832607, 3.643003276, 3.441722393, 3.234075537
3.796529329, 2.952602081, 2.565082633, 3.136379285, 4.511759256, 3.44020312, 3.494433072, 4.102685753, 2.740269154, 3.147122234, 4.757633573, 3.215159984, 3.777154262, 2.71144117, 3.029950157, 2.790318315, 2.979707028, 3.589275296, 3.664031088, 3.077978091, 2.881296627, 3.089219977, 3.99319025, 3.361351664, 4.832625261, 3.679746653, 2.312741227, 3.432015972, 3.884083607, 3.184741462, 2.856821832, 4.120477703, 3.646196384, 3.846070447, 3.479580808, 2.91787103, 3.521119693, 3.928697306, 3.097274155, 3.583944008, 4.800485475, 3.975533283, 3.302227581, 4.913598868, 3.336974858, 3.470531472, 3.795296461, 4.076257202, 3.013072425, 3.033182475
3.526409243, 2.867247917, 2.823034105, 3.578232347, 3.26494446, 2.107232548, 2.73346387, 3.436985849, 2.988693171, 3.323905044, 2.597015486, 2.820484238, 2.701755713, 2.635462283, 2.486046493, 2.286606919, 2.629973367, 2.559703129, 3.520106554, 2.689489226, 4.29394702, 3.241402803, 3.267577656, 2.99945092, 2.715168419, 2.627213958, 2.883053188, 2.45554086, 3.318624229, 2.565430144, 3.328650933, 3.950129744, 3.052993613, 2.582593192, 3.29908843, 2.861859768, 2.189839445, 2.898334208, 2.819610385, 4.069347999, 3.548006977, 2.94249115, 3.364350142, 2.822826388, 3.458872233, 3.996515631, 2.752388946, 3.340476428, 3.087308142, 2.720818957
2.205433173, 4.04700335, 2.149465093, 3.183240782, 3.345607134, 2.901666719, 3.379530844, 3.425675237, 3.274340622, 4.285188922, 4.054024289, 3.310976981, 3.537272184, 2.331628108, 2.951893147, 2.361142779, 4.272318227, 3.92945818, 2.978996321, 2.771250456, 2.432851905, 2.93096835, 4.750899748, 3.483116688, 3.476609694, 3.633059705, 2.300112751, 4.348248243, 3.278593212, 2.467808786, 2.401454809, 3.382245669, 3.181750043, 2.730360337, 3.166564265, 3.24463144, 2.302779465, 3.502887493, 3.204072159, 3.491499982, 4.180818538, 3.040701755, 3.827996608, 4.877960134, 2.842983717, 2.490694501, 3.390617946, 3.25856602, 2.404755264, 2.499885858
3.243701211, 2.938201867, 2.86718973, 3.177726832, 3.236597412, 2.609816176, 2.192222208, 3.185514442, 2.296753035, 2.179019628, 2.631221808, 2.137604011, 3.07136086, 3.186233213, 1.872897842, 3.295995871, 3.297144912, 3.215745698, 3.045434355, 3.447436291, 3.178545484, 2.058088461, 3.779173291, 1.818433514, 2.981199898, 2.814102029, 2.788890825, 2.922311035, 2.416104129, 2.556801255, 2.667680874, 2.765257336, 3.110119947, 3.329787551, 3.521325122, 2.590797786, 3.010160596, 2.957066015, 2.091724081, 3.149447252, 2.21792561, 2.029896466, 3.094852579, 4.2306553, 2.162311168, 2.829265302, 2.590437096, 2.633706854, 3.183017473, 3.788868113
3.707763572, 3.198491873, 3.023374045, 3.63132874, 4.081306765, 2.717909746, 2.868485718, 3.470624699, 2.711292337, 2.436262277, 2.52304032, 2.412262208, 2.873411075, 3.729220989, 2.334059705, 3.718935797, 3.13372179, 3.618694972, 3.706802009, 3.694012834, 3.640191988, 2.187637454, 4.16637552, 2.213276308, 3.134292856, 3.135388899, 3.506136519, 3.309749747, 2.390734505, 2.585424869, 2.855514643, 3.066229293, 3.024010911, 3.299210945, 3.861794204, 2.653747706, 3.153677116, 3.399318539, 2.131806327, 3.315828162, 2.210430092, 2.592943075, 3.390421629, 4.059796745, 2.290049672, 2.913023023, 2.497510129, 2.994837942, 3.434734882, 3.646906207
\end{filecontents*}

\pgfplotsset{
   boxplot/every box/.style={solid},
   boxplot/every median/.style={solid},
   boxplot/every whisker/.style={solid}
}

\begin{figure}[tb]
    \centering
    \begin{tikzpicture}
    	\pgfplotstableread[col sep=comma]{data_drift.csv}\csvdata
    	\pgfplotstabletranspose\datatransposed{\csvdata} 
    	\begin{axis}[
    		boxplot/draw direction = y,
            xmin=0, xmax=33.5,
            ymin=0, ymax=6.5,
    		axis x line* = bottom,
    		axis y line = left,
            ylabel near ticks,
            xlabel near ticks,
    		ymajorgrids,
    		xtick = {1.5, 4.5, 7.5, 10.5, 13.5, 16.5, 19.5, 22.5, 25.5, 28.5, 31.5},
    		xticklabel style = {align=center, font=\small, rotate=90},
            xscale=1.2,
    		xticklabels = {
                average,
                10\%,
                20\%,
                30\%,
                40\%,
                50\%,
                60\%,
                70\%,
                80\%,
                90\%,
                100\%,
            },
    		xtick style = {draw=none}, 
    		ylabel = {RMSE in safe area},
    		ytick = {0,1,2,3,4,5,6},
            name=boxplot
          ]
            \addplot+[boxplot, mark=*, mark options = {mark color=blue}, fill, draw=black, fill=blue] table[y index=1] {\datatransposed};
            \addplot+[boxplot, mark=*, mark options = {mark color=red}, fill, draw=black, fill=red] table[y index=2] {\datatransposed};
            \addplot+[boxplot, mark=*, mark options = {mark color=darkgreen}, fill, draw=black, fill=darkgreen] table[y index=3] {\datatransposed};
            \addplot+[boxplot, mark=*, mark options = {mark color=blue}, fill, draw=black, fill=blue] table[y index=4] {\datatransposed};
            \addplot+[boxplot, mark=*, mark options = {mark color=red}, fill, draw=black, fill=red] table[y index=5] {\datatransposed};
            \addplot+[boxplot, mark=*, mark options = {mark color=darkgreen}, fill, draw=black, fill=darkgreen] table[y index=6] {\datatransposed};
            \addplot+[boxplot, mark=*, mark options = {mark color=blue}, fill, draw=black, fill=blue] table[y index=7] {\datatransposed};
            \addplot+[boxplot, mark=*, mark options = {mark color=red}, fill, draw=black, fill=red] table[y index=8] {\datatransposed};
            \addplot+[boxplot, mark=*, mark options = {mark color=darkgreen}, fill, draw=black, fill=darkgreen] table[y index=9] {\datatransposed};
            \addplot+[boxplot, mark=*, mark options = {mark color=blue}, fill, draw=black, fill=blue] table[y index=10] {\datatransposed};
            \addplot+[boxplot, mark=*, mark options = {mark color=red}, fill, draw=black, fill=red] table[y index=11] {\datatransposed};
            \addplot+[boxplot, mark=*, mark options = {mark color=darkgreen}, fill, draw=black, fill=darkgreen] table[y index=12] {\datatransposed};
            \addplot+[boxplot, mark=*, mark options = {mark color=blue}, fill, draw=black, fill=blue] table[y index=13] {\datatransposed};
            \addplot+[boxplot, mark=*, mark options = {mark color=red}, fill, draw=black, fill=red] table[y index=14] {\datatransposed};
            \addplot+[boxplot, mark=*, mark options = {mark color=darkgreen}, fill, draw=black, fill=darkgreen] table[y index=15] {\datatransposed};
            \addplot+[boxplot, mark=*, mark options = {mark color=blue}, fill, draw=black, fill=blue] table[y index=16] {\datatransposed};
            \addplot+[boxplot, mark=*, mark options = {mark color=red}, fill, draw=black, fill=red] table[y index=17] {\datatransposed};
            \addplot+[boxplot, mark=*, mark options = {mark color=darkgreen}, fill, draw=black, fill=darkgreen] table[y index=18] {\datatransposed};
            \addplot+[boxplot, mark=*, mark options = {mark color=blue}, fill, draw=black, fill=blue] table[y index=19] {\datatransposed};
            \addplot+[boxplot, mark=*, mark options = {mark color=red}, fill, draw=black, fill=red] table[y index=20] {\datatransposed};
            \addplot+[boxplot, mark=*, mark options = {mark color=darkgreen}, fill, draw=black, fill=darkgreen] table[y index=21] {\datatransposed};
            \addplot+[boxplot, mark=*, mark options = {mark color=blue}, fill, draw=black, fill=blue] table[y index=22] {\datatransposed};
            \addplot+[boxplot, mark=*, mark options = {mark color=red}, fill, draw=black, fill=red] table[y index=23] {\datatransposed};
            \addplot+[boxplot, mark=*, mark options = {mark color=darkgreen}, fill, draw=black, fill=darkgreen] table[y index=24] {\datatransposed};
            \addplot+[boxplot, mark=*, mark options = {mark color=blue}, fill, draw=black, fill=blue] table[y index=25] {\datatransposed};
            \addplot+[boxplot, mark=*, mark options = {mark color=red}, fill, draw=black, fill=red] table[y index=26] {\datatransposed};
            \addplot+[boxplot, mark=*, mark options = {mark color=darkgreen}, fill, draw=black, fill=darkgreen] table[y index=27] {\datatransposed};
            \addplot+[boxplot, mark=*, mark options = {mark color=blue}, fill, draw=black, fill=blue] table[y index=28] {\datatransposed};
            \addplot+[boxplot, mark=*, mark options = {mark color=red}, fill, draw=black, fill=red] table[y index=29] {\datatransposed};
            \addplot+[boxplot, mark=*, mark options = {mark color=darkgreen}, fill, draw=black, fill=darkgreen] table[y index=30] {\datatransposed};
            \addplot+[boxplot, mark=*, mark options = {mark color=blue}, fill, draw=black, fill=blue] table[y index=31] {\datatransposed};
            \addplot+[boxplot, mark=*, mark options = {mark color=red}, fill, draw=black, fill=red] table[y index=32] {\datatransposed};
            \addplot+[boxplot, mark=*, mark options = {mark color=darkgreen}, fill, draw=black, fill=darkgreen] table[y index=33] {\datatransposed};
    	\end{axis}
        \node[draw,fill=white,inner sep=1pt,above left=0.5em] at (7.0,6.0)
        {\small
            \begin{tabular}{cc}
            \textcolor{blue}{\rule{15pt}{3pt}} & T-IMSPE (ours)\\
            \textcolor{red}{\rule{15pt}{3pt}} & Entropy\\
            \textcolor{darkgreen}{\rule{15pt}{3pt}} & IMSPE
            \end{tabular}};
    \end{tikzpicture}
    \caption{%
        Box plots of the RMSE values in the safe area of the experiments from active learning under drift in Subsection~\ref{subsection_drift}.
        We compare the results of 500 runs between T-IMSPE (blue), entropy (red), and IMSPE (green) on average during the runs (left) and then ascending at specific time steps.
        In contrast to the other use cases, the drift prevents a major decrease of the RMSE over time.
        These are the raw values of Figure~\ref{figure_drift_RMSE_gain} from the main paper, where the difference of experiments with the same seeds were considered.
    }
    \label{figure_drift_RMSE}
\end{figure}


Consider the experiment about drift in examples as discussed in Subsection~\ref{subsection_drift}.
Due to the drift, any active learning scheme won't bring down the RMSE significantly.
Instead, the goal is to hold the RMSE on an acceptable level despite the drift.
In the main paper, we have shown the RMSE gain of T-IMSPE over entropy in Figure~\ref{figure_drift_RMSE_gain}.
Figure~\ref{figure_drift_RMSE} also shows the raw RMSE values.

\subsection{Further results for the rail pressure experiment}

\begin{figure}[tb]

\centering

\begin{tikzpicture}

\begin{axis}[
    axis lines=middle,
    xmin=-1, xmax=42,
    ymin=0, ymax=0.65,
    ytick={0.1,0.2,0.3,0.4,0.5,0.6},
    yticklabels={10 \%,20 \%,30 \%,40 \%,50 \%,60 \%},
    ylabel = {Recall dynamic safe eval},
    xtick={0,10,20,30,40},
    xticklabels={0,250,500,750,1000},
    xlabel = {steps},
    legend style={at={(0.3,0.73)},anchor=south}
]

\addplot [red] table {
0  0.0045
1  0.011
2  0.058
3  0.146
4  0.147
5  0.15
6  0.149
7  0.156
8  0.158
9  0.168
10  0.169
11  0.179
12  0.185
13  0.188
14  0.19
15  0.19
16  0.189
17  0.189
18  0.19
19  0.19
20  0.187
21  0.188
22  0.286
23  0.289
24  0.294
25  0.295
26  0.297
27  0.297
28  0.297
29  0.302
30  0.302
31  0.304
32  0.309
33  0.314
34  0.305
35  0.323
36  0.32
37  0.326
38  0.329
39  0.334
40  0.333
};
\addlegendentry{entropy}

\addplot [blue] table {
0  0.0045
1  0.012
2  0.013
3  0.022
4  0.079
5  0.186
6  0.25
7  0.298
8  0.325
9  0.35
10  0.358
11  0.359
12  0.359
13  0.361
14  0.384
15  0.39
16  0.391
17  0.405
18  0.405
19  0.406
20  0.406
21  0.407
22  0.407
23  0.406
24  0.406
25  0.408
26  0.416
27  0.435
28  0.442
29  0.456
30  0.465
31  0.465
32  0.461
33  0.46
34  0.465
35  0.463
36  0.476
37  0.482
38  0.485
39  0.489
40  0.486
};
\addlegendentry{T-IMSPE (ours)}

\addplot [blue] table {
0  0.0045
1  0.033
2  0.087
3  0.187
4  0.239
5  0.248
6  0.276
7  0.287
8  0.291
9  0.297
10  0.295
11  0.323
12  0.37
13  0.402
14  0.416
15  0.427
16  0.437
17  0.433
18  0.433
19  0.433
20  0.445
21  0.447
22  0.451
23  0.456
24  0.454
25  0.456
26  0.458
27  0.493
28  0.513
29  0.512
30  0.512
31  0.513
32  0.513
33  0.519
34  0.526
35  0.548
36  0.554
37  0.556
38  0.554
39  0.557
40  0.556
};

\addplot [blue] table {
0  0.0045
1  0.028
2  0.12
3  0.131
4  0.163
5  0.228
6  0.285
7  0.302
8  0.311
9  0.317
10  0.321
11  0.335
12  0.346
13  0.345
14  0.368
15  0.394
16  0.405
17  0.431
18  0.439
19  0.444
20  0.445
21  0.446
22  0.485
23  0.491
24  0.498
25  0.498
26  0.498
27  0.499
28  0.499
29  0.5
30  0.502
31  0.502
32  0.506
33  0.507
34  0.521
35  0.524
36  0.523
37  0.542
38  0.566
39  0.575
40  0.576
};

\addplot [blue] table {
0  0.0045
1  0.025
2  0.073
3  0.164
4  0.235
5  0.291
6  0.305
7  0.32
8  0.326
9  0.326
10  0.34
11  0.367
12  0.382
13  0.387
14  0.405
15  0.409
16  0.41
17  0.41
18  0.441
19  0.481
20  0.494
21  0.499
22  0.502
23  0.502
24  0.505
25  0.506
26  0.506
27  0.505
28  0.508
29  0.508
30  0.509
31  0.509
32  0.51
33  0.509
34  0.512
35  0.514
36  0.516
37  0.537
38  0.535
39  0.537
40  0.534
};

\addplot [blue] table {
0  0.0045
1  0.007
2  0.011
3  0.022
4  0.034
5  0.043
6  0.067
7  0.166
8  0.218
9  0.284
10  0.297
11  0.306
12  0.307
13  0.359
14  0.365
15  0.373
16  0.389
17  0.39
18  0.394
19  0.395
20  0.396
21  0.402
22  0.402
23  0.403
24  0.41
25  0.41
26  0.412
27  0.43
28  0.43
29  0.43
30  0.431
31  0.431
32  0.433
33  0.436
34  0.444
35  0.457
36  0.458
37  0.458
38  0.485
39  0.505
40  0.551
};

\addplot [red] table {
0  0.0045
1  0.036
2  0.083
3  0.116
4  0.117
5  0.117
6  0.118
7  0.119
8  0.119
9  0.119
10  0.142
11  0.159
12  0.167
13  0.169
14  0.17
15  0.184
16  0.341
17  0.355
18  0.355
19  0.358
20  0.359
21  0.36
22  0.36
23  0.362
24  0.362
25  0.363
26  0.363
27  0.36
28  0.36
29  0.361
30  0.365
31  0.367
32  0.369
33  0.368
34  0.376
35  0.375
36  0.376
37  0.377
38  0.375
39  0.376
40  0.375
};

\addplot [red] table {
0  0.0045
1  0.008
2  0.022
3  0.033
4  0.034
5  0.036
6  0.036
7  0.038
8  0.064
9  0.137
10  0.142
11  0.15
12  0.15
13  0.152
14  0.164
15  0.165
16  0.165
17  0.167
18  0.168
19  0.166
20  0.168
21  0.169
22  0.169
23  0.17
24  0.17
25  0.17
26  0.171
27  0.177
28  0.192
29  0.192
30  0.192
31  0.193
32  0.204
33  0.205
34  0.203
35  0.203
36  0.207
37  0.218
38  0.219
39  0.218
40  0.285
};

\addplot [red] table {
0  0.0045
1  0.016
2  0.021
3  0.021
4  0.025
5  0.047
6  0.071
7  0.071
8  0.072
9  0.076
10  0.116
11  0.156
12  0.156
13  0.166
14  0.168
15  0.168
16  0.168
17  0.168
18  0.178
19  0.178
20  0.178
21  0.182
22  0.185
23  0.188
24  0.186
25  0.186
26  0.189
27  0.189
28  0.19
29  0.197
30  0.197
31  0.197
32  0.194
33  0.198
34  0.199
35  0.202
36  0.202
37  0.204
38  0.207
39  0.209
40  0.209
};

\addplot [red] table {
0  0.0045
1  0.014
2  0.031
3  0.031
4  0.038
5  0.042
6  0.041
7  0.041
8  0.042
9  0.051
10  0.054
11  0.077
12  0.084
13  0.085
14  0.084
15  0.082
16  0.082
17  0.081
18  0.12
19  0.148
20  0.148
21  0.148
22  0.149
23  0.149
24  0.148
25  0.149
26  0.152
27  0.153
28  0.153
29  0.153
30  0.155
31  0.155
32  0.157
33  0.159
34  0.155
35  0.165
36  0.199
37  0.205
38  0.231
39  0.235
40  0.238
};

\addplot [color=blue,dashed] table[x expr=\coordindex+1] {
0	0.016478589
0	0.056609572
0	0.100483627
0	0.142806045
0	0.182055416
0	0.211193955
0	0.238493703
0	0.267193955
0	0.291627204
0	0.307027708
0	0.323732997
0	0.340806045
0	0.357193955
0	0.37093199
0	0.383858942
0	0.394619647
0	0.406423174
0	0.416916877
0	0.424770781
0	0.430221662
0	0.437057935
0	0.444166247
0	0.450171285
0	0.455360202
0	0.460806045
0	0.464534005
0	0.469062972
0	0.473027708
0	0.477677582
0	0.482579345
0	0.4862267
0	0.489576826
0	0.492352645
0	0.495224181
0	0.498851385
0	0.501823678
0	0.506231738
0	0.508816121
0	0.512267003
0	0.51499244
};

\addplot [color=blue,dotted,name path=TIMPSE_lower] table[x expr=\coordindex+1] {
0	-0.004958595
0	-0.026732401
0	-0.02573005
0	-0.003533966
0	0.026581668
0	0.047962262
0	0.077459601
0	0.116015883
0	0.141218465
0	0.161932675
0	0.182698319
0	0.210766558
0	0.238655203
0	0.254115394
0	0.268519296
0	0.284933134
0	0.301226453
0	0.318087627
0	0.33099653
0	0.337411089
0	0.344352635
0	0.358682126
0	0.367470035
0	0.375293241
0	0.381228397
0	0.383493257
0	0.391867059
0	0.400064889
0	0.40581183
0	0.41145737
0	0.41537877
0	0.418561839
0	0.420680591
0	0.423803111
0	0.427029465
0	0.432268071
0	0.437439723
0	0.441056936
0	0.441763102
0	0.44538450
};

\addplot [color=blue,dotted,name path=TIMPSE_upper] table[x expr=\coordindex+1] {
0	0.037915774
0	0.139951544
0	0.226697305
0	0.289146057
0	0.337529163
0	0.374425648
0	0.399527805
0	0.418372026
0	0.442035943
0	0.452122741
0	0.464767676
0	0.470845533
0	0.475732706
0	0.487748586
0	0.499198588
0	0.504306161
0	0.511619895
0	0.515746126
0	0.518545032
0	0.523032236
0	0.529763234
0	0.529650368
0	0.532872535
0	0.535427162
0	0.540383694
0	0.545574753
0	0.546258885
0	0.545990526
0	0.549543334
0	0.55370132
0	0.55707463
0	0.560591814
0	0.564024698
0	0.566645252
0	0.570673306
0	0.571379284
0	0.575023753
0	0.576575305
0	0.582770903
0	0.58460038
};

\addplot[fill=blue, opacity=0.2] fill between[of=TIMPSE_upper and TIMPSE_lower];

\addplot [color=red,dashed] table[x expr=\coordindex+1] {
0	0.018881612
0	0.047994962
0	0.065224181
0	0.080261965
0	0.091612091
0	0.105607053
0	0.118060453
0	0.132035264
0	0.146861461
0	0.158906801
0	0.171309824
0	0.178629723
0	0.184745592
0	0.190937028
0	0.198166247
0	0.205481108
0	0.210735516
0	0.219551637
0	0.226297229
0	0.229350126
0	0.234634761
0	0.238836272
0	0.244095718
0	0.248675063
0	0.252856423
0	0.258488665
0	0.264448363
0	0.268675063
0	0.273939547
0	0.278891688
0	0.28563728
0	0.288675063
0	0.294403023
0	0.296307305
0	0.301531486
0	0.303602015
0	0.309017632
0	0.312599496
0	0.320433249
0	0.32471032
};

\addplot [color=red,dotted,name path=Entropy_lower] table[x expr=\coordindex+1] {
0	5.66988E-05
0	-0.039093599
0	-0.054253892
0	-0.049228271
0	-0.040805533
0	-0.028800972
0	-0.025019255
0	-0.02070911
0	-0.007849808
0	0.00788267
0	0.019266735
0	0.025674919
0	0.034694975
0	0.040802989
0	0.047826272
0	0.055656221
0	0.062420329
0	0.07514791
0	0.083820475
0	0.084088143
0	0.089503034
0	0.092052774
0	0.100655546
0	0.105682672
0	0.108535527
0	0.114442025
0	0.115702885
0	0.120533404
0	0.127254731
0	0.131532643
0	0.139721145
0	0.146209579
0	0.151742832
0	0.156345565
0	0.162068219
0	0.166911129
0	0.172872172
0	0.176618703
0	0.18468652
0   0.187824778
};

\addplot [color=red,dotted,name path=Entropy_upper] table[x expr=\coordindex+1] {
0	0.037706525
0	0.135083524
0	0.184702255
0	0.2097522
0	0.224029715
0	0.240015078
0	0.261140162
0	0.284779639
0	0.30157273
0	0.309930932
0	0.323352912
0	0.331584526
0	0.334796209
0	0.341071067
0	0.348506222
0	0.355305996
0	0.359050704
0	0.363955364
0	0.368773983
0	0.374612109
0	0.379766487
0	0.38561977
0	0.38753589
0	0.391667454
0	0.397177319
0	0.402535305
0	0.41319384
0	0.416816722
0	0.420624362
0	0.426250733
0	0.431553414
0	0.431140547
0	0.437063214
0	0.436269045
0	0.440994753
0	0.440292901
0	0.445163092
0	0.448580289
0	0.456179979
0	0.46159587
};

\addplot[fill=red, opacity=0.2] fill between[of=Entropy_upper and Entropy_lower];

\end{axis}

\end{tikzpicture}

\caption{
    This diagram show the increase in recall for classifying the safe validation data correctly into as safe in the rail pressure model from Subsection~\ref{subsection_railpressure}.
    Entropy (red) show a slow incline, whereas our approach T-IMSPE rises faster, resulting in much higher recalls.
    The dashed line is the mean of 100 runs, the area shows the $2\sigma$ area and the solid lines show the same 5 exemplary runs as Figure~\ref{figure_railpressure_RMSE}.
}

\label{figure_recall}

\end{figure}
\begin{table*}[tb]
    \centering
    \caption{We show various additional results on dynamic railpressure experiment from Subsection~\ref{subsection_railpressure}.
    We compare entropy (first three rows) to T-IMSPE (seconod three rows) with three different maximal distances (0.1, 0.15, and 0.2) for each step in scaled space.
    The experiment in the paper used the clear superior distance of 0.1.
    The first row shows the computation time in seconds on an NVIDIA RTX3080 GPU of all 1000 safe active learning steps.
    Here, entropy clearly takes longer, even though in theory it should have a fast evaluation time by a small constant factor.
    Debug outputs indicates that entropy leads to significantly more failed optimization runs, where the safety and other constraints could not be kept.
    Both approaches choose over 90~\% of points as being safe, with T-IMSPE achieving 99~\% safety.
    Our safety criterion of two standard deviations of the GP model prediction stay below the safety bound would predict that---ignoring model error---independently chosen points are about 97.7~\% safe.
    The last row shows the recall of our safety criterion on safe trajectories, i.e.\ the percentage of safe trajectories in the test data, which are recognized as being safe.
    In these recall values, T-IMSPE is more than twice as good as entropy.
    See also Figure~\ref{figure_recall}, where the recall values are plotted over time, instead of the average as here.
    Results are the mean of all five seeds, with standard deviation in brackets.}
	\begin{tabular}{cc|ccc}
		strategy & max.\ distance & comp.\ time (s) & safe points (\%) & static recall (\%) \\
		\hline
		               &0.1 &13093($\pm$334)        &97.5($\pm$0.4)         &18.3($\pm$6.4)\\
		Entropy        &0.15&14022($\pm$653)        &94.0($\pm$0.3)         &16.5($\pm$2.6)\\
		               &0.2 &15604($\pm$1129)       &90.3($\pm$0.2)         &9.2($\pm$2.3) \\
		\hline
		               &0.1 &\textbf{3792}($\pm$212)&\textbf{99.0}($\pm$0.4)&\textbf{38.5}($\pm$3.4)\\
		T-IMSPE (ours) &0.15&3915($\pm$179          &97.1($\pm$0.7)         &32.8($\pm$3.6)\\
		               &0.2 &4305($\pm$271)         &93.3($\pm$1.2)         &25.6($\pm$7.2)
		               
\end{tabular}
    \label{table_railpressure}
\end{table*}

Subsection~\ref{subsection_railpressure} considered the rail pressure experiment for dynamic systems.
Here, we show additional results.
Also, the amount of safe area that is recognized as safe area is drastically larger for T-IMSPE, see Figure~\ref{figure_recall}.
Table~\ref{table_railpressure} shows some additional numerical results.
In addition, this tables serves as an ablation for the maximal distance in an elipsiodal norm, allowed by the safe active learning algorithm in this experiment.
This axis-parallel ellipse with semi-axes $80.3$ and $2.17$ is actually a circle in scaled space (see Appendix~\ref{appendix_railpressure_formula} for the formula for scaling) of radius $0.1$.
Increasing the radius to $0.15$ or $0.2$, i.e.\ allowing more dynamic behaviour, massively decreases the quality of the results in safe active learning.

\section{Technical Details on experiments}\label{appendix_experimental_details}

Our implementation is done in the PyTorch environment \citep{paszke2017automatic}.

All our GPs use the squared exponential covariance function with a separate length scale $\ell_i$ as hyperparameter for each input (automatic relevance determinantion).
Additionally, we use the signal variance $\sigma_f^2$ and noise variance $\sigma_n^2$ as as additional hyperparameters, as in \citep{RW}.
The GP has a constant mean function $m$ as hyperparameter.
For the initial training of the GP hyperparameters, we use the SQP\footnote{Without constraints, this is basically a Newton optimization.} implementation from PyGranso \citep{liang2022ncvx,curtis2017bfgs}.
After each new measurement, we retrain all hyperparameters with 30 steps of ADAM.
When training hyperparameters, instead of minimizing the negative log likelihood, we minimizing the negative log a-posteriori.
This is the same GP for both acquisition functions entropy and T-IMSPE.

The choice of a GP with squared exponential covariance function for all experiments might be unintiutive.
Specific covariance functions might have improved the results of the examples, e.g.\ periodic or cosine covariance functions for the experiment with seasonal chance in Section~\ref{subsection_seasonal}.
However, there are several reasons for experiments with a standard covariance function such as the squared exponential covariance.
\begin{enumerate}
	\item We did not want to assume any specific time-varying structure (other than dynamic structure in the third experiment) in our covariance function.
	\item We prefer a consistent approach over several experiments.
	\item The GP used in the seasonal change experiment in Section~\ref{subsection_seasonal} does not directly model periodic behavior.
	Despite this, T-IMSPE still steers the data acquisition such that even a GP with a uninformative covariance functions learns the periodic behavior, see Figure~\ref{figure_seasonal_RMSE}.
	We see it as an advantage to T-IMSPE that periodic structures are automatically captured with a standard Gaussian process.
	\item We conducted preliminary experiments with a periodic covariance function in the seasonal change experiment in Section~\ref{subsection_seasonal}.
	These experiments used the numerical approximations to T-IMSPE, since there is no closed form version for the periodic covariance in T-IMSPE, see Table~\ref{table_cross_covariance}.
	The results were as follows:
	Safe active learning behaved very suboptimal with \emph{all acquisition functions} when we trained the hyperparameter for the period length of the periodic covariance function, since the hyperparameter for the period length was hard to learn in a Gaussian process with multiple inputs and few measurement points.
	This lead to many local optima, most of which showed very suboptimal behavior, and very overfit models.
	When we manually set the period to the correct value (something that seems to be unreasonable in many practical examples) and used a numerical approximation to T-IMSPE, entropy was superior to the numerically approximated T-IMPSE.
	This is consistent (and one of many reasons) for our observation in Appendix~\ref{appendix_baseline} that numerically evaluated integrals perform suboptimally.
\end{enumerate}

We choose the following priors for the experiments.
For the seasonal change experiment from Subsection~\ref{subsection_seasonal} and for the drift experiments from Subsection~\ref{subsection_drift} the priors are
\begin{align*}
    \softplus^{-1}(\ell_t) &\sim \N(5,1^2)\\
    \softplus^{-1}(\ell_x) &\sim \N(0,1^2)\\
    \softplus^{-1}(\sigma_f) &\sim \N(1,1^2) \\
    \softplus^{-1}(\sigma_n) &\sim \N(-3,1^2) \\
    m &\sim \N(10,0.01^2)
\end{align*}
for $\softplus(x)=\log(1+\exp(x))$, temporal length scale $\ell_t$ and spatial length scale $\ell_x$.
For the rail pressure experiments from Subsection~\ref{subsection_railpressure} the priors are
\begin{align*}
    \softplus^{-1}(\ell) &\sim \N(0.5,0.1^2)\\
    \softplus^{-1}(\sigma_f) &\sim \N(0.5,0.1^2) \\
    \softplus^{-1}(\sigma_n) &\sim \N(-3,0.1^2) \\
    m &\sim \N(11.77,0.01^2).
\end{align*}
For the rail pressure example, it is particularly important to have conservative priors, since otherwise safety constraints are kept much less often.
The priors for the mean are particularly strict.
This prevents small mean functions and hence exploration into unsafe area which are deemed safe due to unsuitable extrapolation.
Taking the mean as trainable hyperparameter with prior instead of fixing it allows for flexible models once enough data is collected.
While using GPyTorch \citep{gardner2018gPyTorch}, we reimplemented the priors to avoid inconsistencies in GPyTorch.

The optimization for safe active learning is done again using the SQP implementation from PyGranso with 3 random restarts.
In case of the first or second unsuccessful optimization, where no point keeping the constraints could be found, we start an additional optimizations with new starting points, such that at most 5 restarts are performed.
In the optimizer, all tolerances are set to $10^{-4}$ and maximal 200 iterations are allowed.
The most important constraints, the safety constraint, is taken $m_\text{safe}(x_*|x,y)+2\sqrt{k_\text{safe}(x_*|x)}<c$, where $c$ is the constraint and $m_\text{safe}(x_*|x,y)$ resp.\ $\sqrt{k_\text{safe}(x_*|x)}$ are the mean resp.\ standard deviation of the posterior GP at $x_*$.
This corresponds to $\alpha=0.977$.

We adapted entropy (mildly) to the time-variant case: we maximize the variance by only allowing the current time step, similar to the usage of the acquisition function in \citep{fiducioso2019safe}.
However, to the best of the authors' knowledge, IMSPE is the only acquisition function that allows to specify that knowledge should be collected to add information for future time steps.

The finite measure for T-IMSPE in the experiment with seasonal changes and for the drift experiments are chosen as $\mathbf{1}_{D\times[t_0,t_0+10]}$, where $t_0$ is the current time point, $D$ is the allowed spatial domain, and $\mathbf{1}$ is the indicator function.
In other words, we want to reduce the variance equally over $D$ and up to 10 time steps into the future.
For the rail pressure experiment, we chose $\mathbf{1}_{D^4}$ as measure for T-IMSPE, to search for trajectories that reduce the variance over the space of all trajectories.
The exponent $4$ in $D^4$ is due to having 4 time steps in our NX model.

As in the seasonal change example, the drift example starts with 8 initial measurements at times $0,\ldots,7$ positioned at the inital points of a Sobol sequence in the safe area.
Afterwards, 100 further measurements at times $8,\ldots,107$ are conducted according to the two respective safe active learning criteria entropy and T-IMSPE.

In the rail pressure example, we also model real world safety procedure, mirroring a similar procedure in \citep{zimmer2018safe,tebbe2024efficiently}.
Once a measurement is not safe, we jump back to the area that was initially declared as safe.
Note that the jump back itself is not necessarily safe itself, due to potentially being a long trajectory.
However, once inside the area that was initially declared as safe, the behavior will quickly stabilize, which is not necessarily the case outside of the initially safe domain.
The NX structure was chosen similar to \citep{zimmer2018safe,tebbe2024efficiently}, i.e.\ based on expert knowledge.
If, in a different application, this expert knowledge is not available, one can use techniques for NX structure optimization to find a suitable value, e.g. from \citep{yassin2010particle}.
The domain is $D=[1000,4000]\times[0,60]$, the safety constraint is to keep the rail pressure below 18, and the initially known safe domain for steady state behavior is $[2093,2414]\times[14.36,23.0]$.
Bigger jumps result in uncontrollable dynamic behavior, hence we restrict $(n_{k+1},v_{k+1})$ to an ellipse around $(n_k,v_k)$ with axis-parallel semi-axes $80.3$ and $2.17$.
We choose 256 initial measurements in the known safe domain, while keeping distances between points short enough that the initial measurements are all safe.
After an unsafe measurements with rail pressure above 18, we will return to the safe domain to stabilize the dynamic behavior.

\subsection{Formulas for the experiments}\label{appendix_railpressure_formula}

\begin{figure}[tb]
\centering
\includegraphics[width=48.9mm,height=40.3mm]{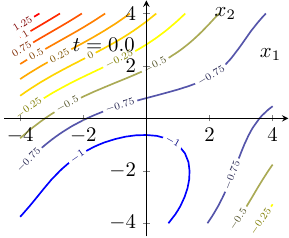}
\includegraphics[width=48.9mm,height=40.3mm]{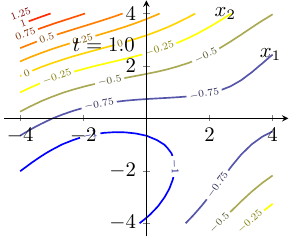}
\\[2em]
\includegraphics[width=48.9mm,height=40.3mm]{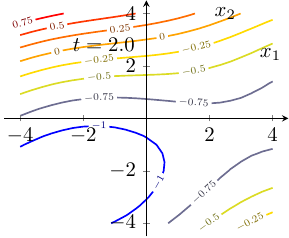}
\includegraphics[width=48.9mm,height=40.3mm]{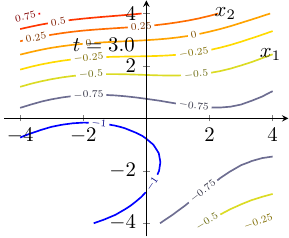}
\\[2em]
\includegraphics[width=48.9mm,height=40.3mm]{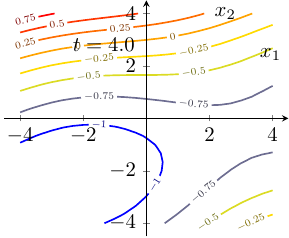}
\includegraphics[width=48.9mm,height=40.3mm]{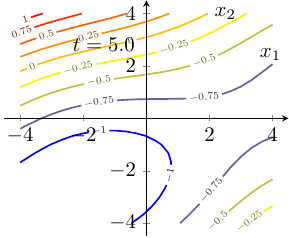}
\caption{Plots of the function from \eqref{eqn_ssnl} for the seasonal change experiment for various values of $t$.}
\label{figure_plot_seasonal}
\end{figure}
\begin{figure}[tb]
\centering
\includegraphics[width=48.9mm,height=40.3mm]{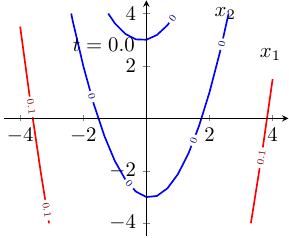}
\includegraphics[width=48.9mm,height=40.3mm]{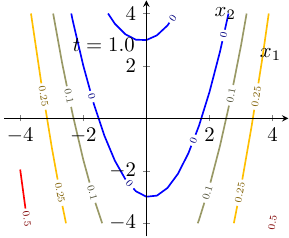}
\\[2em]
\includegraphics[width=48.9mm,height=40.3mm]{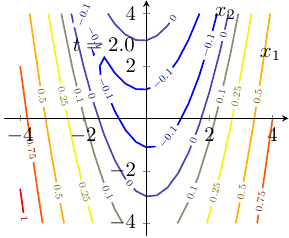}
\includegraphics[width=48.9mm,height=40.3mm]{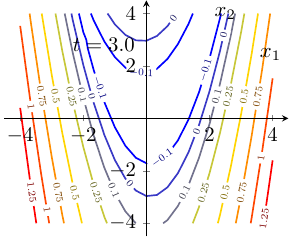}
\\[2em]
\includegraphics[width=48.9mm,height=40.3mm]{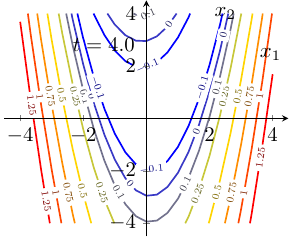}
\includegraphics[width=48.9mm,height=40.3mm]{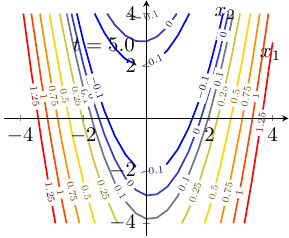}
\caption{Plots of the function from \eqref{eqn_drft} for the drift experiment for various values of $t$.}
\label{figure_plot_drif1}
\end{figure}

\begin{figure}
\centering
\includegraphics[width=48.9mm,height=40.3mm]{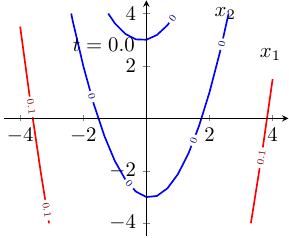}
\includegraphics[width=48.9mm,height=40.3mm]{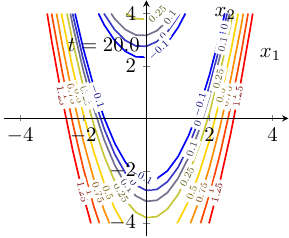}
\\[2em]
\includegraphics[width=48.9mm,height=40.3mm]{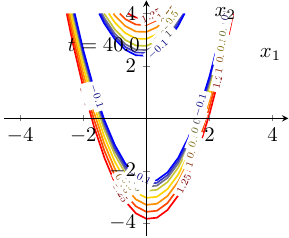}
\includegraphics[width=48.9mm,height=40.3mm]{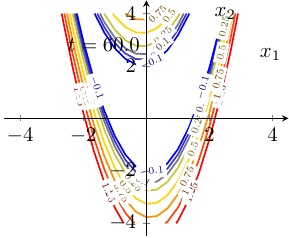}
\\[2em]
\includegraphics[width=48.9mm,height=40.3mm]{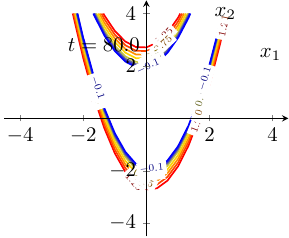}
\includegraphics[width=48.9mm,height=40.3mm]{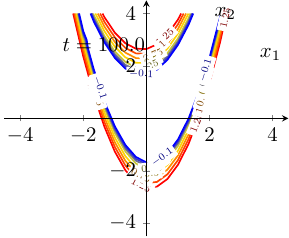}
\caption{Plots of the function from \eqref{eqn_drft} for the drift experiment for various values of $t$.}
\label{figure_plot_drif2}
\end{figure}

The formula of the seasonal change experiment in Subsection~\ref{subsection_seasonal} uses the classical McCormick function \citep{mccormick1976computability}
\begin{align*}
    &\operatorname{McCormick}: (x_1,x_2)\mapsto\\
    &\quad\sin(x_1+x_2)+(x_1-x_2)^2-1.5x_1+2.5x_2+1,
\end{align*}
in the formula
\begin{align}\label{eqn_ssnl}
\operatorname{ssnl}&:(t,x_1,x_2)\mapsto -1+\frac{1}{10}\operatorname{McCormick}\Bigl(\nonumber\\
&\frac{1}{2}\cos\left(\frac{1}{2}\sin\left(\frac{t}{2}\right)\right)x_1-\frac{1}{2}\sin\left(\frac{1}{2}\sin\left(\frac{t}{2}\right)\right)x_2,\\
&\frac{1}{2}\sin\left(\frac{1}{2}\sin\left(\frac{t}{2}\right)\right)x_1+\frac{1}{2}\cos\left(\frac{1}{2}\sin\left(\frac{t}{2}\right)\right)x_2\Bigl).\nonumber
\end{align}
See Figure~\ref{figure_plot_seasonal} for a visual representation of this function.
We consider this function in the spatial domain $[-4,4]^2$, where the area $[-0.5,0.5]\times [-1,1]$ is deemed save initially.
Measurements at $(t,x_1,x_2)$ are deemed safe if $\operatorname{ssnl}(t,x_1,x_2)<0$.

The formula of the drift experiment in Subsection~\ref{subsection_drift} is given by
\begin{equation}
\begin{aligned}
\operatorname{drft}:(t,x_1,x_2)\mapsto &\frac{1}{1000}\left(\left(2+\sin\left(\frac{t}{2}\right)\right)t+1\right)\\
&\cdot\left(\operatorname{Rosenbrock}(x_1,x_2)-25+\frac{t}{10}\right)
\label{eqn_drft}
\end{aligned}
\end{equation}
for the classical Rosenbrock function \citep{rosenbrock1960automatic}
\begin{align*}
    \operatorname{Rosenbrock}: (x_1,x_2)\mapsto (8\cdot|x_1^2 - x_2| + (1 - x_1)^2).
\end{align*}
This function is visualized in Figure~\ref{figure_plot_drif1} and Figure~\ref{figure_plot_drif2}.
We consider this function in the spatial domain $[-4,4]^2$, where the area $[-0.5,0.5]\times [-1,1]$ is deemed save initially.
Measurements at $(t,x_1,x_2)$ are deemed safe if $\operatorname{drft}(t,x_1,x_2)<0$.
Note that the safe area is decreasing as $t$ increases.

\end{document}